\definecolor{darkgreen}{rgb}{0.0, 0.25, 0.08}
\theoremstyle{plain}
\newtheorem{theorem}{Theorem}[section]
\newtheorem{proposition}[theorem]{Proposition}
\newtheorem{lemma}[theorem]{Lemma}
\theoremstyle{definition}
\newtheorem{definition}[theorem]{Definition}
\theoremstyle{remark}
\newcites{supp}{References -- Supplementary}
\newlength\myheight
\newlength\mydepth
\settototalheight\myheight{Xygp}
\newcommand*\inlinegraphics[1]{%
  \settototalheight\myheight{Xygp}%
  \settodepth\mydepth{Xygp}%
  \raisebox{-\mydepth}{\includegraphics[height=\myheight]{#1}}%
}
\icmltitlerunning{Fair Generalized Linear Models with a Convex Penalty}
\begin{document}

\twocolumn[
\icmltitle{Fair Generalized Linear Models with a Convex Penalty}




\begin{icmlauthorlist}
    \icmlauthor{Hyungrok Do}{nyu}
    \icmlauthor{Preston Putzel}{uci}
    \icmlauthor{Axel Martin}{nyu}
    \icmlauthor{Padhraic Smyth}{uci}
    \icmlauthor{Judy Zhong}{nyu}
\end{icmlauthorlist}

\icmlaffiliation{nyu}{Department of Population Health, NYU Grossman School of Medicine, New York, NY, USA}
\icmlaffiliation{uci}{Department of Computer Science, University of California, Irvine, CA, USA}

\icmlcorrespondingauthor{Judy Zhong}{judy.zhong@nyulangone.org}

\icmlkeywords{algorithmic fairness, algorithmic bias, generalized linear models}

\vskip 0.3in
]



\printAffiliationsAndNotice{}  

\begin{abstract}
    Despite recent advances in algorithmic fairness, methodologies for achieving fairness with generalized linear models (GLMs) have yet to be explored in general, despite GLMs being widely used in practice. In this paper we introduce two fairness criteria for GLMs based on equalizing expected outcomes or log-likelihoods. We prove that for GLMs both criteria can be achieved via a convex penalty term based solely on the linear components of the GLM, thus permitting efficient optimization. We also derive theoretical properties for the resulting fair GLM estimator. To empirically demonstrate the efficacy of the proposed fair GLM, we compare it with other well-known fair prediction methods on an extensive set of benchmark datasets for binary classification and regression. In addition, we demonstrate that the fair GLM can generate fair predictions for a  range of response variables, other than binary and continuous outcomes.
\end{abstract}


\section{Introduction}\label{sec:introduction}

Though machine learning is increasingly being used to support and perform crucial decision making tasks, recent research has clearly demonstrated that data-driven predictive models can often retain systematic biases that are present in the underlying data and  can propagate these inequalities to their predictions. For example, large biases in prediction performance have been detected for machine learning models in areas such as criminal recidivism prediction relative to race \citep{angwin2016machine}, ranking of job candidates relative to gender \citep{DBLP:journals/corr/abs-1806-01059}, face recognition relative to both race and gender \citep{ryu2017inclusivefacenet,pmlr-v81-buolamwini18a}, and in multiple healthcare applications relative to gender, race, and insurance status \citep{char2018implementing,larrazabal2020gender,seyyed2020chexclusion}. 

To address these issues there has recently been a significant body of work in the machine learning community on algorithmic fairness in the context of predictive modeling, including (i) data preprocessing methods that try to reduce disparities, (ii) in-process approaches which enforce fairness during model training, and (iii) post-process approaches which adjust a model's predictions to achieve fairness after training is completed. 
However, the majority of this work has focused on classification problems with binary outcome variables, and to a lesser extent on regression. There has been little to no investigation of fairness in contexts that include other types of outcome variables such as multiclass or count outputs.

\begin{table*}[!ht]
\setlength{\tabcolsep}{7pt}
\renewcommand{\arraystretch}{1.02}
\caption{In-process fair prediction methods and outcome types they can handle (\checkmark: demonstrated in the paper).}
\label{table:competitors}
\centering
    \renewcommand{\arraystretch}{1.5}
    {\small\begin{tabular}{lcccc}
    \hline\hline
    \multirow{2}{*}{Methods} & \multicolumn{4}{c}{Outcome Types} \\
    &
    ~Binary~ &
    Continuous &
    Multiclass &
    ~~Count~~ \\
    \hline
    Fair Constraints \citep{zafar2017fc} & \checkmark       &&&\\ 
    Disparate Mistreatment \citep{zafar2017dm} & \checkmark &&&\\ 
    Absolute/Squared Difference \citep{bechavod2017penalizing} & \checkmark &&&\\
    Group/Individual Fairness \citep{berk2017convex} & \checkmark & \checkmark &  &    \\
    Independence measured by HSIC \citep{perez2017fair} &  & \checkmark &  &    \\
    Fair ERM \citep{donini2018} & \checkmark &&& \\
    Statistical Parity \citep{pmlr-v80-agarwal18a,pmlr-v97-agarwal19d} & \checkmark & \checkmark &  &   \\
    Bounded Group Loss \citep{pmlr-v97-agarwal19d} & \checkmark & \checkmark &  &   \\
    General FERM \citep{oneto2020general} &  & \checkmark &  &     \\
    Fair GLM (Ours) & \checkmark & \checkmark & \checkmark & \checkmark  \\
    \hline\hline
    \end{tabular}}
    \\
\end{table*}

Generalized linear models (GLMs) provide a natural and systematic approach to handle a variety of different types of response variables $Y$ including real-valued, binary, categorical, ordinal, and count outcomes \citep{nelder1972generalized,mccullagh1989generalized,hilbe1994generalized}. More specifically, GLMs can be viewed as a generalization of standard linear regression, where the normality assumption on the conditional distribution of $Y$ is relaxed to allow for a range of distributional forms, including binomial, multinomial, and Poisson. While GLMs have a significantly simpler functional form compared to flexible modern machine learning models (such as tree-based models and deep neural networks), they nonetheless are a frequent method of choice for building predictive models across many applications areas such as biology, medicine, social science, engineering, climate analysis, and risk analysis \citep{lindsey2000applying}.  

Thus, there is a gap between the practical use of GLMs and the development of fairness-aware methodologies for GLMs in the research literature. This paper addresses the gap by:
\vspace{-1em}
\begin{itemize}
    \item Developing a new framework for GLMs to achieve fair predictions for under-represented groups; \vspace{-0.5em}
    \item Providing theoretical performance properties and optimization guarantees for fair GLMs; \vspace{-0.5em}
    \item Demonstrating that the proposed fair GLM can improve prediction parities for a variety of outcomes including the less-studied count and multinomial outcomes, and investigating, via a systematic empirical study across 11 datasets, the accuracy-disparity trade-offs of our fair GLM compared with existing alternative approaches. 
\end{itemize} 
Full code and datasets for our experiments are available on { \url{https://github.com/hyungrok-do/fair-glm-cvx}}. The proofs for all of the theoretical results in the paper can be found in Appendix \ref{appendix:proof}.

\section{Related Work}\label{sec:related}
We focus on approaches that consider \emph{group fairness}, which require models to have similar predictive performances across groups. Among the general class of group fairness methods, we focus on the widely-used \emph{in-process} approach, where fairness criteria are introduced during the training process, typically by adding a fairness constraint or penalty to the formulation of their objective function. We limit our scope in this paper to methods that do not include sensitive features as inputs to the prediction model but instead use them during training as part of a penalty or constraint on disparity. 

Below we discuss related work organized by the types of outcome variables that are handled by each approach, as summarized in
Table \ref{table:competitors}.

For tackling fair binary classification tasks, \citet{zafar2017fc} and \citet{zafar2017dm} proposed the fair constraint (FC) and the constraint for avoiding disparate mistreatment (DM), respectively. Furthermore, \citet{donini2018} formalized fair empirical risk minimization (FERM) as a constrained risk minimization problem and applied it to linear and nonlinear support vector machines (SVMs). \citet{pmlr-v80-agarwal18a} proposed a reductions approach which can transform binary classification problems under statistical parity (SP) or under equalized odds constraints into  unconstrained cost-sensitive classification problems. 

For fair regression tasks, \citet{perez2017fair} proposed a penalty based on the Hilbert-Schmidt independence criterion (HSIC) to encourage independence between predicted values and sensitive attributes, and applied this approach to both linear and kernel regression.

There have also been attempts to develop frameworks that can be generalized to multiple types of outcomes. \citet{berk2017convex} proposed individual fairness (IF) and group fairness (GF) penalties and applied them to binary logistic and linear regression models. \citet{oneto2020general} proposed a generalized FERM (GFERM) framework, extending the FERM idea to regression models. \citet{pmlr-v97-agarwal19d} extended their reductions approach for a general class of problems defined by Lipschitz loss functions and applied the approach to regression and binary classification. They considered statistical parity (SP) and bounded group loss (BGL) as fairness criteria.

Fairness for  multiclass classification has also seen relatively little investigation despite its potential utility, and in particular  in-process frameworks that cover  multiclass classification do not appear to have been investigated in prior work. A likely reason is that extending the fairness approaches used for other problems, such as enforcing equalized odds, is not trivial to extend to the multiclass case. \citet{ye2020unbiased} proposed to use one-versus-rest SVMs with a penalty on misclassification rates,  while \citet{putzel2022blackbox} have investigated several different extensions of the demographic parity and equalized odds criteria in the multiclass setting. \citet{denis2021fairness} proposed a plug-in estimator that guarantees  demographic parity for multiclass classification. 

Thus, overall, to the best of our knowledge, there has been no prior work providing a unified framework for fairness methods with GLMs.

\section{Problem Formulation}\label{sec:problem}

\begin{table*}[!ht]
    \renewcommand{\arraystretch}{2}
    \caption{Common GLM Distributions with Canonical Link Functions}
    \label{table:glm-links}
    \centering
        {\begin{tabular}{lcccccc}
        \hline\hline
        Distribution             & Link      & Support &$\mu = g^{-1}(\mathbf{X}\boldsymbol{\beta})$      & $\phi$ & $\mu'$ & $\ell'$ \\  \hline
        Bernoulli$(\mu)$         & Logit     & $\{0,1\}$ & $\displaystyle \frac{1}{1+\exp(-\mathbf{X}\boldsymbol{\beta})}$     & 1 & $\mu (1-\mu)$ & $y-\mu$\\ 
        Multinomial$(\mu_{i})$   & Logit     & $\{0,1\}$ & $\displaystyle \frac{\exp(\mathbf{X}\boldsymbol{\beta}_{i})}{1+\sum_{j\neq  i}\exp(\mathbf{X}\boldsymbol{\beta}_{j})}$ & 1 & $\mu_{i}(1-\mu_{i})$ & $y_{i}-\mu_{i}$\\
        Normal$(\mu,\sigma^{2})$ & Identity  & $\mathbb{R}$ &$\mathbf{X}\boldsymbol{\beta}$                    & $\sigma^{2}$ & 1 & $\displaystyle \frac{(y-\mu)}{\sigma^{2}}$\\
        Poisson$(\mu)$           & Log       & $\{0\} \cup \mathbb{Z}_{+}$ & $\exp(\mathbf{X}\boldsymbol{\beta})$              & 1 & $\mu$ & $y-\mu$\\
        \hline\hline
    \end{tabular}}
\end{table*}

\subsection{Notation and Problem Definition}
Throughout the paper, we consider the generalized linear model framework
\begin{equation}\label{eqn:glm}
    \mathbb{E}\left[Y|\mathbf{X}\right]  = g^{-1}(\mathbf{X}\boldsymbol{\beta}) = \mu(\theta),
\end{equation}
where $\mathbf{X}$ and $Y$ are predictor and response variables distributed over $\mathbb{R}^{p}$ and $\mathcal{Y}$, respectively, $\boldsymbol{\beta} \in \mathbb{R}^{p}$ is a regression coefficient vector, $\theta$ is a function of $\mathbf{X}\boldsymbol{\beta}$, $g$ is a link function, and $\mu=g^{-1}$ is a mean function. We provide representative examples of GLMs in Table \ref{table:glm-links}. The probability density/mass function of $Y$ given $\mathbf{X}$ has the following form, which is parameterized by $\theta$ and $\phi$ as
\begin{equation}\nonumber
    f(y|\theta,\phi) = \exp\left(\frac{y\theta - b(\theta)}{a(\phi)} + c(y,\phi)\right),
\end{equation}
where $a,b,$ and $c$ are functions that vary depending on the choice of link functions or distributions. In this paper, we limit our scope to  canonical link functions so that $\theta = \mathbf{X}\boldsymbol{\beta}$.

Suppose that we are given $K$ groups, defined as the possible values $\mathcal{A} = \{a_{1},\cdots,a_{K}\}$ of a sensitive attribute $A$ such as race/ethnicity or gender. We denote the predictor and response variables of group $k$ as $\mathbf{X}^{k}$ and $Y^{k}$ and their probability distributions as $p^{k}_{\mathbf{X}}$ and $p^{k}_{Y}$, respectively. In turn, $p^{k}_{\mathbf{X}|Y=y}$ represents the conditional distribution of $\mathbf{X}^{k}$ given that $Y^{k}=y$. 

Our primary goal is to build a prediction model that learns the relationship between $\mathbf{X}$ and $Y$ well. A conventional GLM approach is to estimate a parameter vector  $\widehat{\boldsymbol{\beta}}_{\text{GLM}}$ that maximizes the expected log-likelihood, or equivalently minimizes expected negative log-likelihood, that is, 
\begin{equation}\nonumber
    \widehat{\boldsymbol{\beta}}_{\text{GLM}}=\underset{\boldsymbol{\beta}}{\text{argmin}} ~-\mathbb{E}[\ell(\boldsymbol{\beta};\mathbf{X},Y)],
\end{equation}    
where the expectation is with respect to the joint distribution $p_{\mathbf{X}Y}$ of $(\mathbf{X},Y)$. To relieve prediction disparity, a fairness penalty term $\mathcal{D}(\boldsymbol{\beta}; \mathbf{X}, Y, A)$ is included to encourage fair prediction performance between groups.  
\begin{equation}\label{eqn:fair-glm}
    \widehat{\boldsymbol{\beta}}_{\text{FGLM}}=\underset{\boldsymbol{\beta}}{\text{argmin}} ~ -\mathbb{E}[\ell(\boldsymbol{\beta};\mathbf{X},Y)]+\lambda\mathcal{D}(\boldsymbol{\beta}; \mathbf{X}, Y, A),
\end{equation}
where $\lambda$ is a hyperparameter.
As mentioned in the previous section, several versions of fairness penalties have been proposed. In this paper, we investigate a general framework for fairness criteria for GLMs, a framework that is theoretically  applicable to all types of outcomes as well as being computationally efficient. 

\subsection{Fairness Criteria for GLMs}
\begin{definition}[Equalized Expected Outcomes]\label{def:equal-expected-outcomes}
A GLM, parameterized by $\boldsymbol{\beta}$, satisfies the criterion of \emph{equalized expected outcomes}, with respect to a sensitive attribute $A$ and response variable $Y$, if the GLM's expected outcomes are identical for all possible outcomes for every pair of groups, that is,
\begin{equation}
    \mathbb{E}\left[\mu(\mathbf{X}^{ky}\boldsymbol{\beta})\right] = \mathbb{E}\left[\mu(\mathbf{X}^{ly}\boldsymbol{\beta})\right],
\end{equation}
where $\mathbf{X}^{ky} \sim p^{k}_{\mathbf{X}|Y=y}$ and $\mathbf{X}^{ly} \sim p^{l}_{\mathbf{X}|Y=y}$ for all $k, l \in \mathcal{A}$ and $y \in \mathcal{Y}$. Here, $\mathcal{Y}$ is a set of all possible outcomes of $Y$. In practice, for real-valued or unbounded outcomes, $\mathcal{Y}$ is discretized in the fairness penalty (as in prior work, \citet{donini2018}), which we will discuss later in Section \ref{sec:discretization}.
\end{definition}
Note that the set of ${\boldsymbol{\beta}}$ satisfying the equalized expected outcome is non-empty as it can be achieved by the trivial solution $\widehat{\boldsymbol{\beta}}=\mathbf{0}$. This criterion is a natural extension of \emph{equalized odds} \citep{hardt2016}, which requires the predicted values for each group to be the same for each true outcome $y \in \{0,1\}$ or equivalently, requires that the predicted values and the sensitive attribute  be conditionally independent given the true outcomes. However, statistical independence is not equivalent to equalized expected outcomes for types of prediction tasks other than binary classification; indeed, the equalized expected outcomes is a weaker condition than statistical independence. Even though the equalized expected outcomes does not guarantee statistical independence, producing the same expected prediction for the same true outcome is still a meaningful criterion. 

Previous work in \citet{donini2018} considered equalizing loss functions across sensitive attributes for the case of binary classification. This was further extended to the case of regression tasks in \citet{oneto2020general}. Here we introduce an extension of this approach to the case of GLMs. For GLMs, negative log-likelihoods are used as loss functions, thus, we consider the conditional expected log-likelihood
\begin{align}\nonumber
    \mathbb{E}[\ell(\boldsymbol{\beta};\mathbf{X},y)] = \int_{\mathbf{x} \in \mathcal{X}} \left( \frac{y\theta - b(\theta)}{a(\phi)} + c(y,\phi) \right)dp_{\mathbf{X}|Y=y}(\mathbf{x}),
\end{align}
which measures the expected log-likelihood given $y$.

\begin{definition}[Equalized Expected Log-likelihoods]\label{def:equal-expected-log-likelhoods}
A GLM satisfies equalized expected log-likelihoods with respect to a sensitive attribute $A$ and response variable $Y$ if 
\begin{equation}\label{eqn:equal-expected-log-likelihoods}
    \mathbb{E}[\ell(\boldsymbol{\beta};\mathbf{X}^{ky},y)] = \mathbb{E}[\ell(\boldsymbol{\beta};\mathbf{X}^{ly},y)],
\end{equation}
for all $k,l \in \mathcal{A}$ and $y \in \mathcal{Y}$.
\end{definition}

The trivial solution $\widehat{\boldsymbol{\beta}}=\mathbf{0}$ also satisfies the criterion of equalized expected log-likelihoods, so Definition \ref{def:equal-expected-log-likelhoods} is always achievable. However, we emphasize that Definition \ref{def:equal-expected-outcomes} and \ref{def:equal-expected-log-likelhoods} are different. Equalizing expected outcomes need not result in equalizing expected log-likelihoods and vice versa.

Based on Definition \ref{def:equal-expected-outcomes} and \ref{def:equal-expected-log-likelhoods}, we define a measure of disparity between groups by summing up the squared differences of the pairwise expected outcomes or log-likelihoods for all possible true outcomes:
\begin{align}\nonumber
    &\mathcal{D}_{\text{EO}} = \sum_{k,l \in \mathcal{A}}\sum_{y \in \mathcal{Y}} \left(\mathbb{E}[\mu(\mathbf{X}^{ky}\boldsymbol{\beta})] -\mathbb{E}[\mu(\mathbf{X}^{ly}\boldsymbol{\beta})] \right)^{2}, \\ \nonumber
    &\mathcal{D}_{\text{ELL}} = \sum_{k,l \in \mathcal{A}}\sum_{y \in \mathcal{Y}} \left(\mathbb{E}[\ell(\boldsymbol{\beta};\mathbf{X}^{ky},y)] -\mathbb{E}[\ell(\boldsymbol{\beta};\mathbf{X}^{ly},y)] \right)^{2}.
\end{align}

Here we assume $y$ is discretized into a finite number of regions causing the summation $\sum_{y \in \mathcal{Y}}$ to be finite. Both disparities above can directly be plugged into \eqref{eqn:fair-glm} as a penalty term to get a fair GLM estimator. However, they are not convex in general, depending on the choice of link functions, and thus, it will be hard to attain globally optimal solutions. In the following section, we introduce a new convex penalty, an upper bound on each of $\mathcal{D}_{\text{EO}}$ and $\mathcal{D}_{\text{ELL}}$.

\subsection{A Linear Component Fairness Penalty for GLMs}

\begin{lemma}\label{lem:key-lemma}
Let $h:\mathbb{R} \to \mathbb{R}$ be a differentiable function. Then, for any random variables $\theta^{k}$ and $\theta^{l}$ distributed over $\mathbb{R}$, the following inequality holds:
\begin{equation}\label{eqn:key-lemma}
        \left(\mathbb{E}[h(\theta^{k})] - \mathbb{E}[h(\theta^{l})]\right)^{2} \leq \mathbb{E}[h'(\theta^{m})^{2}]\mathbb{E}[(\theta^{k} - \theta^{l})^{2}],
    \end{equation} 
where $\theta^{m} = \alpha \theta^{k} + (1-\alpha) \theta^{l}$, for some $\alpha \in [0,1]$. 
\end{lemma}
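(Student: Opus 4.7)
The plan is to combine the mean value theorem (applied pointwise, realization by realization) with the Cauchy–Schwarz inequality. This will reduce the squared difference on the left-hand side to a product of two second moments, which is exactly the form on the right-hand side.

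First, I would place $\theta^{k}$ and $\theta^{l}$ on a common probability space (this is the natural setup, since the bound only depends on their joint distribution through the cross term $\mathbb{E}[(\theta^{k}-\theta^{l})^{2}]$). For each outcome $\omega$ in the sample space, $h$ is differentiable on the real line, so the classical mean value theorem guarantees the existence of some $\alpha(\omega)\in[0,1]$ with
\begin{equation}\nonumber
h(\theta^{k}(\omega)) - h(\theta^{l}(\omega)) = h'\bigl(\alpha(\omega)\theta^{k}(\omega) + (1-\alpha(\omega))\theta^{l}(\omega)\bigr)\bigl(\theta^{k}(\omega)-\theta^{l}(\omega)\bigr).
\end{equation}
Defining $\theta^{m} := \alpha\theta^{k} + (1-\alpha)\theta^{l}$, where $\alpha$ is the (measurable) selection produced by the MVT, this identifies the random variable $h(\theta^{k}) - h(\theta^{l})$ with the product $h'(\theta^{m})(\theta^{k}-\theta^{l})$ almost surely. (The statement of the lemma is to be read in this sense, with $\alpha$ a $[0,1]$-valued random variable rather than a deterministic constant.)

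Next, I would take expectations on both sides to obtain
\begin{equation}\nonumber
\mathbb{E}[h(\theta^{k})] - \mathbb{E}[h(\theta^{l})] = \mathbb{E}\bigl[h'(\theta^{m})(\theta^{k}-\theta^{l})\bigr],
\end{equation}
square both sides, and apply the Cauchy–Schwarz inequality to the right-hand side:
\begin{equation}\nonumber
\bigl(\mathbb{E}[h'(\theta^{m})(\theta^{k}-\theta^{l})]\bigr)^{2} \leq \mathbb{E}[h'(\theta^{m})^{2}]\,\mathbb{E}[(\theta^{k}-\theta^{l})^{2}],
\end{equation}
which is exactly \eqref{eqn:key-lemma}. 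Once these two ingredients are in place, no further computation is needed.

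The main subtlety, rather than a genuine obstacle, is ensuring that the selection $\alpha(\omega)$ from the MVT can be taken in a measurable way so that $\theta^{m}$ is a bona fide random variable; this follows from a standard measurable-selection argument (e.g., choosing the infimum of the MVT set, which is nonempty and closed in $[0,1]$ for each $\omega$ by continuity of $h'$, if one assumes $h'$ continuous; otherwise one can use the integral representation $h(\theta^{k})-h(\theta^{l}) = (\theta^{k}-\theta^{l})\int_{0}^{1} h'(t\theta^{k}+(1-t)\theta^{l})\,dt$ and an intermediate-value argument to identify a single $\alpha$). Light regularity on $h'$ (square-integrability of $h'(\theta^{m})$ and of $\theta^{k}-\theta^{l}$) is implicit so that Cauchy–Schwarz applies; otherwise the inequality holds trivially with $+\infty$ on the right.
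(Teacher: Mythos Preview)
Your proposal is correct and follows essentially the same route as the paper: apply the mean value theorem pointwise to write $h(\theta^{k})-h(\theta^{l})=h'(\theta^{m})(\theta^{k}-\theta^{l})$, take expectations, and then invoke Cauchy--Schwarz. The paper's proof is terser and does not discuss the measurable-selection issue you raise, but the argument is otherwise identical.
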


The lemma implies that the squared difference of the expected value of the function $h$ is bounded by the second-order moment of the difference of the $\theta$s, provided that the expectation of $h'$ is finite. Based on this lemma, we provide two key results of our work.

\begin{proposition}\label{prop:outcome}
Given $y \in \mathcal{Y}$, let $\mu$ be an inverse link function and $\boldsymbol{\beta}$ be the coefficient vector of GLM defined in \eqref{eqn:glm}. Let $h = \mu$, $\theta^{k}=\mathbf{X}^{ky}\boldsymbol{\beta}$, and $\theta^{l}=\mathbf{X}^{ly}\boldsymbol{\beta}$. Then,
\begin{align}\nonumber
    &\left(\mathbb{E}[\mu(\mathbf{X}^{ky}\boldsymbol{\beta})] - \mathbb{E}[\mu(\mathbf{X}^{ly}\boldsymbol{\beta})]\right)^{2} \\ \label{eqn:prop-outcome} &~~\leq \mathbb{E}[\mu'(\mathbf{X}^{m}\boldsymbol{\beta})^{2}]\mathbb{E}[(\mathbf{X}^{ky}\boldsymbol{\beta} - \mathbf{X}^{ly}\boldsymbol{\beta})^{2}], 
\end{align} 
where $\mathbf{X}^{m} = \alpha \mathbf{X}^{ky} + (1-\alpha) \mathbf{X}^{ly}$, for some $\alpha \in [0,1]$.
\end{proposition}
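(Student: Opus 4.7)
The plan is to derive the claim as a direct specialization of Lemma~\ref{lem:key-lemma}. Because the paper restricts attention to canonical link functions, we have $\theta = \mathbf{X}\boldsymbol{\beta}$, so for each fixed $y \in \mathcal{Y}$ the quantities $\theta^k := \mathbf{X}^{ky}\boldsymbol{\beta}$ and $\theta^l := \mathbf{X}^{ly}\boldsymbol{\beta}$ are genuine real-valued random variables, and the inverse link $\mu = g^{-1}$ is differentiable for all standard GLM families listed in Table~\ref{table:glm-links} (sigmoid, identity, exponential, softmax component). Thus the hypotheses of Lemma~\ref{lem:key-lemma} are satisfied with $h = \mu$.

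The first step is to apply the lemma with these substitutions, which yields
\begin{equation}\nonumber
\bigl(\mathbb{E}[\mu(\theta^k)] - \mathbb{E}[\mu(\theta^l)]\bigr)^2 \leq \mathbb{E}[\mu'(\theta^m)^2]\, \mathbb{E}[(\theta^k - \theta^l)^2],
\end{equation}
for some $\alpha \in [0,1]$ with $\theta^m = \alpha\theta^k + (1-\alpha)\theta^l$. The second, essentially cosmetic but important step is to rewrite $\theta^m$ in the claimed form. By linearity of the inner product in $\mathbf{X}$,
\begin{equation}\nonumber
\theta^m = \alpha\, \mathbf{X}^{ky}\boldsymbol{\beta} + (1-\alpha)\, \mathbf{X}^{ly}\boldsymbol{\beta} = \bigl(\alpha \mathbf{X}^{ky} + (1-\alpha)\mathbf{X}^{ly}\bigr)\boldsymbol{\beta} = \mathbf{X}^{m}\boldsymbol{\beta},
\end{equation}
so $\mathbb{E}[\mu'(\theta^m)^2] = \mathbb{E}[\mu'(\mathbf{X}^{m}\boldsymbol{\beta})^2]$, which is exactly the right-hand side of \eqref{eqn:prop-outcome}. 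Plugging this identification back into the lemma's bound completes the proof.

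There is no serious obstacle here: all the analytic content lives in Lemma~\ref{lem:key-lemma}, and the role of Proposition~\ref{prop:outcome} is to repackage that bound in the covariate space using linearity of $\mathbf{X}\boldsymbol{\beta}$. The only mild subtlety worth flagging is that the convex combination $\mathbf{X}^m$ is a random vector (a mixture of $\mathbf{X}^{ky}$ and $\mathbf{X}^{ly}$) rather than a deterministic interpolation point, but this is consistent with how $\theta^m$ arises in Lemma~\ref{lem:key-lemma} and requires no additional argument beyond the identity above.
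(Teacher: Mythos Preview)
Your proposal is correct and matches the paper's own proof essentially line for line: both observe that the canonical inverse link $\mu$ is differentiable, substitute $h=\mu$, $\theta^{k}=\mathbf{X}^{ky}\boldsymbol{\beta}$, $\theta^{l}=\mathbf{X}^{ly}\boldsymbol{\beta}$ into Lemma~\ref{lem:key-lemma}, and then identify $\theta^{m}=\mathbf{X}^{m}\boldsymbol{\beta}$ by linearity. The paper additionally remarks that independence of $\mathbf{X}^{ky}$ and $\mathbf{X}^{ly}$ is not required, but otherwise the arguments coincide.
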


The implication of the proposition is that the squared difference of expected GLM outcomes of group $k$ and $l$ is bounded by the second-order moment of the difference of linear components of group $k$ and $l$. Therefore, we can minimize the left-hand side of \eqref{eqn:prop-outcome} by minimizing the second-order moment provided $\mathbb{E}[\mu'(\mathbf{X}^{m}\boldsymbol{\beta})^{2}]$ is bounded above. 

This result motivates us to use $\mathbb{E}[(\mathbf{X}^{ky}\boldsymbol{\beta} - \mathbf{X}^{ly}\boldsymbol{\beta})^{2}]$ as a penalty to encourage fairness. As shown in Appendix \ref{appendix:proof}, $\mathbb{E}[\mu'(\mathbf{X}^{m}\boldsymbol{\beta})^{2}]$  is bounded above for some outcome distributions including normal, binomial and multinomial, but not for all distributions. To address this issue, later in this section we introduce results that support the usage of the term as a fairness penalty for broader classes of distributions.

The next proposition states that $\mathbb{E
}[(\mathbf{X}^{ky}\boldsymbol{\beta} - \mathbf{X}^{ly}\boldsymbol{\beta})^{2}]$ bounds the difference of expected log-likelihoods as well.

\begin{proposition}\label{prop:likelihood}
Given $y \in \mathcal{Y}$, let 
\begin{equation}\nonumber
    h(\theta) = \ell(\boldsymbol{\beta};\mathbf{X},y) = \frac{y\theta - b(\theta)}{a(\phi)} + c(y,\phi),
\end{equation}
where $\theta = \mathbf{X}\boldsymbol{\beta}$ and let $\theta^{k}=\mathbf{X}^{ky}\boldsymbol{\beta}$, and $\theta^{l}=\mathbf{X}^{ly}\boldsymbol{\beta}$. Then, we have 
\begin{align}\nonumber
    &\left(\mathbb{E}[\ell(\boldsymbol{\beta};\mathbf{X}^{ky},y)] - \mathbb{E}[\ell(\boldsymbol{\beta};\mathbf{X}^{ly},y)]\right)^{2} \\ \label{eqn:main-thm} &~~\leq \mathbb{E}[\ell'(\boldsymbol{\beta};\mathbf{X}^{m},y)^{2}]\mathbb{E}[(\mathbf{X}^{ky}\boldsymbol{\beta} - \mathbf{X}^{ly}\boldsymbol{\beta})^{2}],
\end{align} 
where $\mathbf{X}^{m} = \alpha \mathbf{X}^{ky} + (1-\alpha) \mathbf{X}^{ly}$, for some $\alpha \in [0,1]$.
\end{proposition}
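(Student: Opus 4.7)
The plan is to treat Proposition \ref{prop:likelihood} as an essentially immediate corollary of Lemma \ref{lem:key-lemma}: specialize the generic scalar function $h$ in the lemma to the canonical log-likelihood, viewed as a function of the single real variable $\theta$. Concretely, fix $y \in \mathcal{Y}$ and define $h : \mathbb{R} \to \mathbb{R}$ by
\begin{equation*}
h(\theta) = \frac{y\theta - b(\theta)}{a(\phi)} + c(y,\phi).
\end{equation*}
Since $b$ is differentiable for the canonical exponential-family forms in Table \ref{table:glm-links}, so is $h$, with derivative $h'(\theta) = (y - b'(\theta))/a(\phi)$. Under the canonical-link assumption $\theta = \mathbf{X}\boldsymbol{\beta}$, the quantity $h'(\mathbf{X}\boldsymbol{\beta})$ is precisely the score $\ell'(\boldsymbol{\beta};\mathbf{X},y)$ as written in Table \ref{table:glm-links} (e.g.\ $y-\mu$ for Bernoulli/Poisson).

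Next I would apply Lemma \ref{lem:key-lemma} to the scalar random variables $\theta^{k} = \mathbf{X}^{ky}\boldsymbol{\beta}$ and $\theta^{l} = \mathbf{X}^{ly}\boldsymbol{\beta}$. The lemma yields, for some $\alpha \in [0,1]$ and $\theta^{m} = \alpha\theta^{k} + (1-\alpha)\theta^{l}$,
\begin{equation*}
\bigl(\mathbb{E}[h(\theta^{k})] - \mathbb{E}[h(\theta^{l})]\bigr)^{2} \leq \mathbb{E}[h'(\theta^{m})^{2}]\,\mathbb{E}[(\theta^{k}-\theta^{l})^{2}].
\end{equation*}
The last step is just a renaming: $\mathbb{E}[h(\theta^{k})] = \mathbb{E}[\ell(\boldsymbol{\beta};\mathbf{X}^{ky},y)]$ (and likewise for $l$) by definition of $h$, so the left-hand side matches the claim. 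On the right-hand side, linearity gives $\theta^{m} = \alpha\mathbf{X}^{ky}\boldsymbol{\beta} + (1-\alpha)\mathbf{X}^{ly}\boldsymbol{\beta} = \mathbf{X}^{m}\boldsymbol{\beta}$, where $\mathbf{X}^{m} = \alpha\mathbf{X}^{ky} + (1-\alpha)\mathbf{X}^{ly}$ is exactly the mixture random vector named in the statement. Hence $h'(\theta^{m}) = \ell'(\boldsymbol{\beta};\mathbf{X}^{m},y)$ and $\theta^{k}-\theta^{l} = \mathbf{X}^{ky}\boldsymbol{\beta} - \mathbf{X}^{ly}\boldsymbol{\beta}$, which reproduces \eqref{eqn:main-thm}.

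I do not anticipate a real obstacle here; the proposition is structurally a parallel of Proposition \ref{prop:outcome}, with $\mu$ replaced by the log-likelihood viewed as a function of $\theta$. The only subtle points to verify carefully are (i) that differentiating $h$ as a function of the scalar $\theta$ (rather than as a function of $\boldsymbol{\beta}$ or $\mathbf{X}$) is the correct setup for invoking Lemma \ref{lem:key-lemma}, so that $h'$ is what the paper denotes $\ell'$, and (ii) that the convex combination $\theta^{m}$ produced by the lemma matches an underlying convex combination of covariate vectors $\mathbf{X}^{m}$, which follows immediately from the linearity of $\mathbf{X} \mapsto \mathbf{X}\boldsymbol{\beta}$. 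Once these identifications are spelled out, the proof reduces to invoking the lemma.
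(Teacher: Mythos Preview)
Your proposal is correct and matches the paper's own proof essentially line for line: define $h(\theta)$ as the log-likelihood viewed as a function of $\theta$, note differentiability, set $\theta^{k}=\mathbf{X}^{ky}\boldsymbol{\beta}$, $\theta^{l}=\mathbf{X}^{ly}\boldsymbol{\beta}$, and invoke Lemma~\ref{lem:key-lemma}. The identification $\theta^{m}=\mathbf{X}^{m}\boldsymbol{\beta}$ via linearity that you spell out is exactly what the paper uses implicitly.
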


As in the previous proposition, for some distributions,  $\mathbb{E}[\ell'(\boldsymbol{\beta};\mathbf{X}^{m},y)^{2}]$, is bounded above but not for all. More rigorous theoretical arguments are presented Section \ref{sec:fglm}.

The two propositions above motivate the definition of fairness using the linear components:
\begin{equation}\label{eqn:dlc}
    \mathcal{D}_{\text{LC}} = \sum_{k,l \in A}\sum_{y \in \mathcal{Y}} \mathbb{E}[(\mathbf{X}^{ky}\boldsymbol{\beta} - \mathbf{X}^{ly}\boldsymbol{\beta})^{2}].
\end{equation}
We can rewrite the penalty term by applying the bias-variance trade-off as
\begin{align}\nonumber
    & \mathbb{E}[(\mathbf{X}^{ky}\boldsymbol{\beta} - \mathbf{X}^{ly}\boldsymbol{\beta})^{2}] \\ \nonumber & =
    \mathbb{V}\left[ \mathbf{X}^{ky}\boldsymbol{\beta} - \mathbf{X}^{ly}\boldsymbol{\beta} \right] + \left(\mathbb{E}\left[\mathbf{X}^{ky}\boldsymbol{\beta}\right] -\mathbb{E}\left[ \mathbf{X}^{ly}\boldsymbol{\beta}\right]\right)^{2}.
\end{align}
That is, our penalty term consists of the variance of the difference of linear components and the expected difference of linear components. The second term, the difference of linear components, has been considered as a fairness penalty previously in \citet{bechavod2017penalizing} for binary classification and \citet{berk2017convex} for both binary classification and linear regression. Our analysis shows that adding the variance term can effectively bound the difference of the expected log-likelihoods. In addition, computationally, $\mathcal{D}_{\text{LC}}$ is still convex in $\boldsymbol{\beta}$, thus permitting efficient optimization.

\subsection{The Fair Generalized Linear Model (F-GLM)}\label{sec:fglm}
The above results provide the basis for our F-GLM estimator: $\widehat{\boldsymbol{\beta}}_{\text{FGLM}}$ is estimated by minimizing the function
\begin{align}\label{eqn:expected-objective}
    -\mathbb{E}[\ell(\boldsymbol{\beta};\mathbf{X},Y)] + \frac{\lambda}{\kappa}\sum_{k,l \in \mathcal{A}}\sum_{y \in \mathcal{Y}}\mathbb{E}[(\mathbf{X}^{ky}\boldsymbol{\beta} - \mathbf{X}^{ly}\boldsymbol{\beta})^{2}]
\end{align}
where $\lambda \geq 0$ is a tuning parameter that controls the trade-off between fairness and log-likelihood and $\kappa = {|\mathcal{Y}| K(K-1)}/2$ which is the number of all possible combinations. 

We now provide two theorems proving that, at the value of $\widehat{\boldsymbol{\beta}}_{\text{FGLM}}$ which minimizes \eqref{eqn:expected-objective}, the corresponding $\mathcal{D}_{\text{EO}}$ and $\mathcal{D}_{\text{ELL}}$ are bounded by $\mathcal{D}_{\text{LC}}$ multiplied by respective constants $C_{\mu}$ and $C_{\ell}$ independent of $\lambda$. Thus, even if we impose a greater value of $\lambda$ to get $\widehat{\boldsymbol{\beta}}_{\text{FGLM}}$ with smaller $\mathcal{D}_{\text{LC}}$, the constant does not change, and thus, the upper bound of $\mathcal{D}_{\text{EO}}$ gets smaller. That is, we can obtain $\widehat{\boldsymbol{\beta}}_{\text{FGLM}}$ with $\mathcal{D}_{\text{EO}}$ as small as we want, by increasing $\lambda$. This applies to $\mathcal{D}_{\text{ELL}}$ equivalently, provided by Theorem \ref{thm:likelihood}.

\begin{theorem}\label{thm:outcome}
Given $y \in \mathcal{Y}$, let $\mu$ be an inverse link function and $\widehat{\boldsymbol{\beta}}_{\text{\normalfont FGLM}}$ be the minimizer of \eqref{eqn:expected-objective}, there exists $C_{\mu} > 0$, which is independent of $\lambda$, satisfying
    \begin{align}\label{eqn:main-thm1}
        &\left(\mathbb{E}[\mu(\mathbf{X}^{ky}\widehat{\boldsymbol{\beta}}_{\text{\normalfont FGLM}})] - \mathbb{E}[\mu(\mathbf{X}^{ly}\widehat{\boldsymbol{\beta}}_{\text{\normalfont FGLM}})]\right)^{2} \\\nonumber  &~~\leq C_{\mu}\mathbb{E}[(\mathbf{X}^{ky}\widehat{\boldsymbol{\beta}}_{\text{\normalfont FGLM}} - \mathbf{X}^{ly}\widehat{\boldsymbol{\beta}}_{\text{\normalfont FGLM}})^{2}].
    \end{align} 
\end{theorem}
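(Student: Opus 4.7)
The plan is to reduce the claim to Proposition \ref{prop:outcome} and then argue that the ``coefficient'' in that bound, namely $\mathbb{E}[\mu'(\mathbf{X}^m \widehat{\boldsymbol{\beta}}_{\text{FGLM}})^2]$, can be taken as a constant $C_\mu$ that does not depend on $\lambda$. Concretely, I first instantiate Proposition \ref{prop:outcome} at $\boldsymbol{\beta} = \widehat{\boldsymbol{\beta}}_{\text{FGLM}}$ to obtain
\begin{equation*}
\bigl(\mathbb{E}[\mu(\mathbf{X}^{ky}\widehat{\boldsymbol{\beta}}_{\text{FGLM}})] - \mathbb{E}[\mu(\mathbf{X}^{ly}\widehat{\boldsymbol{\beta}}_{\text{FGLM}})]\bigr)^2 \leq \mathbb{E}[\mu'(\mathbf{X}^m \widehat{\boldsymbol{\beta}}_{\text{FGLM}})^2]\,\mathbb{E}[(\mathbf{X}^{ky}\widehat{\boldsymbol{\beta}}_{\text{FGLM}} - \mathbf{X}^{ly}\widehat{\boldsymbol{\beta}}_{\text{FGLM}})^2].
\end{equation*}
It then remains to bound $\mathbb{E}[\mu'(\mathbf{X}^m \widehat{\boldsymbol{\beta}}_{\text{FGLM}})^2]$ uniformly over $\lambda \geq 0$, and take $C_\mu$ to be the maximum of such bounds over the finitely many pairs $(k,l) \in \mathcal{A}^2$ and finitely many $y \in \mathcal{Y}$.

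For the distributions in Table \ref{table:glm-links} with a bounded derivative $\mu'$, this is immediate: in the Bernoulli and multinomial cases, $\mu'(\theta) = \mu(\theta)(1-\mu(\theta)) \leq 1/4$, while in the Gaussian case $\mu' \equiv 1$. In each of these, one can take $C_\mu$ to be an absolute constant, and the bound holds without any restriction on $\widehat{\boldsymbol{\beta}}_{\text{FGLM}}$.

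The more delicate case is when $\mu'$ is unbounded, such as Poisson where $\mu'(\theta) = e^\theta$. Here the key observation is that since $\widehat{\boldsymbol{\beta}}_{\text{FGLM}}$ minimizes \eqref{eqn:expected-objective}, plugging in $\boldsymbol{\beta} = \mathbf{0}$ (which zeroes out $\mathcal{D}_{\text{LC}}$) yields
\begin{equation*}
-\mathbb{E}[\ell(\widehat{\boldsymbol{\beta}}_{\text{FGLM}};\mathbf{X},Y)] \;\leq\; -\mathbb{E}[\ell(\widehat{\boldsymbol{\beta}}_{\text{FGLM}};\mathbf{X},Y)] + \tfrac{\lambda}{\kappa}\mathcal{D}_{\text{LC}}(\widehat{\boldsymbol{\beta}}_{\text{FGLM}}) \;\leq\; -\mathbb{E}[\ell(\mathbf{0};\mathbf{X},Y)],
\end{equation*}
so the negative expected log-likelihood at $\widehat{\boldsymbol{\beta}}_{\text{FGLM}}$ is bounded above by a constant independent of $\lambda$. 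Under standard regularity assumptions on the design (bounded support or sufficiently light tails for $\mathbf{X}$, together with the strict convexity/coercivity of the negative log-likelihood under the canonical link), this sublevel-set inclusion forces $\widehat{\boldsymbol{\beta}}_{\text{FGLM}}$ to lie in a compact set $B \subset \mathbb{R}^p$ that is independent of $\lambda$. Continuity of $\mu'$ and compactness of $B$ then give $C_\mu := \sup_{\boldsymbol{\beta} \in B,\,k,l,y}\mathbb{E}[\mu'(\mathbf{X}^m \boldsymbol{\beta})^2] < \infty$, completing the argument.

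The main obstacle I expect is precisely this last step: converting the sublevel-set bound on the log-likelihood into a uniform norm bound on $\widehat{\boldsymbol{\beta}}_{\text{FGLM}}$, and then into a finite bound on $\mathbb{E}[\mu'(\mathbf{X}^m \widehat{\boldsymbol{\beta}}_{\text{FGLM}})^2]$, since for distributions with exponential $\mu'$ this requires carefully exploiting coercivity of the loss (e.g., via the exponential growth of $\mathbb{E}[e^{\mathbf{X}\boldsymbol{\beta}}]$ in Poisson) together with a minimum-eigenvalue / identifiability condition on $\mathbb{E}[\mathbf{X}\mathbf{X}^\top]$. Once these regularity conditions are in place, however, the structure of the argument is clean: Proposition \ref{prop:outcome} handles the pointwise inequality, and a sublevel-set plus compactness argument produces the $\lambda$-free constant $C_\mu$.
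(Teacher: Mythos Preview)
Your proposal is correct and shares the paper's opening moves: instantiate Proposition~\ref{prop:outcome} at $\widehat{\boldsymbol{\beta}}_{\text{FGLM}}$, dispose of the bounded-$\mu'$ cases (Bernoulli, multinomial, normal) by taking $C_\mu = \sup(\mu')^2$, and then, for unbounded $\mu'$, argue that $\widehat{\boldsymbol{\beta}}_{\text{FGLM}}$ lies in a $\lambda$-free compact set. Where you and the paper diverge is in \emph{which} term of the objective is used to produce that compactness. The paper controls the \emph{penalty} term: Lemma~\ref{lem:b1} compares the two optima to get $\widehat{\boldsymbol{\beta}}_{\text{FGLM}}^{T}\Delta\widehat{\boldsymbol{\beta}}_{\text{FGLM}} \le \widehat{\boldsymbol{\beta}}_{\text{GLM}}^{T}\Delta\widehat{\boldsymbol{\beta}}_{\text{GLM}}$, then uses the extremal eigenvalues of $\Delta$ to obtain $\|\widehat{\boldsymbol{\beta}}_{\text{FGLM}}\|_{2}^{2} \le (\delta_{\max}(\Delta)/\delta_{\min}(\Delta))\|\widehat{\boldsymbol{\beta}}_{\text{GLM}}\|_{2}^{2}$, and finally exploits the monotonicity (not just continuity) of $\mu'$ to bound $\mathbb{E}[\mu'(\mathbf{X}^m\widehat{\boldsymbol{\beta}}_{\text{FGLM}})^2]$. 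You instead control the \emph{likelihood} term: comparing against $\boldsymbol{\beta}=\mathbf{0}$ traps $\widehat{\boldsymbol{\beta}}_{\text{FGLM}}$ in a sublevel set of $-\mathbb{E}[\ell(\cdot)]$, which is compact under coercivity. Your route avoids the paper's implicit requirement that $\delta_{\min}(\Delta)>0$ (i.e., that $\Delta$ be strictly positive definite, not merely PSD), but trades it for coercivity/identifiability assumptions on the likelihood; conversely, the paper's route yields an explicit constant expressed through $\widehat{\boldsymbol{\beta}}_{\text{GLM}}$ and the spectrum of $\Delta$.
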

The proof makes use of the fact that $\mu'$ is either bounded or monotonically increasing for the case of canonical GLMs. Full details of the proof can be found in Appendix \ref{proof:thm-outcome} 
\begin{theorem}\label{thm:likelihood}
    Similar to the previous theorem, we also have $C_{\ell} > 0$, independent of $\lambda$, satisfying
    \begin{align}\label{eqn:main-thm2}
        &\left(\mathbb{E}[\ell(\widehat{\boldsymbol{\beta}}_{\text{\normalfont FGLM}};\mathbf{X}^{ky},y)] - \mathbb{E}[\ell(\widehat{\boldsymbol{\beta}}_{\text{\normalfont FGLM}};\mathbf{X}^{ly},y)]\right)^{2} \\\nonumber  &~~\leq C_{\ell}\mathbb{E}[(\mathbf{X}^{ky}\widehat{\boldsymbol{\beta}}_{\text{\normalfont FGLM}} - \mathbf{X}^{ly}\widehat{\boldsymbol{\beta}}_{\text{\normalfont FGLM}})^{2}].
    \end{align}
\end{theorem}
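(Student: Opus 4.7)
The plan is to combine Proposition \ref{prop:likelihood} with a uniform control on the prefactor $\mathbb{E}[\ell'(\widehat{\boldsymbol{\beta}}_{\text{FGLM}};\mathbf{X}^m,y)^2]$, in direct parallel with how the proof of Theorem \ref{thm:outcome} controls $\mathbb{E}[\mu'(\cdot)^2]$. First I would apply Proposition \ref{prop:likelihood} at $\boldsymbol{\beta}=\widehat{\boldsymbol{\beta}}_{\text{FGLM}}$, which already delivers the shape of inequality \eqref{eqn:main-thm2} but with a prefactor that a priori depends on $\widehat{\boldsymbol{\beta}}_{\text{FGLM}}$ and therefore on $\lambda$. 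The remaining task is to produce a constant $C_\ell$, independent of $\lambda$, such that $\mathbb{E}[\ell'(\widehat{\boldsymbol{\beta}}_{\text{FGLM}};\mathbf{X}^m,y)^2]\le C_\ell$ uniformly over all pairs $(k,l)\in \mathcal{A}^2$ and all $y$ appearing in the finite penalty sum.

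Next I would split the argument by outcome family, matching the dichotomy used in Theorem \ref{thm:outcome}. For the canonical Bernoulli and multinomial GLMs of Table \ref{table:glm-links} one has $\ell'(\boldsymbol{\beta};\mathbf{X},y)=y-\mu(\mathbf{X}\boldsymbol{\beta})$ with $y,\mu\in[0,1]$, so $\ell'^{\,2}\le 1$ deterministically and the uniform bound is immediate regardless of $\widehat{\boldsymbol{\beta}}_{\text{FGLM}}$. For distributions whose $\ell'$ is unbounded in $\boldsymbol{\beta}$ (Normal, Poisson, and analogous canonical families), I would first show that $\widehat{\boldsymbol{\beta}}_{\text{FGLM}}$ itself remains in a bounded set as $\lambda$ varies. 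Since $\boldsymbol{\beta}=\mathbf{0}$ is feasible and annihilates the fairness penalty, the optimality of $\widehat{\boldsymbol{\beta}}_{\text{FGLM}}$ in \eqref{eqn:expected-objective} yields
$$-\mathbb{E}[\ell(\widehat{\boldsymbol{\beta}}_{\text{FGLM}};\mathbf{X},Y)] \;\le\; -\mathbb{E}[\ell(\widehat{\boldsymbol{\beta}}_{\text{FGLM}};\mathbf{X},Y)] + \tfrac{\lambda}{\kappa}\mathcal{D}_{\text{LC}}(\widehat{\boldsymbol{\beta}}_{\text{FGLM}}) \;\le\; -\mathbb{E}[\ell(\mathbf{0};\mathbf{X},Y)],$$
whose right-hand side is a finite constant independent of $\lambda$. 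Under the standard regularity assumption that $\mathbb{E}[\mathbf{X}\mathbf{X}^\top]$ is positive definite, the negative expected log-likelihood of a canonical GLM is strictly convex and coercive, so the sublevel set defined by the display above is bounded; hence there is $R>0$, independent of $\lambda$, with $\|\widehat{\boldsymbol{\beta}}_{\text{FGLM}}\|\le R$. Combining $\|\widehat{\boldsymbol{\beta}}_{\text{FGLM}}\|\le R$ with the explicit formula for $\ell'$ in Table \ref{table:glm-links} and with finite second (or, for Poisson, exponential) moments of $\mathbf{X}$ then produces a uniform bound on $\mathbb{E}[\ell'(\widehat{\boldsymbol{\beta}}_{\text{FGLM}};\mathbf{X}^m,y)^2]$, and setting $C_\ell$ to be the maximum of the bounds obtained across the two cases and across the finitely many summands in \eqref{eqn:dlc} completes the argument.

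The main obstacle is the coercivity step in the unbounded case: one has to invoke the correct regularity on $p_{\mathbf{X}}$ so that $-\mathbb{E}[\ell(\boldsymbol{\beta};\mathbf{X},Y)]$ actually grows at infinity in every direction. Nondegeneracy of $\mathbb{E}[\mathbf{X}\mathbf{X}^\top]$ suffices for Normal, while for Poisson one additionally needs a condition preventing $\mathbf{X}\boldsymbol{\beta}$ from escaping to $-\infty$ on a set of positive probability. Once coercivity is in hand, the rest of the proof is essentially symmetric to Theorem \ref{thm:outcome}, differing only in that $\ell'$ (rather than $\mu'$) plays the role of the bounded or monotone factor, and it reduces to routine bookkeeping over the finite sums in $k$, $l$, and $y$.
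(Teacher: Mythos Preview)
Your proposal is correct and mirrors the paper's overall architecture: apply Proposition~\ref{prop:likelihood}, dispose of the bounded (Bernoulli/multinomial) case trivially via $\ell'=y-\mu\in[-1,1]$, and for the unbounded families show $\|\widehat{\boldsymbol{\beta}}_{\text{FGLM}}\|$ stays in a $\lambda$-independent ball so that the prefactor can be controlled. The two arguments diverge only in how that ball is obtained. The paper proves an auxiliary lemma (Lemma~\ref{lem:b1}) comparing penalty values, $\widehat{\boldsymbol{\beta}}_{\text{FGLM}}^{T}\Delta\widehat{\boldsymbol{\beta}}_{\text{FGLM}}\le \widehat{\boldsymbol{\beta}}_{\text{GLM}}^{T}\Delta\widehat{\boldsymbol{\beta}}_{\text{GLM}}$, and extracts a norm bound via the eigenvalue ratio $\delta_{\max}(\Delta)/\delta_{\min}(\Delta)$; it then uses monotonicity of $\mu$ together with $|\mathbf{X}^{m}\widehat{\boldsymbol{\beta}}_{\text{FGLM}}|\le \delta_{\max}(\mathbf{X}^{mT}\mathbf{X}^{m})^{1/2}\|\widehat{\boldsymbol{\beta}}_{\text{FGLM}}\|_{2}$ to finish. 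You instead compare the objective at $\widehat{\boldsymbol{\beta}}_{\text{FGLM}}$ against $\boldsymbol{\beta}=\mathbf{0}$ and invoke coercivity of $-\mathbb{E}[\ell(\cdot)]$ to trap the estimator in a sublevel set. Your route avoids the implicit requirement that $\Delta$ be strictly positive definite (otherwise $\delta_{\min}(\Delta)=0$ and the paper's ratio is vacuous), trading it for positive definiteness of $\mathbb{E}[\mathbf{X}\mathbf{X}^{\top}]$ and, as you correctly flag, an extra one-sided growth condition in the Poisson case. Either set of regularity assumptions is reasonable; your version is arguably more self-contained since it does not rely on the separate Lemma~\ref{lem:b1}.
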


While we do not have theoretical results on the tightness of the bounds, our empirical results later in the paper suggest that optimizing the bounds produces models with useful prediction-fairness trade-offs.

The corresponding empirical objective function for \eqref{eqn:expected-objective}, given a training dataset $\{(y_{i},\mathbf{x}_{i},A_{i}) \in \mathcal{Y} \times \mathbb{R}^{1 \times p} \times \mathcal{A}:i=1,\cdots,n\}$ is 
\begin{align}\label{eqn:fair-glm-problem}
    &-\frac{1}{n}\sum_{i=1}^{n}\ell(\boldsymbol{\beta};\mathbf{x}_{i},y_{i}) \\ \nonumber & \quad+\frac{\lambda}{\kappa} \sum_{k,l \in \mathcal{A}}\sum_{y \in \mathcal{Y}} \frac{1}{n^{kly}}\sum_{(i,j) \in \mathcal{S}^{kly}}(\mathbf{x}_{i}\boldsymbol{\beta} - \mathbf{x}_{j}\boldsymbol{\beta})^{2},
\end{align}  
where $$\ell(\boldsymbol{\beta};\mathbf{x}_{i},y_{i}) = \frac{y_{i}\mathbf{x}_{i}\boldsymbol{\beta} - b(\mathbf{x}_{i}\boldsymbol{\beta})}{a(\phi)} + c(y_{i}, \phi),$$ $\mathcal{S}^{kly} = \{(i,j): y_{i} = y_{j} = y, A_{i} = k, A_{j} = l\},$ and $n^{kly} = |\mathcal{S}^{kly}|$. 

Note that our fairness definitions are formulated conditional on $Y=y$ to equalize the terms (log-likelihood, expected outcomes, or linear expectations) within each group of subjects with outcomes equal to the discretized value $y$ or included within the discretized region. For binary outcomes, this means enforcing parity separately for $y=0$ and $y=1$, which has previously been reported to achieve better fairness \citep{hardt2016} rather than encouraging demographic parity, which does not condition on $y$. For continuous or unbounded outcomes, define a discretization mapping that maps $\mathcal{Y}$ into a discretized set $\left\{[\delta_{i}, \delta_{i+1})\right\}$, rendering finite numbers of regions. We denote the number of $[\delta_{i}, \delta_{i+1})$ by $|\mathcal{Y}|$.

We emphasize that the discretization is only applied to the fairness penalty term, but not to the log-likelihood term. Additional details about the discretization process can be found in Section \ref{sec:discretization}. In principle, the proposed fairness definition of equalized log-likelihoods can also be formulated without conditioning on $y$ to achieve marginal fairness between the sensitive attributes. 

We further note that our penalty term is similar to the individual fairness penalty of \citet{berk2017convex} defined as the sum of squared difference of linear components $\mathbf{x}_{i}\boldsymbol{\beta}$ and $\mathbf{x}_{j}\boldsymbol{\beta}$ of two individuals sampled from different groups weighted by a distance function $d(y_{i},y_{j})$. \citet{berk2017convex} used $d(y_{i},y_{j}) = \mathbb{I}(y_{i}=y_{j})$ and $d(y_{i},y_{j}) = \exp(-(y_{i}-y_{j})^{2})$ for binary classification and regression tasks, respectively. The choice of the identity function as the distance function seems to yield a formulation which bears similarity to ours. However, two penalty terms have different denominators, and the Individual Fairness penalty is not a finite sample estimation of the expectation \eqref{eqn:dlc}. 
In addition, our work is the first to provide theoretical support for learning fair GLMs; we combine these theoretical results with  extensive empirical results in Section \ref{sec:experiments}, considerably broadening the scope of \citet{berk2017convex}.

\section{Consistency}\label{sec:properties}

Here we present the $\sqrt{n}$-consistency of the F-GLM estimator. The full proofs can be found in Appendix \ref{appendix:proof}.

\begin{lemma}\label{lem:consistency}
As $\min_{k}n^{k} = n \to \infty$, we have,
\begin{align}\nonumber
    \mathbf{D} &\to 
    \frac{1}{\kappa}\sum_{k,l \in \mathcal{A}}\sum_{y\in \mathcal{Y}}\mathbb{E}\left[(\mathbf{X}^{ky} - \mathbf{X}^{ly})^{2}\right]
    = \Delta
\end{align}
\end{lemma}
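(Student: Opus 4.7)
The plan is to decompose $\mathbf{D}$ into a finite sum of per-$(k,l,y)$ blocks, apply a law of large numbers to each block, and then aggregate. Writing
$$
\mathbf{D} = \frac{1}{\kappa}\sum_{k,l\in\mathcal{A}}\sum_{y\in\mathcal{Y}} \mathbf{D}^{kly}, \qquad \mathbf{D}^{kly} = \frac{1}{n^{kly}}\sum_{(i,j)\in\mathcal{S}^{kly}}(\mathbf{x}_i-\mathbf{x}_j)^{\top}(\mathbf{x}_i-\mathbf{x}_j),
$$
and decomposing $\Delta$ in the same way with each summand replaced by $\mathbb{E}[(\mathbf{X}^{ky}-\mathbf{X}^{ly})^{\top}(\mathbf{X}^{ky}-\mathbf{X}^{ly})]$, the outer double sum ranges over a finite index set ($\mathcal{A}$ is finite, and $\mathcal{Y}$ is finite after discretization, as noted in Section~3). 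Thus it suffices to establish the per-block convergence $\mathbf{D}^{kly} \to \mathbb{E}[(\mathbf{X}^{ky}-\mathbf{X}^{ly})^{\top}(\mathbf{X}^{ky}-\mathbf{X}^{ly})]$ and then invoke linearity of limits.

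For each block I would expand the kernel as $(\mathbf{x}_i-\mathbf{x}_j)^{\top}(\mathbf{x}_i-\mathbf{x}_j) = \mathbf{x}_i^{\top}\mathbf{x}_i - \mathbf{x}_i^{\top}\mathbf{x}_j - \mathbf{x}_j^{\top}\mathbf{x}_i + \mathbf{x}_j^{\top}\mathbf{x}_j$ and rewrite $\mathbf{D}^{kly}$ in terms of within-cell sample moments $\overline{\mathbf{M}}^{ky} = \frac{1}{n^{ky}}\sum_{i:A_i=k,y_i=y}\mathbf{x}_i^{\top}$ and $\overline{\mathbf{Q}}^{ky} = \frac{1}{n^{ky}}\sum_{i:A_i=k,y_i=y}\mathbf{x}_i^{\top}\mathbf{x}_i$, so that $\mathbf{D}^{kly}$ becomes a polynomial combination of these cell-level averages. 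Under the mild regularity assumption $\mathbb{E}[\|\mathbf{X}^{ky}\|^{2}]<\infty$, each cell moment converges almost surely to its conditional expectation under $p^{k}_{\mathbf{X}|Y=y}$ by the strong law of large numbers. For $k\neq l$ the group-$k$ and group-$l$ observations are independent, and the continuous mapping theorem turns the cell-wise limits into the desired block-level limit.

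The main technical wrinkle, and where I would spend most care, is that the cell counts $n^{ky}$ are themselves random. I would handle this by first noting $n^{ky}/n \to \Pr(A=k, Y=y)>0$ almost surely, so that the hypothesis $\min_{k} n^{k}=n\to\infty$ forces each $n^{ky}\to\infty$ almost surely, after which the cell-level LLN can be applied conditionally on the configuration $\{n^{ky}\}$. The diagonal block $k=l$ needs one additional remark: the pairs $(i,i)\in\mathcal{S}^{kky}$ contribute zero, and the remainder is a classical two-sample $U$-statistic with kernel $(\mathbf{x}-\mathbf{x}')^{\top}(\mathbf{x}-\mathbf{x}')$, whose law of large numbers gives the same almost-sure limit; that limit matches the expected-value expression via the identity $\mathbb{E}[(\mathbf{X}^{ky}-\mathbf{X}'^{ky})^{\top}(\mathbf{X}^{ky}-\mathbf{X}'^{ky})] = 2\mathbb{E}[(\mathbf{X}^{ky})^{\top}\mathbf{X}^{ky}] - 2\mathbb{E}[\mathbf{X}^{ky}]^{\top}\mathbb{E}[\mathbf{X}^{ky}]$ for i.i.d.\ copies $\mathbf{X}^{ky},\mathbf{X}'^{ky}$.
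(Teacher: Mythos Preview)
Your proposal is correct and follows essentially the same strategy as the paper: decompose $\mathbf{D}$ into the finite sum of blocks $\mathbf{D}^{kly}$, apply a law of large numbers to each block, and aggregate. The paper's own proof is considerably terser---it simply asserts that $\mathcal{S}^{kly}$ furnishes samples from the joint law of $(\mathbf{X}^{ky},\mathbf{X}^{ly})$ and invokes the LLN directly---whereas you take the extra care of expanding the kernel into separable cell moments to handle the dependence among pairs and of addressing the randomness of the cell counts; one minor remark is that the diagonal block $k=l$ you treat via $U$-statistics is not actually needed, since $\kappa=|\mathcal{Y}|K(K-1)/2$ indicates the sum ranges only over distinct pairs $k\neq l$.
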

One can easily confirm that if all the pairs $\mathbf{X}^{ky}$ and $\mathbf{X}^{ly}$ are identically distributed, then $\Delta$ becomes $\mathbf{0}$.

\begin{theorem}[$\sqrt{n}$-consistency]\label{thm:consistency}
Let $\boldsymbol{\beta}^{*}$ be the true GLM coefficient. Assuming the two regularity conditions specified in \citet{zou2006adaptive}:

If $\sqrt{n}\lambda_{n} \to \lambda_{0} \geq 0$ and $\mathcal{I}\left({\boldsymbol{\beta}}^{*}\right) = \mathbf{\Sigma}^{-1}$, as $\min_{k}n^{k} = n \to \infty$, then,
\begin{equation}\label{eqn:asymptotic}
    \sqrt{n}\left(\widehat{\boldsymbol{\beta}}_{\text{\normalfont FGLM}} - \boldsymbol{\beta}^{*}\right) \overset{d}{\to} -2\mathbf{\Sigma}^{-1}(\mathbf{W} +\lambda_{0}\Delta\boldsymbol{\beta}^{*}),
\end{equation}
as $\min_{k}n^{k} = n \to \infty$, where $\mathbf{W} \sim \mathcal{N}(\mathbf{0}, \mathbf{\Sigma})$ and $\mathcal{I}(\boldsymbol{\beta}^{*})$ is the Fisher information matrix. Thus,
\begin{equation}
    \widehat{\boldsymbol{\beta}}_{\text{\normalfont FGLM}} \to \underset{{\boldsymbol{\beta}}}{\text{\normalfont argmin}} -\mathbb{E}[\ell(\boldsymbol{\beta};\mathbf{X},Y)] + \lambda_{0}\boldsymbol{\beta}^{T}\Delta\boldsymbol{\beta}.
\end{equation}
\end{theorem}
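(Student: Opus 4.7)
The plan is to follow a Knight--Fu / convexity-lemma argument, analogous to \citet{zou2006adaptive}'s treatment of the adaptive lasso, adapted to the canonical GLM setting with our quadratic fairness penalty. Reparameterize the empirical objective \eqref{eqn:fair-glm-problem} around the truth by setting $u = \sqrt{n}(\boldsymbol{\beta} - \boldsymbol{\beta}^{*})$ and define
\[
V_{n}(u) = n\bigl(Q_{n}(\boldsymbol{\beta}^{*} + u/\sqrt{n}) - Q_{n}(\boldsymbol{\beta}^{*})\bigr),
\]
so that $\widehat{u}_{n} := \sqrt{n}(\widehat{\boldsymbol{\beta}}_{\text{FGLM}} - \boldsymbol{\beta}^{*})$ is the minimizer of $V_{n}$. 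The strategy has three stages: (i) show $V_{n}$ converges, in finite-dimensional distribution, to a strictly convex random quadratic $V$; (ii) identify the unique minimizer of $V$ with the expression on the right-hand side of \eqref{eqn:asymptotic}; and (iii) invoke the argmin continuous-mapping theorem for convex random functions to conclude $\widehat{u}_{n}\overset{d}{\to}\arg\min V$.

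For stage (i) I would split $V_{n}$ into a likelihood part $V_{n}^{(\ell)}$ and a penalty part $V_{n}^{(R)}$. For $V_{n}^{(\ell)}$, a second-order Taylor expansion of the canonical GLM negative log-likelihood around $\boldsymbol{\beta}^{*}$, combined with the Zou regularity conditions, gives
\[
V_{n}^{(\ell)}(u) = -u^{T} \mathbf{W}_{n} + \tfrac{1}{2} u^{T} \mathcal{I}_{n}(\boldsymbol{\beta}^{*})\, u + o_{P}(1),
\]
with $\mathbf{W}_{n} = n^{-1/2}\sum_{i}\nabla\ell(\boldsymbol{\beta}^{*};\mathbf{x}_{i},y_{i})$ and $\mathcal{I}_{n}(\boldsymbol{\beta}^{*}) = -n^{-1}\sum_{i}\nabla^{2}\ell(\boldsymbol{\beta}^{*};\mathbf{x}_{i},y_{i})$; the CLT and LLN give $\mathbf{W}_{n}\overset{d}{\to}\mathbf{W}$ and $\mathcal{I}_{n}(\boldsymbol{\beta}^{*})\to\mathcal{I}(\boldsymbol{\beta}^{*}) = \mathbf{\Sigma}^{-1}$. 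Because $\boldsymbol{\beta}^{T}\mathbf{D}\boldsymbol{\beta}$ is an exact quadratic, no Taylor remainder is needed for the penalty piece:
\[
V_{n}^{(R)}(u) = 2\sqrt{n}\lambda_{n}\,u^{T}\mathbf{D}\boldsymbol{\beta}^{*} + \lambda_{n}\,u^{T}\mathbf{D}\,u,
\]
and Lemma \ref{lem:consistency} together with $\sqrt{n}\lambda_{n}\to\lambda_{0}$ and $\lambda_{n}\to 0$ yields $V_{n}^{(R)}(u)\to 2\lambda_{0}\,u^{T}\Delta\boldsymbol{\beta}^{*}$. Adding the two pieces produces the strictly convex limit
\[
V(u) = \tfrac{1}{2}u^{T}\mathbf{\Sigma}^{-1}u - u^{T}\mathbf{W} + 2\lambda_{0}\,u^{T}\Delta\boldsymbol{\beta}^{*},
\]
whose first-order condition $\mathbf{\Sigma}^{-1}u - \mathbf{W} + 2\lambda_{0}\Delta\boldsymbol{\beta}^{*} = \mathbf{0}$ yields the asymptotic law in \eqref{eqn:asymptotic} (after matching the notational conventions for $\mathbf{W}$), and the ``$\arg\min$'' display follows by inverting the reparameterization and noting that $\widehat{\boldsymbol{\beta}}_{\text{FGLM}}$ is a continuous function of $\widehat{u}_{n}$ via the inverse map.

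The main obstacle is the upgrade from convergence of the objective $V_{n}$ to convergence in distribution of its minimizer $\widehat{u}_{n}$. Convexity is essential here: $V_{n}$ is convex on $\mathbb{R}^{p}$ because the canonical GLM negative log-likelihood is convex in $\boldsymbol{\beta}$ and the fairness penalty is a positive-semidefinite quadratic form, so the standard result that finite-dimensional convergence in distribution of convex random functions implies convergence of their unique minimizers applies (as used in \citet{zou2006adaptive} and in the broader Knight--Fu convexity-lemma tradition). Remaining technicalities --- uniform control of the Taylor remainder over $O(1/\sqrt{n})$-neighborhoods of $\boldsymbol{\beta}^{*}$, tightness of $\widehat{u}_{n}$, and coercivity of $V$ (which requires $\mathbf{\Sigma}^{-1}$ to be positive definite) --- are standard under the stated Zou regularity conditions and reduce to routine $M$-estimator bookkeeping.
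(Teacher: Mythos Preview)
Your proposal is correct and follows essentially the same route as the paper's proof: localize via $V_n(u)$, Taylor-expand the likelihood piece (using the Zou regularity conditions to control the remainder and obtain the CLT/LLN limits), expand the quadratic penalty exactly and apply Lemma~\ref{lem:consistency} together with $\sqrt{n}\lambda_n\to\lambda_0$, then pass to the argmin of the limiting convex quadratic. If anything, your write-up is more explicit than the paper's about the argmin-transfer step (invoking the Knight--Fu convexity lemma), which the paper leaves implicit.
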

The introduction of the penalty term does not alter the asymptotic variance of the standard GLM estimator; however, it does introduce bias. The amount of bias depends on the variance and difference of expectations between the pairs $\mathbf{X}^{ky}$ and $\mathbf{X}^{ly}$.

\section{Optimization}\label{sec:optimization}
In this section we outline our approach for optimizing \eqref{eqn:fair-glm-problem}. We first introduce the matrix form of the penalty term in \eqref{eqn:fair-glm-problem}:
\begin{equation}\nonumber
    \frac{\lambda}{\kappa} \boldsymbol{\beta}^{T}\mathbf{D}^{kly}\boldsymbol{\beta}
\end{equation}
where
\begin{equation}\nonumber
    \mathbf{D}^{kly} = \frac{1}{n^{kly}}\sum_{(i,j) \in \mathcal{S}^{kly}} \left(\mathbf{x}_{i} - \mathbf{x}_{j}\right)^{T}\left(\mathbf{x}_{i} - \mathbf{x}_{j}\right),
\end{equation}
with $\mathbf{x}_{i} \in \mathbb{R}^{1 \times p}$. Therefore, we can rewrite the objective function \eqref{eqn:fair-glm-problem} as
\begin{equation}\nonumber
    -\frac{1}{n} \sum_{i=1}^{n}\ell(\boldsymbol{\beta};\mathbf{x}_{i},y_{i}) + \lambda \boldsymbol{\beta}^{T}\Bigg(\underbrace{\frac{1}{\kappa} \sum_{{y} \in \mathcal{Y}}   \mathbf{D}^{kl{y}}}_{\mathbf{D}}\Bigg)\boldsymbol{\beta},
\end{equation}
since $\mathbf{D}$ is positive semi-definite the objective function \eqref{eqn:fair-glm-problem} is convex; thus it can efficiently be solved with first or second-order methods.

\subsection{Newton-Raphson Optimization}
Below we describe a Newton-Raphson method, a widely used second-order approach, for minimizing the objective function defined in \eqref{eqn:fair-glm-problem}.
Starting with an initial solution $\boldsymbol{\beta}^{(0)}=\mathbf{0}$, the Newton-Raphson method iteratively improves the current solution with the following update rule:
\begin{equation}
    \boldsymbol{\beta}^{(t+1)} = \boldsymbol{\beta}^{(t)} - [\nabla^{2} F(\boldsymbol{\beta}^{(t)})]^{-1} \nabla F(\boldsymbol{\beta}^{(t)}),
\end{equation}
where $$\nabla F(\boldsymbol{\beta}) = -\frac{1}{n}\mathbf{X}^{T}(\mathbf{y} - \boldsymbol{\mu}) + \lambda\mathbf{D}\boldsymbol{\beta}$$ and $$\nabla^{2} F(\boldsymbol{\beta}) = \frac{1}{n}\mathbf{X}^{T}\mathbf{W}\mathbf{X} + \lambda \mathbf{D},$$ where $\boldsymbol{\mu} = g^{-1}(\mathbf{X}\boldsymbol{\beta})$ and $\mathbf{W} = \text{diag}(\boldsymbol{\mu})$. 

Convergence of the algorithm is guaranteed if line search is applied to determine step sizes, provided \eqref{eqn:fair-glm-problem} is convex in $\boldsymbol{\beta}$. It is well known that the Newton-Raphson method is less sensitive to the choice of step sizes/learning rates than first-order methods. 

In addition, it has been noted that convergence of first-order methods is in general not guaranteed for maximizing the Poisson log-likelihood \citep{he2016fast}, due to its non-global Lipschitz continuity. Second-order methods can be more efficient and effective for finding the global optimum solution for Poisson log-likelihood loss functions which are not globally Lipschitz-continuous. 

In the results reported in this paper we focus on second-order methods for the reasons above, but we emphasize that other optimization approaches (e.g., first-order stochastic gradient methods) can  be used in practice. 

\begin{table*}[!ht]
\setlength{\tabcolsep}{6pt}
\renewcommand{\arraystretch}{1.1}
\caption{Real-world datasets categorized by their outcome types. Sensitive attribute and number of its unique categories $(K)$, sample size $(n)$ and number of the predictor variables $(p)$ are those of the \emph{preprocessed} datasets. Preprocessing details can be found in Appendix \ref{sec:dataset}.}
\label{table:datasets}
\small
\centering
    {\begin{tabular}{ccccrr}
    \hline\hline
    Outcome Type & Dataset & Outcome &Sensitive Attribute $(K)$ & $n$ &  $p$ \\
    \hline 
    \multirow{5}{*}{Binary}      & {\scriptsize Adult \citep{dataset_adult}}           & Income $\geq$ 50K            & Gender (2)      & 45,222 &    34 \\ 
                                 & {\scriptsize Arrhythmia \citep{dataset_arrhythmia}}      & Presence of Arrhythmia    & Gender (2)      & 418 &    80 \\
                                 & {\scriptsize COMPAS \citep{dataset_compas}}          & Recidivism in 2Y          & Race (4)        &  6,172 &    11 \\
                                 & {\scriptsize Drug Consumption \citep{dataset_drug}} & Methadone usage (Y vs N)   & Race (2)        &  1,885 &    25 \\ 
                                 & {\scriptsize German Credit \citep{uci_repo}}  & Credit (Good vs Bad)         & Gender (2)      &  1,000 &    46 \\ 
    \hline
    \multirow{4}{*}{Continuous}  & {\scriptsize Communities and Crime \citep{dataset_communities}}     & Violent Crimes per Capita & Race (3)        &    1,993 &   97 \\ 
                                 & {\scriptsize Law School (LSAC)   \citep{dataset_lsac}}       & GPA         & Race (5)        &  20,715 &    7 \\ 
                                 & {\scriptsize Parkinsons Telemonitoring \citep{dataset_parkinsons}} & UPDRS Score & Gender (2)      &   5,875 &   25 \\ 
                                 & {\scriptsize Student Performance \citep{dataset_student}}       & Final Grade & Gender (2)      &     649 &   39 \\ 
    \hline
    \multirow{2}{*}{Count}       & {\scriptsize Health \& Retirement Survey (HRS)}   & \# of dependencies & \multirow{2}{*}{Race (4)}        &  \multirow{2}{*}{12,774} &   \multirow{2}{*}{23} \\ 
                                 & \scriptsize{(\url{https://hrs.isr.umich.edu/about})}  & in daily activities &         &   &  \\
    \hline
    \multirow{3}{*}{Multiclass} & \multirow{2}{*}{\scriptsize{Drug Consumption \citep{dataset_drug}}}     &  Meth usage: never used vs& \multirow{2}{*}{Race (2)}        &   \multirow{2}{*}{1,885} &   \multirow{2}{*}{25} \\ 
                                 &      &within 1Y vs over 1Y ago& &   &   \\ 
                                 & {\scriptsize Obesity \citep{dataset_obesity}}                & Obesity Levels (6)    & Gender (2)      &   2,111 &   23 \\
    \hline\hline
    \end{tabular}}
    \\
\end{table*}

\subsection{Discretization for Continuous and Unbounded Outcomes}\label{sec:discretization}

We consider a discretization mapping such that each segment $[\delta_{i}, \delta_{i+1})$ contains at least one instance from each group. In particular, for continuous outcomes we use an \emph{equal counts} strategy where each segment contains the same number of samples. We achieve smoother approximations if we use more segments $t$. However, because of the constraint that at least one sample from each group has to be included in each segment it is not always possible to increase $t$ as large as we would like. Instead, we start from a large desired value of $t$ and check if the constraint is satisfied, then if not, we continually decrease $t$ until we get a proper mapping. Empirically, we find that the performance is relatively robust as a function of discretization strategy and number of segments. For count outcomes, which are discrete but unbounded, we choose integer maximum and minimum thresholds. Then we set any values greater than the maximum threshold equal to the threshold while keeping the other values the same and vice versa. Additional details are in Appendix \ref{appendix:discretization}. 

\subsection{Computational Complexity}
Our estimation procedure can be divided into two stages: (i) preparing $\mathbf{D}$ and (ii) Newton-Raphson iterations.
The complexity of computing $\mathbf{D}^{kly}$ is $\mathcal{O}\left(n^{kly}p^{2}\right)$. As a reminder $n^{kly}$ represents the number of pairs of individuals $(i, j)$ with $A_i=k$, $A_j=l$, and $y_i=y_j=y$. Since $n^{kly}$ is a subset of the total number of pairs, it is bounded by $n^2$. There are $K^2|Y|$ total terms $\mathbf{D}^{kly}$ that make up the full matrix $D$. Thus in total computing $\mathbf{D}$ takes $O(n^2K^2|\mathcal{Y}|p^2)$
The per iteration complexity of the Newton-Raphson algorithm is $\mathcal{O}(np^2 + p^3)$ for all outcomes besides multiclass. The complexity in the case of multiclass outcomes becomes $\mathcal{O}(mnp^2+mp^{3})$ since there are an additional $\mathcal{O}(m)$ set of parameters $\boldsymbol{\beta}$. However, the complexity of computing  $\mathbf{D}$ remains the same in all cases. The complete derivation can be found in Appendix \ref{appendix:complexity}.

Thus, the largest contribution to the computational complexity of our approach is from pre-computing $\mathbf{D}$. With very large dataset sizes (large $n$), this bottleneck could be mitigated by using a subsample of the dataset to compute $\mathbf{D}$. For a dataset which also is high-dimensional, the  Newton Raphson optimization could also be replaced by faster (per iteration) first order gradient methods. However, in practice, such speedups were not necessary for any of the real-world datasets used in this paper. For the largest dataset in our experiments, with $n=45,222$ and $p=14$, fitting an F-GLM took 7 seconds to compute $\mathbf{D}$ and an additional 9 seconds until convergence of Newton-Raphson iterations.

\begin{figure*}[p]
    \centering
    \subfloat[Adult--Binary--Gender(2)]{\includegraphics[width=0.32\linewidth,keepaspectratio]{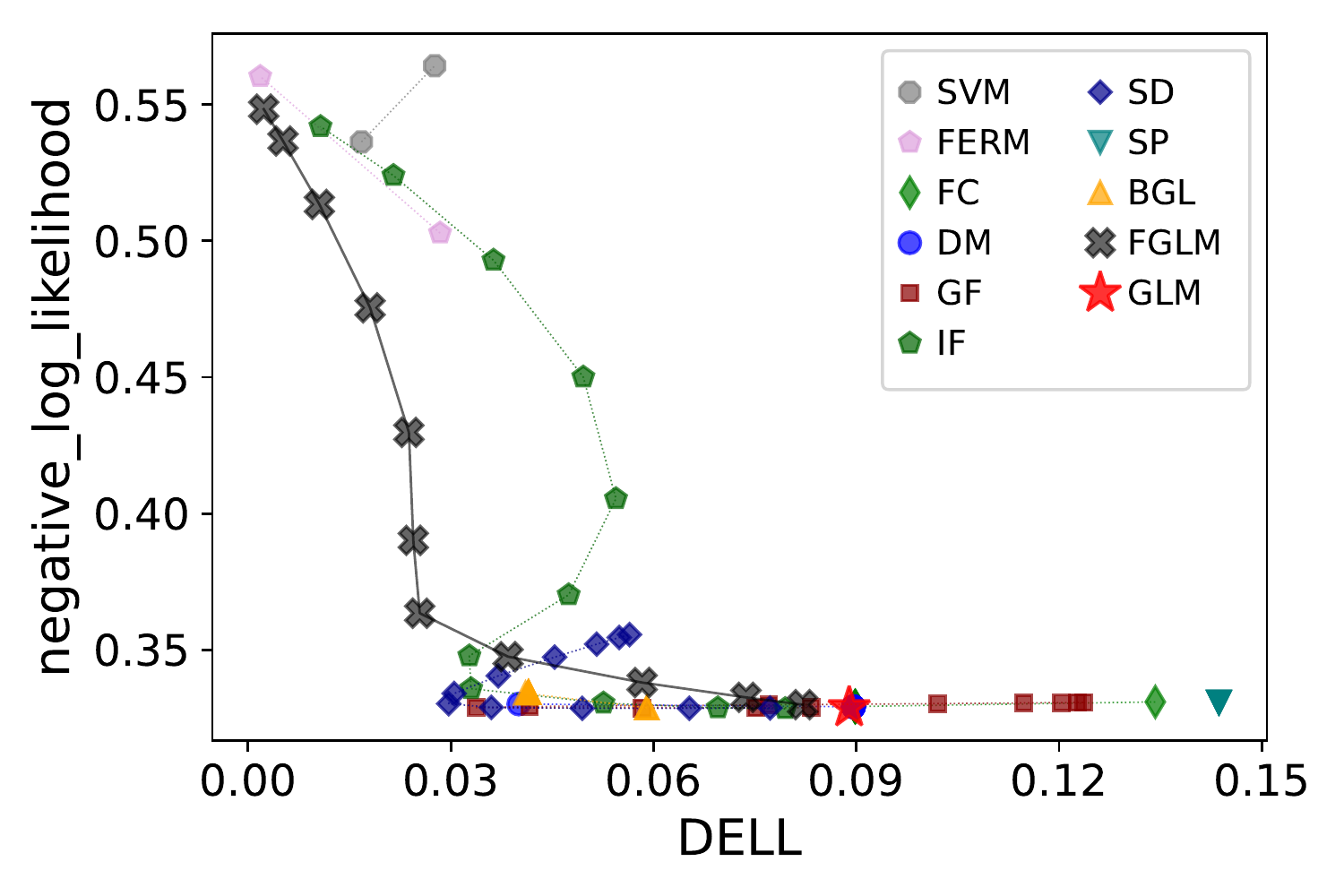}}
    \subfloat[Arrhythmia--Binary--Gender(2)]{\includegraphics[width=0.32\linewidth,keepaspectratio]{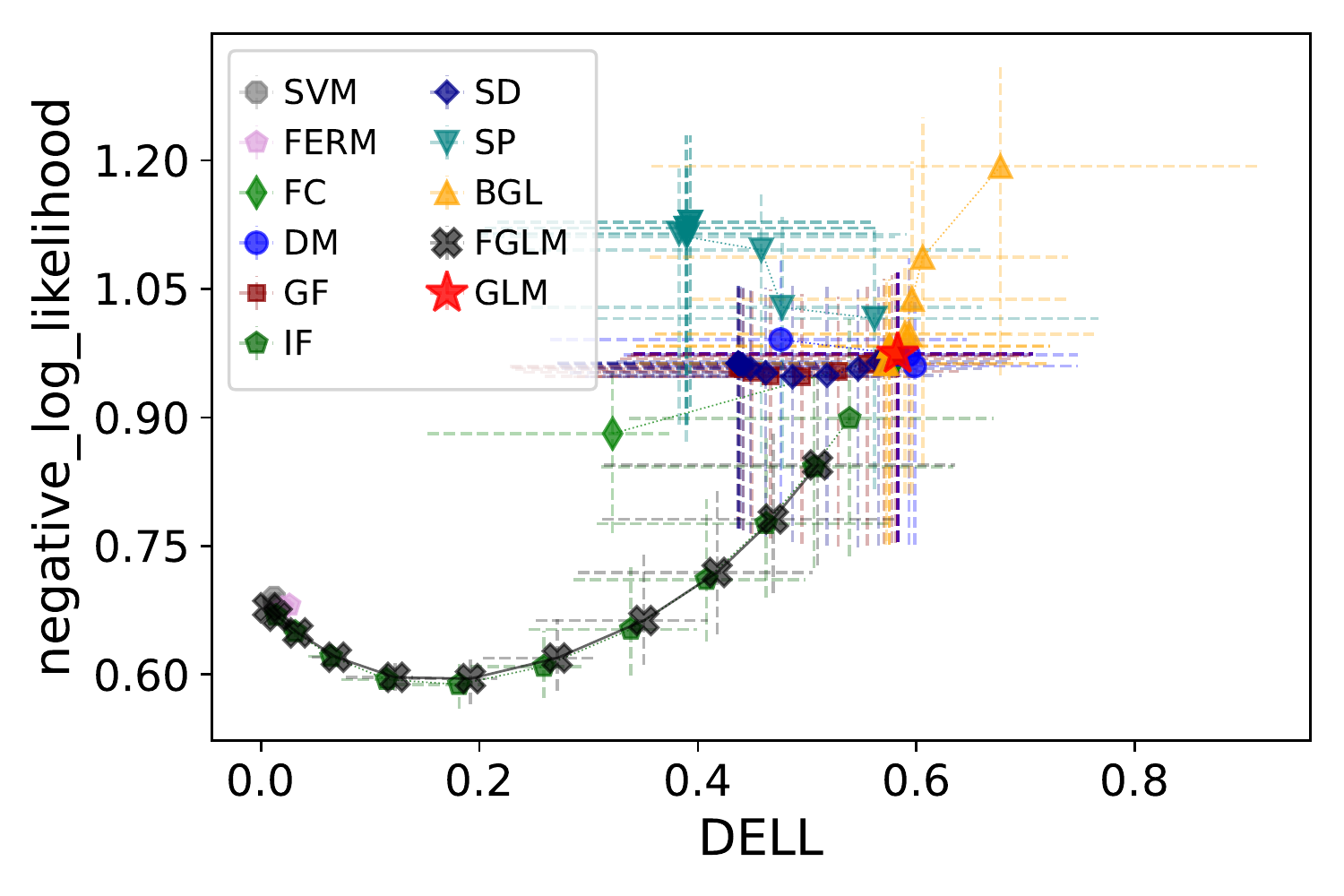}}
    \subfloat[COMPAS--Binary--Race(4)]{\includegraphics[width=0.32\linewidth,keepaspectratio]{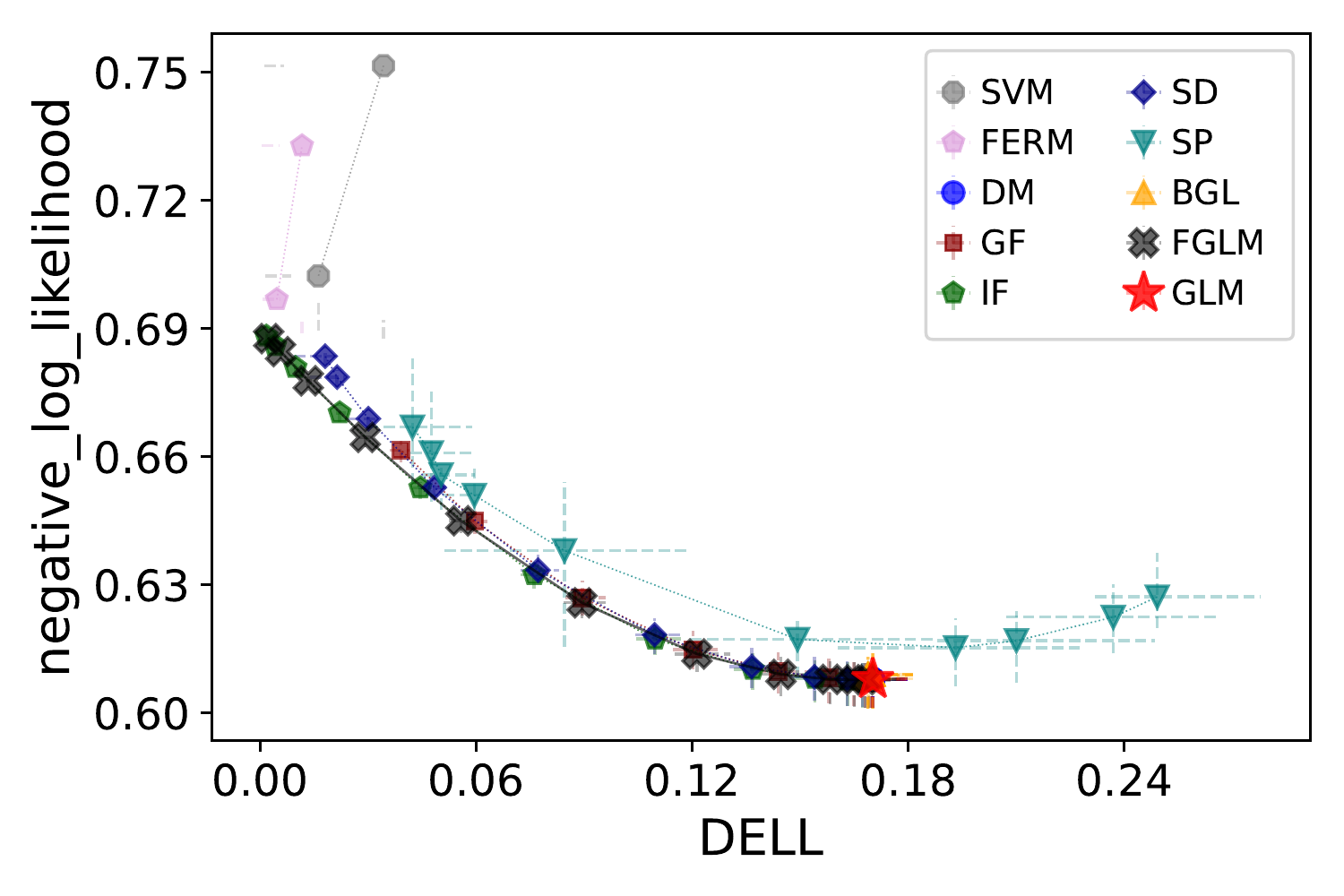}}
    \\
    \subfloat[Drug--Binary--Race(2)]{\includegraphics[width=0.32\linewidth,keepaspectratio]{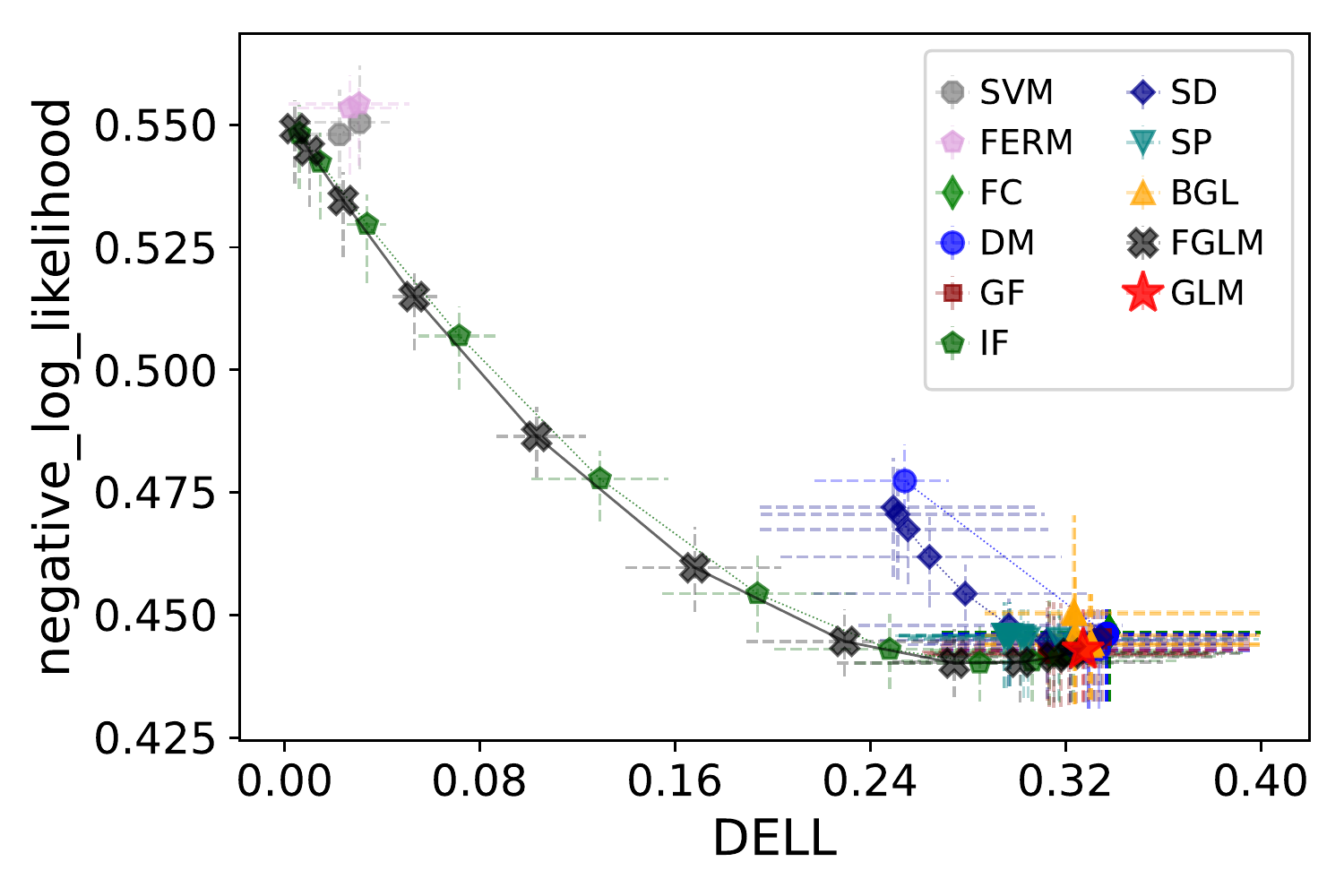}}
    \subfloat[German--Binary--Race(2)]{\includegraphics[width=0.32\linewidth,keepaspectratio]{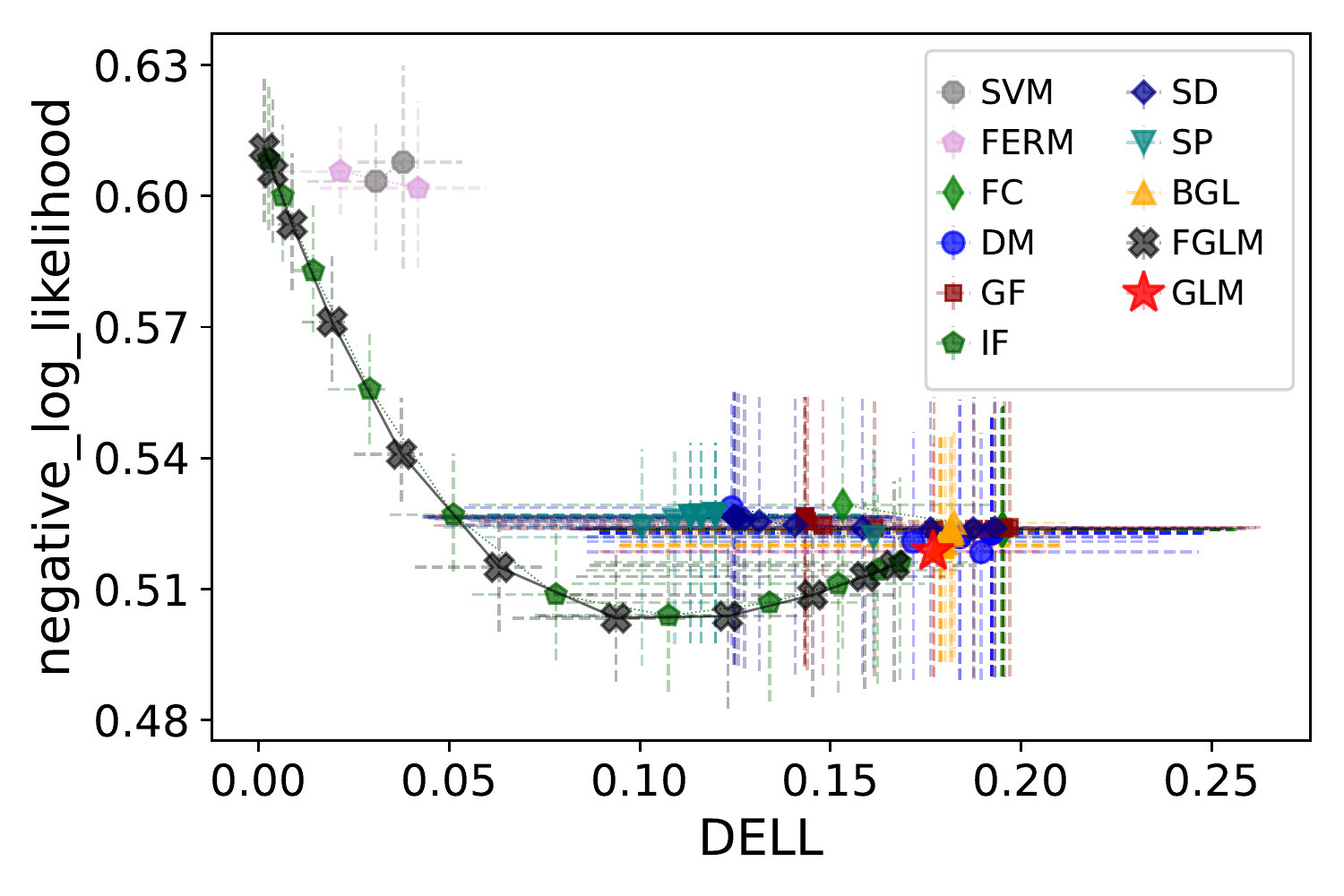}}
    \subfloat[Crime-Cont-Race(3)]{\includegraphics[width=0.32\linewidth,keepaspectratio]{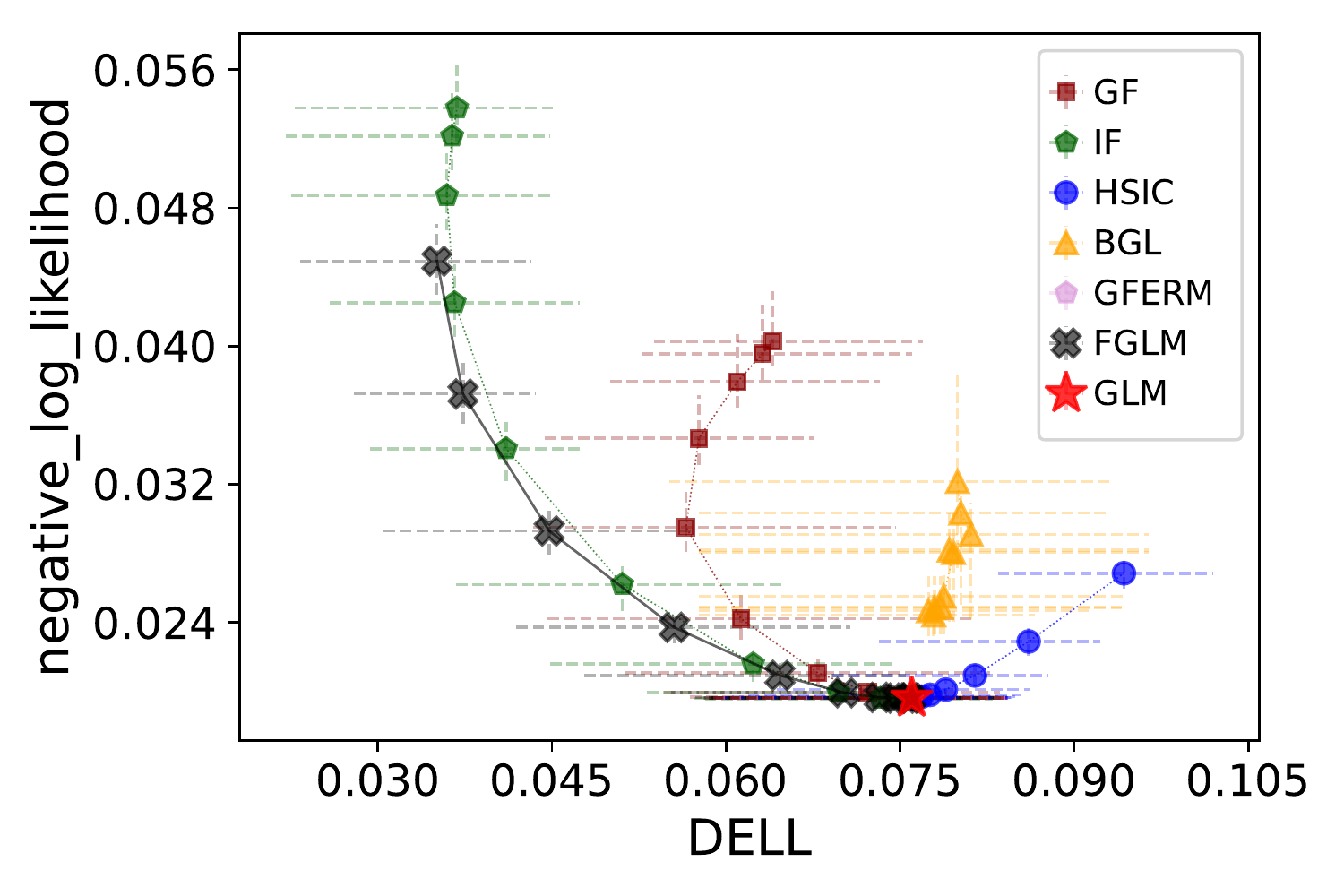}}
    \\
    \subfloat[LSAC--Cont--Race(5)]{\includegraphics[width=0.32\linewidth,keepaspectratio]{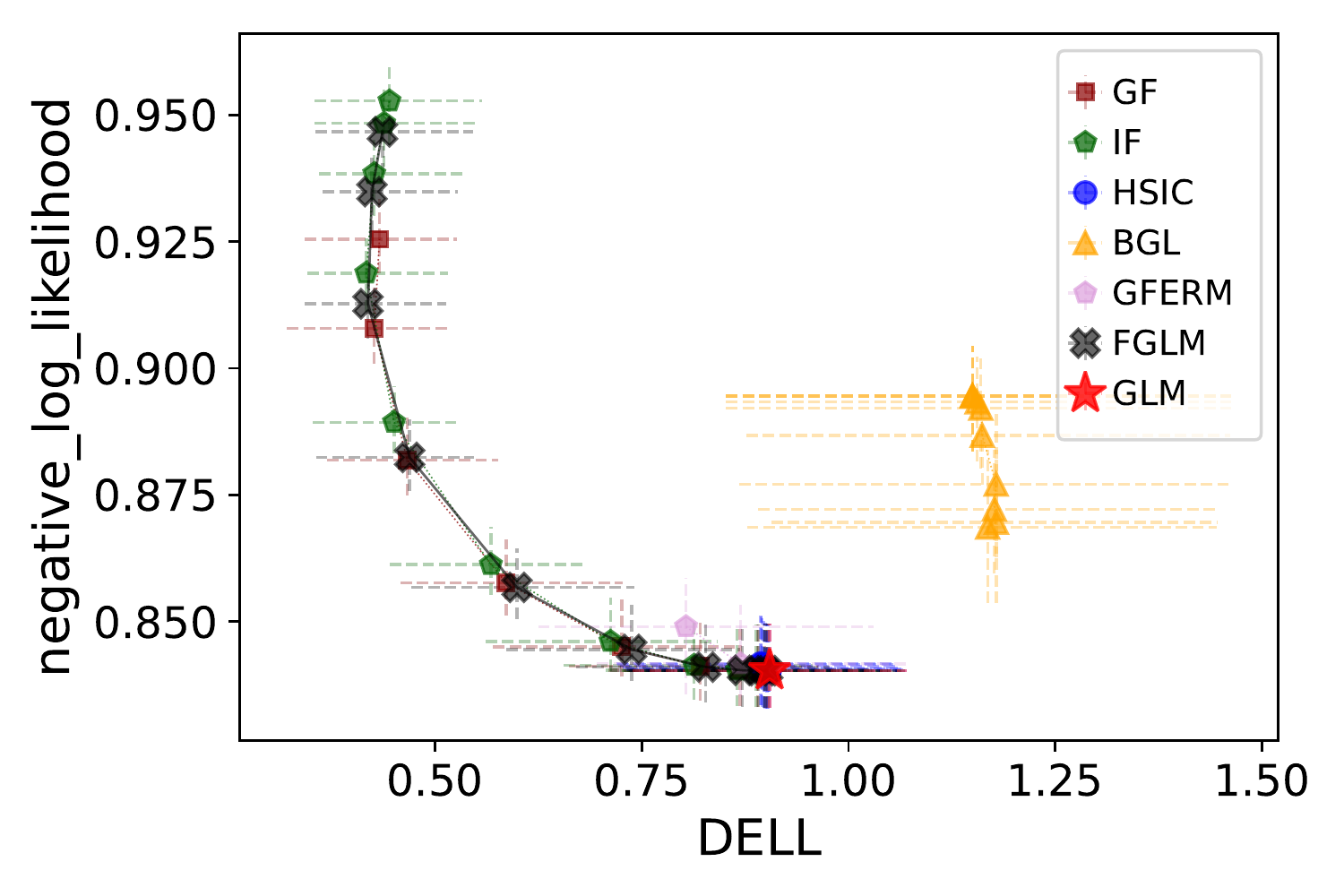}}
    \subfloat[Parkinsons--Cont--Gender(2)]{\includegraphics[width=0.32\linewidth,keepaspectratio]{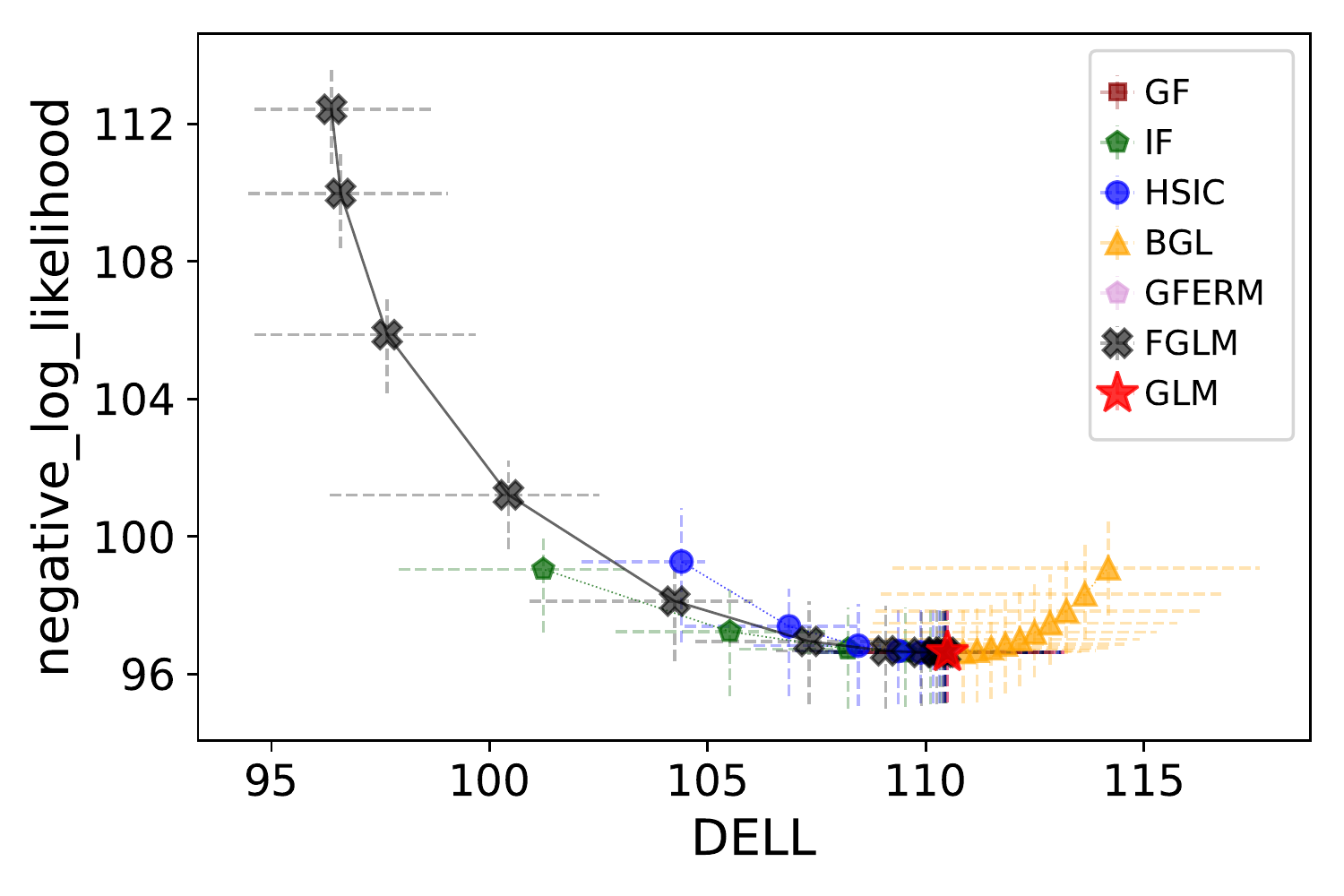}}
    \subfloat[Student--Cont--Gender(2)]{\includegraphics[width=0.32\linewidth,keepaspectratio]{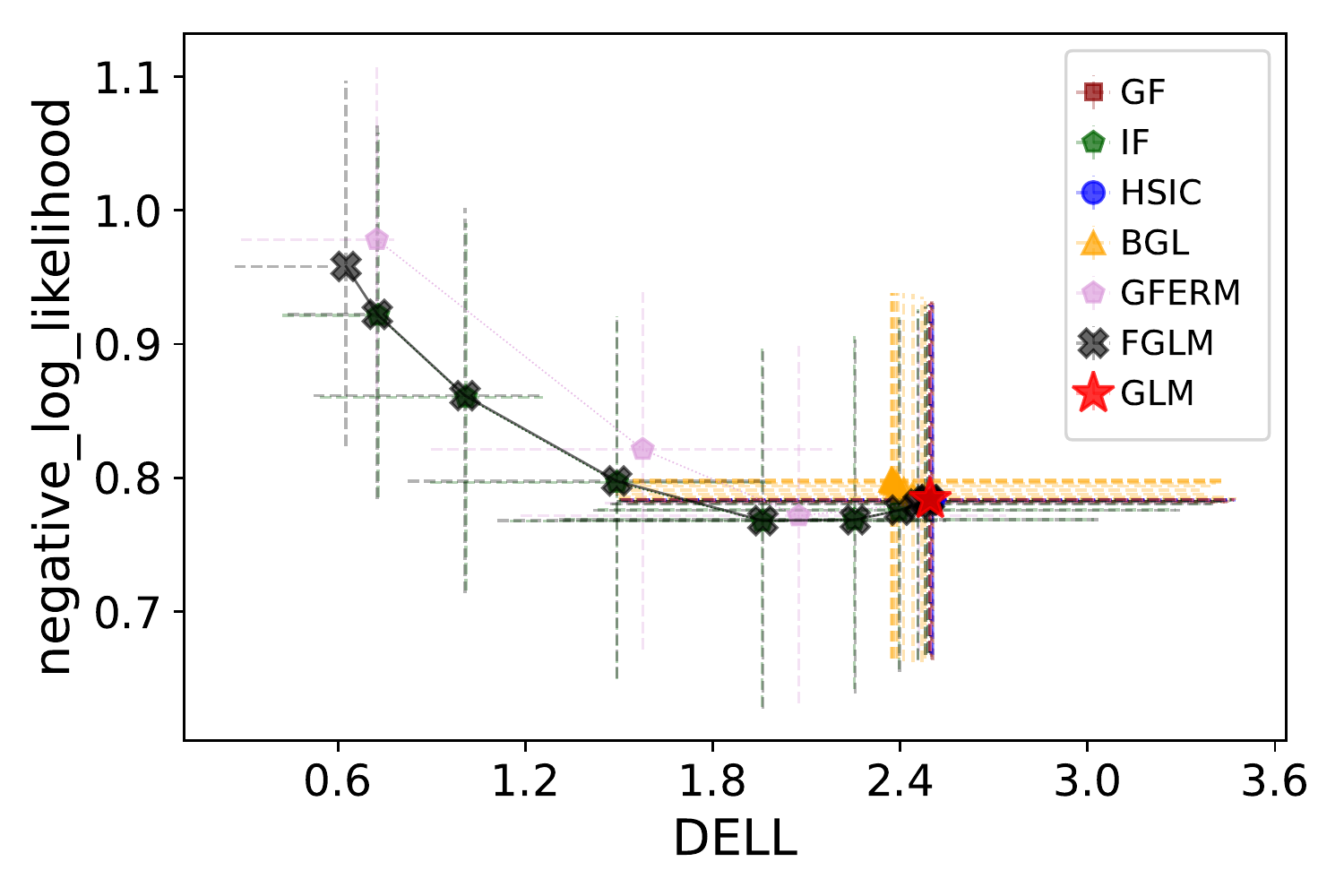}}
     \\
     \subfloat[Drug--Multi--Race(2)]{\includegraphics[width=0.32\linewidth,keepaspectratio]{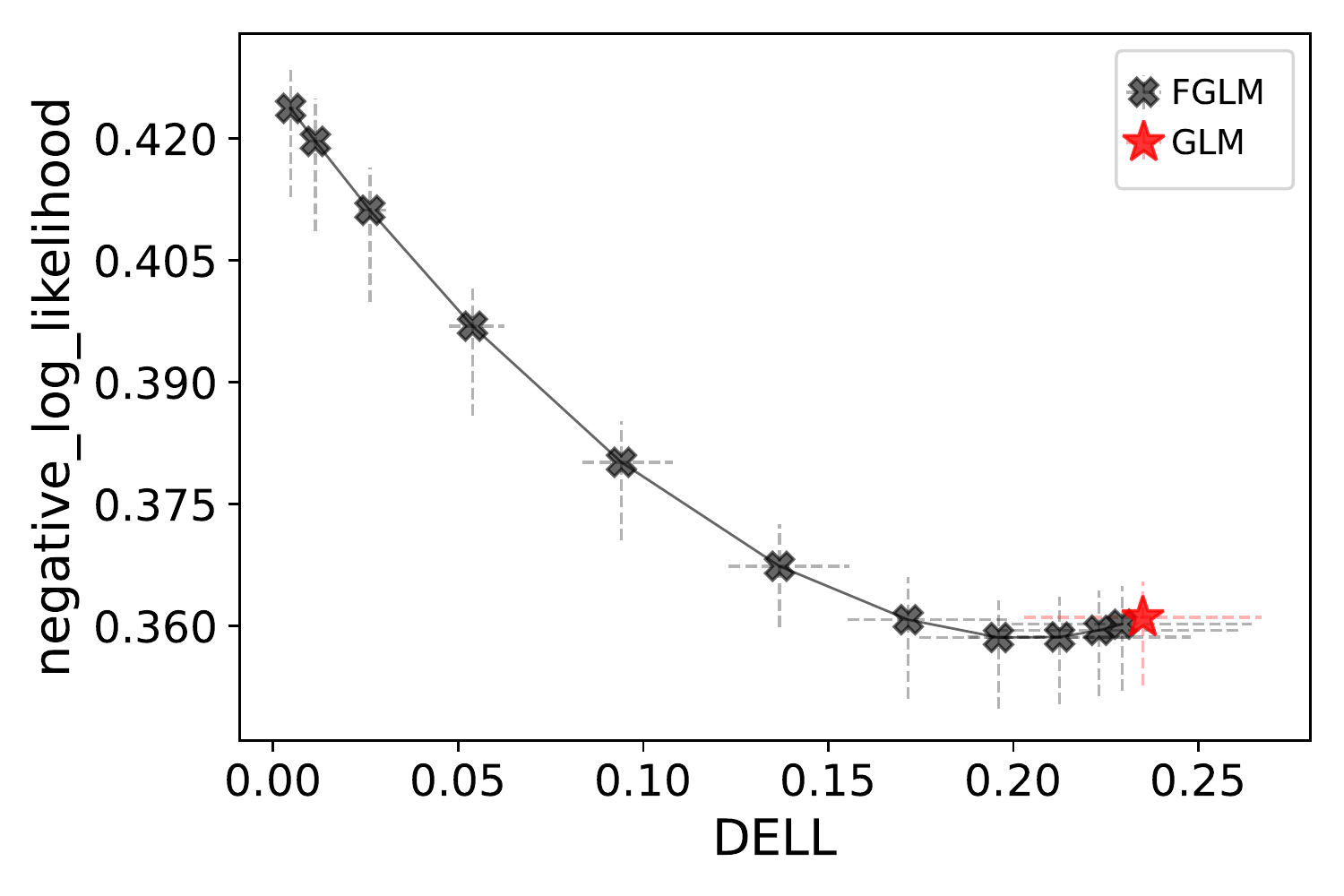}}
     \subfloat[Obesity--Multi--Gender(2)]{\includegraphics[width=0.32\linewidth,keepaspectratio]{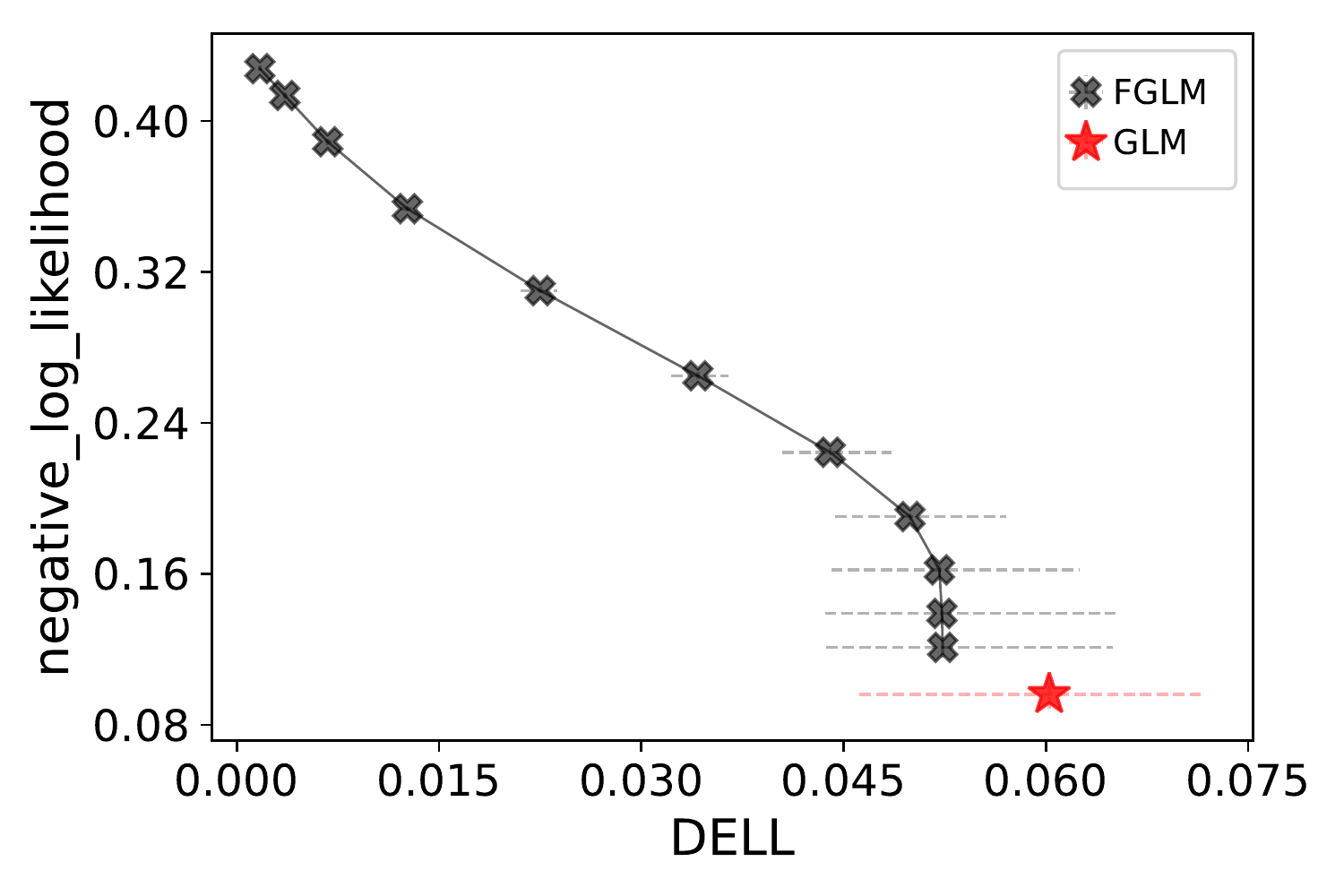}}
     \subfloat[HRS--Count--Race(4)]{\includegraphics[width=0.32\linewidth,keepaspectratio]{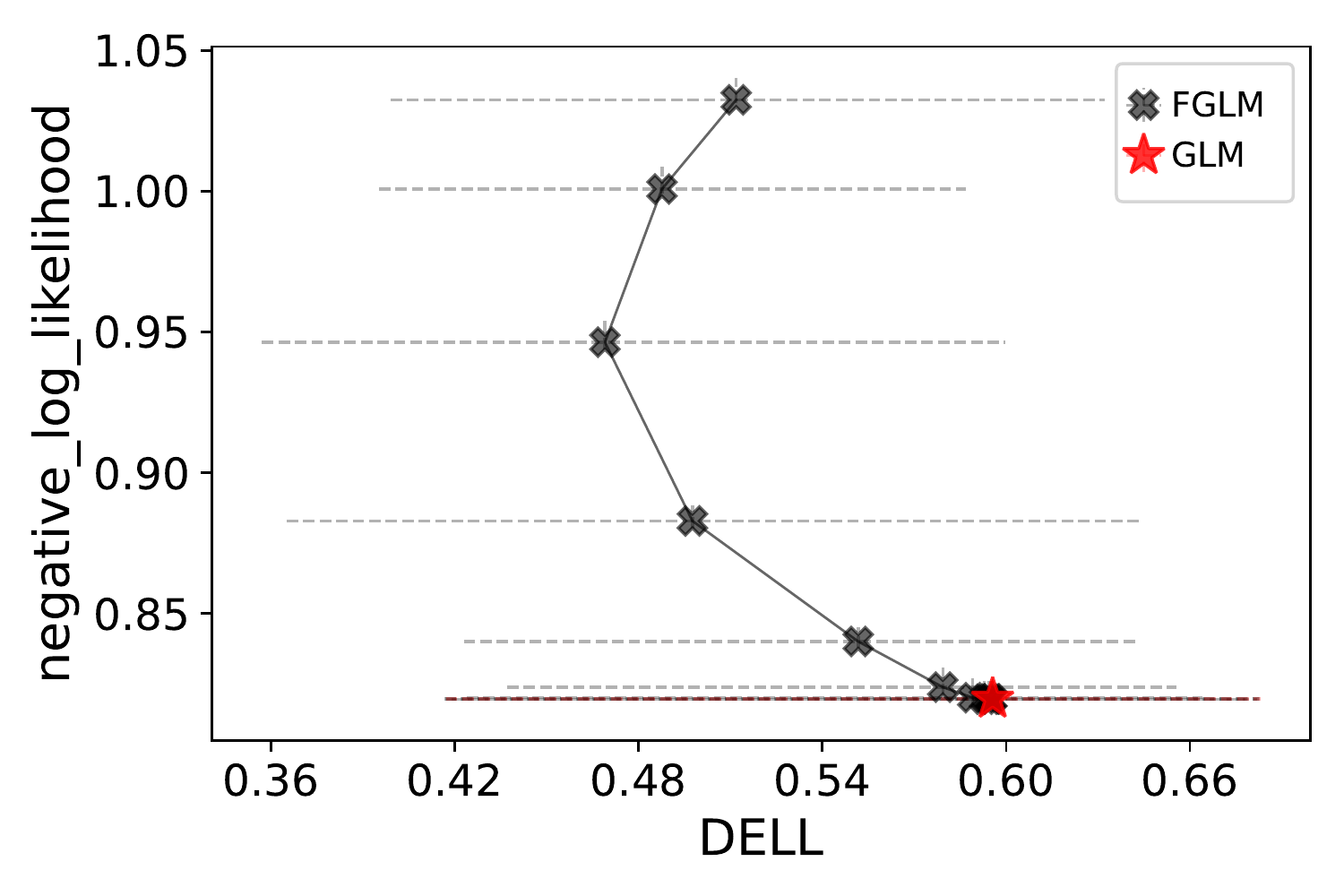}}
    \caption{Experimental results \emph{for negative log-likelihoods} and $\mathcal{D}_{\text{ELL}}$ for 11 real world datasets, with binary (a-e) and continuous outcomes (f-i). Each subtitle is in the form of {Dataset}--{Outcome Type}--{Sensitive Attribute($K$)}. For both binary and continuous outcomes we use a Generalized Linear Model (GLM, red star \inlinegraphics{markers/GLM.png}), Fair Generalized Linear Model (F-GLM, black X \inlinegraphics{markers/FGLM.png}), Individual Fairness penalty (IF, green pentagon \inlinegraphics{markers/IF.png}), Group Fairness penalty (GF, dark red square \inlinegraphics{markers/GF.png}), and Bounded Group Loss (BGL, orange triangle \inlinegraphics{markers/BGL.png}). Methods for binary outcomes also include the Support Vector Machine (SVM, grey hexagon \inlinegraphics{markers/SVM.png}), Fair Constraints (FC, green diamond \inlinegraphics{markers/FC.png}), Disparate Mistreatment (DM, blue circle \inlinegraphics{markers/DM-HSIC.png}), Squared Difference Penalizer (SD, dark blue diamond \inlinegraphics{markers/SD.png}), Fair Empirical Risk Minimization (FERM, plum pentagon \inlinegraphics{markers/GFERM.png}), Statistical Parity (SP, teal triangle \inlinegraphics{markers/SP.png}). Methods for continuous outcomes include the HSIC penalty (HSIC, blue circle \inlinegraphics{markers/DM-HSIC.png}), General Fair Empirical Risk Minimization (GFERM, plum pentagon \inlinegraphics{markers/GFERM.png}). See Table 1 for additional information for each method. Each dot represents the mean performance across test sets for a specific hyperparameter value $\lambda$ and the vertical and horizontal dotted lines reflect variation across test sets (IQRs) for each of performance and disparity.}
    \label{fig:nll-dell}
\end{figure*}

\section{Experiments and Results}\label{sec:experiments}

We performed  experiments for a comprehensive list of benchmark datasets to evaluate the proposed F-GLM, comparing it with the naive GLM and with multiple \emph{in-process} linear model-based fairness-aware methods. 

\subsection{Datasets and Fairness-Aware Methods}
We consider four different tasks/outcome types: binary classification (5 datasets), multiclass classification (2 datasets), continuous outcomes (4 datasets), and count outcomes  (1 dataset).  General characteristics of the datasets  are summarized in Table \ref{table:datasets}. For the binary classification and regression tasks, we evaluated the naive GLM, the proposed F-GLM, and the methods listed in Table 1. For count and multinomial outcome prediction tasks, we evaluated the naive GLM and the proposed F-GLM. 
Canonical link functions are used for the GLM and F-GLM: the identity function for normal continuous outcomes, logit function for binary outcomes, log function for count outcomes, and logit functions for multinomial outcomes.

\subsection{Evaluation Metrics}
Each method was evaluated in two aspects: (i) overall prediction performance measured by \emph{log-likelihood} and (ii) \emph{disparity of the log-likelihoods} between groups. Both metrics were computed using the test instances. Note that the disparity of the log-likelihoods is an empirical estimate of $\mathcal{D}_{\text{ELL}}$. Since there is not a clear consensus on the best choice for a prediction disparity measure in the fairness literature, we also   investigated   model performances for other disparity measures (including $\mathcal{D}_{\text{EO}})$ and included them in Appendix \ref{appendix:additional-results}. 

\subsection{Experimental Methods}
We randomly divided each dataset into  training (70\%) and testing (30\%) sets, except for the Adult dataset which has predefined train/test splits. Each model was trained on the training set by varying its fairness-related hyperparameter (if it exists) over a suitable range. Varying the hyperparameters in this manner produces trajectories of model performance that illustrate the trade-offs in prediction-disparity for each method. Evaluation for a range of such operating points is commonly done in the fairness literature rather than focusing on selecting a single hyperparameter value for each method. The performance and disparity measures were then estimated on the test dataset. For each value of the hyperparameter, the performance and disparity measures were estimated by averaging over 20 replicates of random splits of the training and testing sets (except for the Adult dataset).  

\subsection{Results}
The results for binary and continuous outcomes are displayed in Figure \ref{fig:nll-dell}, where the $x$ axis represents disparity in the group log-likelihoods and the $y$ axis is the overall log-likelihood, as measured on test sets. For most datasets, the naive GLM is the most unfair solution (largest value of the $x$ axis), which is expected due to the absence of any fairness constraint. Most of the fairness-aware methods we evaluated show wide-ranging trajectories of trade-offs between overall prediction performance and disparity. The trajectories of the competitive methods seem to be broadly consistent with earlier empirical results. The proposed F-GLM is generally one of the best performers relative to competitors for most datasets, in that it can decrease disparity substantially while maintaining overall predictive accuracy.  

The results for multiclass and count outcomes are also displayed in Figure \ref{fig:nll-dell}. For the Drug and Obesity datasets, the negative log-likelihoods of the underrepresented sensitive attribute group were 19\% and 43\% worse than those of the majority group. For the HRS dataset, the MSEs of the non-Hispanic black and Hispanic groups are 1.75 and 1.61 times of that of the non-Hispanic white subjects, respectively, highlighting the need for fairness-aware prediction algorithms in practice. Overall the proposed F-GLM results in trajectories that flexibly trade-off overall prediction accuracy with disparity. 

Results of the disparity and overall prediction performance using other metrics, including log-likelihoods and AUROCs, are presented in the supplementary materials, and showed mostly similar patterns as Figure \ref{fig:nll-dell}.

\section{Conclusions}\label{sec:conclusion}
We presented a fair generalized linear model (F-GLM), incorporating a convex penalty term based solely on the linear components of the GLM, in order to learn fair predictions. We provided statistical justification that the F-GLM achieves fairness both for the expected outcomes and log-likelihoods between groups. Thus, the framework is appealing both theoretically and computationally. Experimental results on benchmark datasets suggest that the F-GLM framework can improve prediction parity while maintaining overall accuracy for binary classification and regression, as well as for less-studied outcomes such as count and multiclass. 

One limitation of the F-GLM is its linearity; even though it provides good interpretability, its predictive power can be sub-optimal if the relationship between predictor and response variables is non-linear and complex. We conjecture that the F-GLM approach proposed here could be extended beyond linear models to provide a useful fairness framework for nonlinear machine learning models such as kernel machines or neural networks, by equalizing the linear components in the final decision layer.

An interesting extension of work would be to explore the connection between our fairness penalty and variance regularization, particularly when used for robust learning of predictive models or domain generalization.

\section*{Acknowledgements}
We thank the ICML reviewers for their suggestions on improving the original version of this paper. This work was supported by in part by the National Institutes of Health under awards NIH R01-LM013344, R01AG054467, R01AG065330 and R01-AG065330-02S1, by the National Science Foundation under award IIS-1900644, by the HPI Research Center in Machine Learning and Data Science at UC Irvine (PP), and by a Qualcomm Faculty award (PS). The Health and Retirement Study public use dataset is produced and distributed by the University of Michigan, Ann Arbor, MI. with funding from the National Institute on Aging (grant number NIA U01AG009740).

\newpage
\bibliography{references.bib}
\bibliographystyle{icml2022}

\newpage
\appendix
\onecolumn

\section{Proofs}\label{appendix:proof}

\subsection{Lemma \ref{lem:key-lemma}}
Since $h$ is differentiable, by the mean value theorem, we have $h(\theta^{k}) - h(\theta^{l}) = h'(\theta^{m})(\theta^{k} - \theta{^l})$, where $\theta^{m} = \alpha \theta^{k} + (1 - \alpha)\theta^{l}$, $\alpha \in [0,1]$. Thus, by applying Cauchy-Schwarz inequality, we obtain
\begin{align}\nonumber
    \left(\mathbb{E}\left[h(\theta^{k}) - h(\theta^{l})\right]\right)^{2} & = \left(\mathbb{E}\left[h'(\theta^{m}) (\theta^{k} - \theta^{l}) \right]\right)^{2}
    \\ \nonumber & \leq
    \mathbb{E}\left[h'(\theta^{m})^{2}\right]\mathbb{E}\left[(\theta^{k} - \theta^{l})^{2}\right].
\end{align}
Note that it is not necessary to assume $\theta^{k}$ and $\theta^{l}$ are independent. 

\subsection{Proposition \ref{prop:outcome}}
The inverse of the canonical link functions $\mu$ for GLMs are monotone and differentiable \citepsupp{supp_dobson2018introduction}. Let $\theta^{k} = \mathbf{X}^{ky}\boldsymbol{\beta}$, $\theta^{l} = \mathbf{X}^{ly}\boldsymbol{\beta}$, and $h = \mu$. Note that it is not necessary to assume $\mathbf{X}^{ky}$ and $\mathbf{X}^{ly}$ are independent. Applying Lemma 1 yields
\begin{align}\nonumber
    \left(\mathbb{E}\left[\mu(\mathbf{X}^{ky}\boldsymbol{\beta}) - \mu(\mathbf{X}^{ly}\boldsymbol{\beta})\right]\right)^{2} & = \left(\mathbb{E}\left[\mu'(\mathbf{X}^{m}\boldsymbol{\beta}) (\mathbf{X}^{ky}\boldsymbol{\beta} - \mathbf{X}^{ly}\boldsymbol{\beta}) \right]\right)^{2}
    \\ \nonumber & \leq
    \mathbb{E}\left[\mu'(\mathbf{X}^{m}\boldsymbol{\beta})^{2}\right]\mathbb{E}\left[(\mathbf{X}^{ky}\boldsymbol{\beta} - \mathbf{X}^{ly}\boldsymbol{\beta})^{2}\right],
\end{align}    
where $\mathbf{X}^{m} = \alpha \mathbf{X}^{ky} + (1-\alpha)\mathbf{X}^{ly}$, for some $\alpha \in [0,1]$. 

\subsection{Proposition \ref{prop:likelihood}}
Let 
\begin{equation}\nonumber
    h(\theta) = \frac{y\theta - b(\theta)}{a(\phi)} + c(y,\phi)  = \frac{y\mathbf{X}\boldsymbol{\beta} - b(\mathbf{X}\boldsymbol{\beta})}{a(\phi)} + c(y,\phi) = \ell(\boldsymbol{\beta};\mathbf{X},y) ,
\end{equation}
here, $y$ is a fixed value and $\theta = \mathbf{X}\boldsymbol{\beta}$. The log-likelihood is a concave and differentiable function of $\theta$. Thus, with $\theta^{k} = \mathbf{X}^{ky}\boldsymbol{\beta}$ and $\theta^{l} = \mathbf{X}^{ly}\boldsymbol{\beta}$, we can apply Lemma 1. That is,

\begin{align}\nonumber
    \left(\mathbb{E}\left[\ell(\boldsymbol{\beta};\mathbf{X}^{ky},y) - \ell(\boldsymbol{\beta};\mathbf{X}^{ly},y)\right]\right)^{2} & = \left(\mathbb{E}\left[\ell'(\boldsymbol{\beta};\mathbf{X}^{m},y) (\mathbf{X}^{ky}\boldsymbol{\beta} - \mathbf{X}^{ly}\boldsymbol{\beta}) \right]\right)^{2}
    \\ \nonumber & \leq
    \mathbb{E}\left[\ell'(\boldsymbol{\beta};\mathbf{X}^{m},y)^{2}\right]\mathbb{E}\left[(\mathbf{X}^{ky}\boldsymbol{\beta} - \mathbf{X}^{ly}\boldsymbol{\beta})^{2}\right],
\end{align}    
where $\mathbf{X}^{m} = \alpha \mathbf{X}^{ky} + (1-\alpha)\mathbf{X}^{ly}$, for some $\alpha \in [0,1]$. 

\subsection{Theorem \ref{thm:outcome}}\label{proof:thm-outcome}

If $\mu'$ is bounded (e.g. Bernoulli or Multinomial, see Table \ref{table:glm-links}), we can easily find $C_{\mu} = \sup (\mu')^{2}$; however, for some link functions (or outcomes) $\mu'$ is not bounded and in that case we cannot find $C_{\mu}$ in the same way. Instead, for that case, we use the fact that  $\mu'$ is \emph{monotonically increasing} and nonnegative to complete the proof. We first introduce a lemma.
\begin{lemma}\label{lem:b1}
    Let $\widehat{\boldsymbol{\beta}}_{\text{\normalfont FGLM}}$ be the solution for \eqref{eqn:expected-objective} given $\lambda \geq 0$. Then, 
    \begin{equation}
        \|\widehat{\boldsymbol{\beta}}_{\text{\normalfont FGLM}}\|_{2}^{2} \leq \frac{\delta_{\max}(\Delta)}{ \delta_{\min}(\Delta)} \|\widehat{\boldsymbol{\beta}}_{\text{\normalfont GLM}}\|_{2}^{2},
    \end{equation}
\end{lemma}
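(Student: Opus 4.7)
The plan is to exploit the optimality of $\widehat{\boldsymbol{\beta}}_{\text{FGLM}}$ relative to $\widehat{\boldsymbol{\beta}}_{\text{GLM}}$ in the penalized objective \eqref{eqn:expected-objective}, together with the fact that $\widehat{\boldsymbol{\beta}}_{\text{GLM}}$ by construction minimizes the unpenalized log-likelihood piece, in order to transfer a bound on $\widehat{\boldsymbol{\beta}}_{\text{FGLM}}^{T}\Delta\widehat{\boldsymbol{\beta}}_{\text{FGLM}}$ into a bound on $\|\widehat{\boldsymbol{\beta}}_{\text{FGLM}}\|_{2}^{2}$ through Rayleigh quotient estimates.

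Concretely, writing $F(\boldsymbol{\beta}) = -\mathbb{E}[\ell(\boldsymbol{\beta};\mathbf{X},Y)] + \lambda\,\boldsymbol{\beta}^{T}\Delta\boldsymbol{\beta}$, I would first invoke optimality of $\widehat{\boldsymbol{\beta}}_{\text{FGLM}}$ to write $F(\widehat{\boldsymbol{\beta}}_{\text{FGLM}}) \leq F(\widehat{\boldsymbol{\beta}}_{\text{GLM}})$, and then rearrange so that the penalty terms are isolated:
\begin{equation*}
\lambda\,\widehat{\boldsymbol{\beta}}_{\text{FGLM}}^{T}\Delta\widehat{\boldsymbol{\beta}}_{\text{FGLM}} \leq \lambda\,\widehat{\boldsymbol{\beta}}_{\text{GLM}}^{T}\Delta\widehat{\boldsymbol{\beta}}_{\text{GLM}} + \bigl(\mathbb{E}[\ell(\widehat{\boldsymbol{\beta}}_{\text{FGLM}};\mathbf{X},Y)] - \mathbb{E}[\ell(\widehat{\boldsymbol{\beta}}_{\text{GLM}};\mathbf{X},Y)]\bigr).
\end{equation*}
The key observation is that the bracketed term is non-positive, because $\widehat{\boldsymbol{\beta}}_{\text{GLM}}$ maximizes $\mathbb{E}[\ell(\boldsymbol{\beta};\mathbf{X},Y)]$ by definition. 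Dividing by $\lambda$ (the case $\lambda = 0$ is trivial since then $\widehat{\boldsymbol{\beta}}_{\text{FGLM}} = \widehat{\boldsymbol{\beta}}_{\text{GLM}}$) gives the quadratic-form comparison $\widehat{\boldsymbol{\beta}}_{\text{FGLM}}^{T}\Delta\widehat{\boldsymbol{\beta}}_{\text{FGLM}} \leq \widehat{\boldsymbol{\beta}}_{\text{GLM}}^{T}\Delta\widehat{\boldsymbol{\beta}}_{\text{GLM}}$.

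From here the conclusion follows by the standard eigenvalue bracketing of a quadratic form by its Rayleigh quotient: since $\Delta$ is symmetric positive semi-definite (it is a sum of outer products of the form $(\mathbf{x}_{i}-\mathbf{x}_{j})^{T}(\mathbf{x}_{i}-\mathbf{x}_{j})$), one has $\delta_{\min}(\Delta)\|\widehat{\boldsymbol{\beta}}_{\text{FGLM}}\|_{2}^{2} \leq \widehat{\boldsymbol{\beta}}_{\text{FGLM}}^{T}\Delta\widehat{\boldsymbol{\beta}}_{\text{FGLM}}$ and $\widehat{\boldsymbol{\beta}}_{\text{GLM}}^{T}\Delta\widehat{\boldsymbol{\beta}}_{\text{GLM}} \leq \delta_{\max}(\Delta)\|\widehat{\boldsymbol{\beta}}_{\text{GLM}}\|_{2}^{2}$. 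Chaining these with the previous inequality and dividing by $\delta_{\min}(\Delta)$ yields the claimed bound.

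The main obstacle is the need for $\delta_{\min}(\Delta) > 0$, i.e., that $\Delta$ is strictly positive definite. This is a non-trivial implicit assumption: if the conditional distributions $p^{k}_{\mathbf{X}|Y=y}$ coincide across all groups and labels then $\Delta = 0$, in which case the lower Rayleigh bound degenerates and the lemma's ratio is undefined — but this is exactly the regime in which the penalty has no effect and no bound is needed. More generally, if $\Delta$ has a non-trivial null space, $\widehat{\boldsymbol{\beta}}_{\text{FGLM}}$ is only controlled in the orthogonal complement; I would therefore state the lemma under the implicit non-degeneracy assumption $\delta_{\min}(\Delta) > 0$, or alternatively reformulate the conclusion in terms of the Moore-Penrose pseudoinverse restricted to the range of $\Delta$. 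Everything else in the argument is a direct consequence of optimality and the Courant-Fischer inequalities, so no additional regularity on the link function or the data distribution is required beyond convexity of $-\mathbb{E}[\ell]$, which holds for canonical GLMs.
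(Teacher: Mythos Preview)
Your proposal is correct and follows essentially the same approach as the paper: the paper's proof is precisely the chain $\delta_{\min}(\Delta)\|\widehat{\boldsymbol{\beta}}_{\text{FGLM}}\|_{2}^{2} \leq \widehat{\boldsymbol{\beta}}_{\text{FGLM}}^{T}\Delta\widehat{\boldsymbol{\beta}}_{\text{FGLM}} \leq \widehat{\boldsymbol{\beta}}_{\text{GLM}}^{T}\Delta\widehat{\boldsymbol{\beta}}_{\text{GLM}} \leq \delta_{\max}(\Delta)\|\widehat{\boldsymbol{\beta}}_{\text{GLM}}\|_{2}^{2}$, with the outer inequalities coming from eigenvalue decomposition and the middle one from joint optimality of the two estimators, which you spell out more explicitly than the paper does. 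Your caveat about needing $\delta_{\min}(\Delta)>0$ is well taken and is left implicit in the paper.
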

where $\delta_{\max}(\Delta)$ and $\delta_{\min}(\Delta)$ are the largest and the smallest nonnegative eigenvalues of $\Delta$. We further note that, by the Perron-Frobenius Theorem, $\delta_{\max}$ and $\delta_{\min}$ are the largest and the smallest row sums of $\Delta$.

\begin{proof}
    The lemma follows from the following chain of inequalities:
    \begin{align}
        \delta_{\min}(\Delta) \|\widehat{\boldsymbol{\beta}}_{\text{\normalfont FGLM}}\|_{2}^{2} & \leq \widehat{\boldsymbol{\beta}}_{\text{\normalfont FGLM}}^{T} \Delta \widehat{\boldsymbol{\beta}}_{\text{\normalfont FGLM}} \leq \widehat{\boldsymbol{\beta}}_{\text{\normalfont GLM}}^{T} \Delta \widehat{\boldsymbol{\beta}}_{\text{\normalfont GLM}} \leq \delta_{\max}(\Delta)\|\widehat{\boldsymbol{\beta}}_{\text{\normalfont GLM}}\|_{2}^{2}.
    \end{align}
    The left-most inequality follows from the eigenvalue decomposition of $\Delta$, that is,
    \begin{equation}\nonumber
        \widehat{\boldsymbol{\beta}}_{\text{\normalfont FGLM}}^{T} \Delta \widehat{\boldsymbol{\beta}}_{\text{\normalfont FGLM}} = \widehat{\boldsymbol{\beta}}_{\text{\normalfont FGLM}}^{T} Q^{-1} \Lambda Q \widehat{\boldsymbol{\beta}}_{\text{\normalfont FGLM}} \geq \delta_{\min}(\Delta) \widehat{\boldsymbol{\beta}}_{\text{\normalfont FGLM}}^{T} Q^{-1} Q \widehat{\boldsymbol{\beta}}_{\text{\normalfont FGLM}} = \delta_{\min}(\Delta) \|\widehat{\boldsymbol{\beta}}_{\text{\normalfont FGLM}}\|_{2}^{2},
    \end{equation}
    where $\Lambda$ is a diagonal matrix whose entries are the eigenvalues of $\Delta$. Since $\Delta$ is positive semi-definite, the entries of $\Lambda$ are all non-negative. Likewise, we have
    \begin{equation}\nonumber
        \widehat{\boldsymbol{\beta}}_{\text{\normalfont GLM}}^{T} \Delta \widehat{\boldsymbol{\beta}}_{\text{\normalfont GLM}} = \widehat{\boldsymbol{\beta}}_{\text{\normalfont GLM}}^{T} Q^{-1} \Lambda Q \widehat{\boldsymbol{\beta}}_{\text{\normalfont GLM}} \leq \delta_{\max}(\Delta) \widehat{\boldsymbol{\beta}}_{\text{\normalfont GLM}}^{T} Q^{-1} Q \widehat{\boldsymbol{\beta}}_{\text{\normalfont GLM}} = \delta_{\max}(\Delta) \|\widehat{\boldsymbol{\beta}}_{\text{\normalfont GLM}}\|_{2}^{2},
    \end{equation}
    which yields the right-most inequality. The inequality in the middle trivially holds because $\widehat{\boldsymbol{\beta}}_{\text{\normalfont FGLM}}$ and $\widehat{\boldsymbol{\beta}}_{\text{\normalfont GLM}}$ are optimal solutions for the F-GLM and the naive GLM problems.
\end{proof}

\begin{proof}[Proof of Theorem \ref{thm:outcome}]
    Here we only consider monotonically increasing $\mu'$ because otherwise (Bernoulli, multinomial, and normal) $\mu'$ is bounded and thus we can easily find the quantity that bounds $\mathbb{E}[\mu'(\mathbf{X}^{m}\widehat{\boldsymbol{\beta}}_{\text{\normalfont FGLM}})^{2}]$. We have a chain of inequalities that follows from Lemma \ref{lem:b1} as well as the eigenvalue decomposition: $$ \widehat{\boldsymbol{\beta}}_{\text{\normalfont FGLM}}(\mathbf{X}^{mT}\mathbf{X}^{m})\widehat{\boldsymbol{\beta}}_{\text{\normalfont FGLM}} \leq \delta_{\max}(\mathbf{X}^{mT}\mathbf{X}^{m}) \|\widehat{\boldsymbol{\beta}}_{\text{\normalfont FGLM}}\|_{2}^{2} \leq \delta_{\max}(\mathbf{X}^{mT}\mathbf{X}^{m}) (\delta_{\max}(\Delta)/\delta_{\min}(\Delta)) \|\widehat{\boldsymbol{\beta}}_{\text{\normalfont GLM}}\|_{2}^{2}.$$
    Therefore, $$|\mathbf{X}^{m}\widehat{\boldsymbol{\beta}}_{\text{\normalfont FGLM}}|= ({\widehat{\boldsymbol{\beta}}_{\text{\normalfont FGLM}}^{T}(\mathbf{X}^{mT}\mathbf{X}^{m})\widehat{\boldsymbol{\beta}}_{\text{\normalfont FGLM}}})^{1/2} \leq (\delta_{\max}(\mathbf{X}^{mT}\mathbf{X}^{m}) (\delta_{\max}(\Delta)/\delta_{\min}(\Delta)))^{1/2}\|\widehat{\boldsymbol{\beta}}_{\text{\normalfont GLM}}\|_{2},$$ which yields, $$\mathbb{E}[\mu'(\mathbf{X}^{m}\widehat{\boldsymbol{\beta}}_{\normalfont \text{FGLM}})^{2}] \leq \mathbb{E}[\mu'(|\mathbf{X}^{m}\widehat{\boldsymbol{\beta}}_{\normalfont \text{FGLM}}|)^{2}] \leq \mathbb{E}[\mu'((\delta_{\max}(\mathbf{X}^{mT}\mathbf{X}^{m}) (\delta_{\max}(\Delta)/\delta_{\min}(\Delta)))^{1/2}\|\widehat{\boldsymbol{\beta}}_{\text{\normalfont GLM}}\|_{2})^{2}].$$ This term is independent of $\lambda$ but depend on the predictors and responses.
\end{proof}

\subsection{Theorem \ref{thm:likelihood}}
\begin{proof}[Proof of Theorem \ref{thm:likelihood}]
    In the case of Bernoulli and multinomial, we can take $\sup_{x}(y-\mu(x))^{2}$ where $0 \leq \mu(x) \leq 1$; otherwise even if $\mu$ is unbounded, it is still monotonically increasing. Thus,
\begin{align}
    \mathbb{E}[\ell'(\widehat{\boldsymbol{\beta}}_{\text{\normalfont FGLM}};\mathbf{X}^{m},y)^{2}] & = a(\phi)^{-2}\mathbb{E}[y^{2} - 2y\mu(\mathbf{X}^{m}\widehat{\boldsymbol{\beta}}_{\text{\normalfont FGLM}}) + \mu(\mathbf{X}^{m}\hat{\boldsymbol{\beta}}_{\text{\normalfont FGLM}})^{2}] \\
    & \leq a(\phi)^{-2}(y^{2} + 2|y| \mathbb{E}[|\mu(\mathbf{X}^{m}\widehat{\boldsymbol{\beta}}_{\text{\normalfont FGLM}})|] + \mathbb{E}[\mu(\mathbf{X}^{m}\widehat{\boldsymbol{\beta}}_{\text{\normalfont FGLM}})^{2}]) \\ & \leq
    a(\phi)^{-2}(y^{2} + 2|y|\mathbb{E}[|\mu(|\mathbf{X}^{m}\hat{\boldsymbol{\beta}}_{\text{\normalfont FGLM}}|)|] + \mathbb{E}[\mu(|\mathbf{X}^{m}\widehat{\boldsymbol{\beta}}_{\text{\normalfont FGLM}}|)^{2}]) \\ & \leq a(\phi)^{-2}(y^{2} + 2|y|\mathbb{E}[|\mu((\delta_{\max}(\mathbf{X}^{mT}\mathbf{X}^{m}) (\delta_{\max}(\Delta)/\delta_{\min}(\Delta)))^{1/2}\|\widehat{\boldsymbol{\beta}}_{\text{\normalfont GLM}}\|_{2})|] \\ & \quad + \mathbb{E}[\mu(|(\delta_{\max}(\mathbf{X}^{mT}\mathbf{X}^{m}) (\delta_{\max}(\Delta)/\delta_{\min}(\Delta)))^{1/2}\|\widehat{\boldsymbol{\beta}}_{\text{\normalfont GLM}}\|_{2}|)^{2}]) \\ & = a(\phi)^{-2}\mathbb{E}[(|y| + \mu(|(\delta_{\max}(\mathbf{X}^{mT}\mathbf{X}^{m}) (\delta_{\max}(\Delta)/\delta_{\min}(\Delta)))^{1/2}\|\widehat{\boldsymbol{\beta}}_{\text{\normalfont GLM}}\|_{2}|)^{2}]
\end{align} 
\end{proof}

Moreover, we have an inequality $\delta_{\max}(\mathbf{X}^{m{\scriptscriptstyle T}}\mathbf{X}^{m}) \leq \delta_{\max}(\mathbf{X}^{ky{\scriptscriptstyle T}}\mathbf{X}^{ky}) + \delta_{\max}(\mathbf{X}^{ly{\scriptscriptstyle T}}\mathbf{X}^{ly}) + \delta_{\max}(\mathbf{X}^{ky{\scriptscriptstyle T}}\mathbf{X}^{ly} + \mathbf{X}^{ly{\scriptscriptstyle T}}\mathbf{X}^{ky})$ that allows us to remove $\mathbf{X}^{m}$ and $\alpha$ which are unknown.

\subsection{Lemma \ref{lem:consistency}}
For any $k, l \in \mathcal{A}$ and $y \in \mathcal{Y}$, we have a chain of inequalities
\begin{equation}
    \mathbf{D}^{kly} = \frac{1}{n^{kly}} \sum_{(i,j) \in \mathcal{S}^{kly}} (\mathbf{x}_{i} - \mathbf{x}_{j})^{T}(\mathbf{x}_{i} - \mathbf{x}_{j}),
\end{equation}
where $\mathcal{S}^{kly} = \{(i,j):y_{i} = y_{j} = y, A_{i} = k, A_{j} = l\}$, which is a set of samples drawn from the joint distribution of $(\mathbf{X}^{ky}, \mathbf{X}^{ly})$. Thus, as $n^{k}, n^{l} \to \infty$, $\mathbf{D}^{kly} \to \mathbb{E}[(\mathbf{X}^{ky} - \mathbf{X}^{ly})^{2}] = \mathbb{V}[\mathbf{X}^{ky} - \mathbf{X}^{ly}] + \mathbb{E}[\mathbf{X}^{ky} - \mathbf{X}^{ly}]^{2}$. Therefore, 
\begin{equation}
    \mathbf{D} = \frac{2\lambda}{|\mathcal{Y}|K(K-1)}\sum_{k,l\in \mathcal{A}}\sum_{y \in \mathcal{Y}} \mathbf{D}^{kly} \to \frac{2\lambda}{|\mathcal{Y}|K(K-1)} \sum_{k,l\in \mathcal{A}}\sum_{y \in \mathcal{Y}}\left(\mathbb{V}[\mathbf{X}^{ky} - \mathbf{X}^{ly}] + \mathbb{E}[\mathbf{X}^{ky} - \mathbf{X}^{ly}]^{2} \right),
\end{equation}
as $\min_{k} n^{k} \to \infty$. 

\subsection{Theorem \ref{thm:consistency}}
For the proof, we assume the two regularity conditions given in \citetsupp{supp_zou2006adaptive}:
\begin{enumerate}
    \item The Fisher information matrix $\mathcal{I}(\boldsymbol{\beta}^{*}) = \mathbf{\Sigma}^{-1} =  \mathbb{E}[b''(\mathbf{X}\boldsymbol{\beta}^{*})\mathbf{X}^{T}\mathbf{X}]$ is finite and positive definite.
    \item There is a sufficiently large enough open set $\mathcal{U}$ that contains the true $\boldsymbol{\beta}$ such that $\forall \boldsymbol{\gamma} \in \mathcal{U}$,
    $$ |b'''(\mathbf{X}\boldsymbol{\gamma)}| \leq M(\mathbf{X}) < \infty $$ and $$\mathbb{E}[M(\mathbf{X})|\mathbf{X}_{j}\mathbf{X}_{k}\mathbf{X}_{l}|] < \infty$$ for all $1 \leq j, k, l \leq p$.
\end{enumerate}
Note that these regularity conditions are considered to be \emph{mild} \citepsupp{supp_zou2006adaptive}.

Now define
\begin{equation}\nonumber
    V_{n}(\mathbf{u}) = F\left( \boldsymbol{\beta} + \frac{\mathbf{u}}{\sqrt{n}} \right) - F(\boldsymbol{\beta}),
\end{equation}
where $F$ is the objective function for the F-GLM. Then $V_{n}(\mathbf{u})$ is minimized at $\mathbf{u}=\sqrt{n}\left(\widehat{\boldsymbol{\beta}}_{\text{FGLM}} - \boldsymbol{\beta}^{*}\right)$. Using the Taylor series expansion, we can rewrite $V_{n}(\mathbf{u})$ as
\begin{align}\nonumber
    V_{n}(\mathbf{u}) = & -\sum_{i=1}^{n}(y_{i} - b'(\mathbf{x}_{i}\boldsymbol{\beta}^{*}))\frac{\mathbf{x}_{i}\mathbf{u}}{\sqrt{n}} + \sum_{i=1}^{n}\frac{1}{2}b''(\mathbf{x}_{i}\boldsymbol{\beta}^{*})\mathbf{u}^{T}\frac{\mathbf{x}_{i}^{T}\mathbf{x}_{i}}{n}
    \mathbf{u} + n^{-\frac{3}{2}} \sum_{i=1}^{n}\frac{1}{6}b'''(\mathbf{x}_{i}\tilde{\boldsymbol{\beta}}^{*})(\mathbf{x}_{i}\mathbf{u})^{3} \\ \nonumber &+  \lambda_{n}\left[\left(\boldsymbol{\beta}^{*} + \frac{\mathbf{u}}{\sqrt{n}}\right)^{T}\mathbf{D}\left(\boldsymbol{\beta}^{*} + \frac{\mathbf{u}}{\sqrt{n}}\right) - \boldsymbol{\beta}^{*T}\mathbf{D}\boldsymbol{\beta}^{*}\right],
\end{align}
where $\tilde{\boldsymbol{\beta}}^{*}$ is between $\boldsymbol{\beta}^{*}$ and $\boldsymbol{\beta}^{*}+\frac{\mathbf{u}}{\sqrt{n}}$.
Given the regularity conditions the first three terms converges to
\begin{equation}\nonumber
 \mathbf{u}^{T}\mathbf{W} + \frac{1}{2}\mathbf{u}^{T}\mathbf{\Sigma}\mathbf{u}
\end{equation}
in distribution, where $\mathbf{W} \sim \mathcal{N}(\mathbf{0},\mathbf{\Sigma})$. On the other hand, for the last term, we have
\begin{equation}\nonumber
    \lambda_{n}\left[\frac{2\mathbf{u}^{T}\mathbf{D}\boldsymbol{\beta}}{\sqrt{n}} + \frac{\mathbf{u}^{T}\mathbf{D}\mathbf{u}}{n}\right] \overset{p}{\to} 2\lambda_{0}\mathbf{u}^{T}\Delta\boldsymbol{\beta}^{*} + \mathbf{0},
\end{equation}\nonumber
provided $\lambda_{n}/\sqrt{n} \to \lambda_{0} \geq 0$ and $\mathbf{D} \to \Delta$ as $\min_{k} n^{k} \to \infty$. Thus, we have 
\begin{equation}\nonumber
    V_{n}(\mathbf{u}) \overset{d}{\to} V(\mathbf{u}) = \mathbf{u}^{T}\mathbf{W} + \frac{1}{2}\mathbf{u}^{T}\mathbf{\Sigma}\mathbf{u} + 2\lambda_{0}\mathbf{u}^{T}\Delta\boldsymbol{\beta}^{*}.
\end{equation}
Therefore,
\begin{equation}\nonumber
    \sqrt{n}\left(\widehat{\boldsymbol{\beta}}_{\text{FGLM}} - \boldsymbol{\beta}^{*}\right) = \underset{\mathbf{u}}{\text{\normalfont argmin}} ~ \mathcal{V}_{n}(\mathbf{u}) \overset{d}{\to} \underset{\mathbf{u}}{\text{\normalfont argmin}} ~ \mathcal{V}(\mathbf{u}).
\end{equation}\nonumber
Note that $\mathcal{V}(\mathbf{u})$ is minimized at $\mathbf{u} = -\boldsymbol{\Sigma}^{-1}(2\lambda_{0}\Delta\boldsymbol{\beta}^{*} + \mathbf{W})$.

\newpage

\section{Discretization}\label{appendix:discretization}

\begin{figure}[!b]
    \centering
    \subfloat[Crime--Continuous--Race(3)]{\includegraphics[width=0.32\linewidth,keepaspectratio]{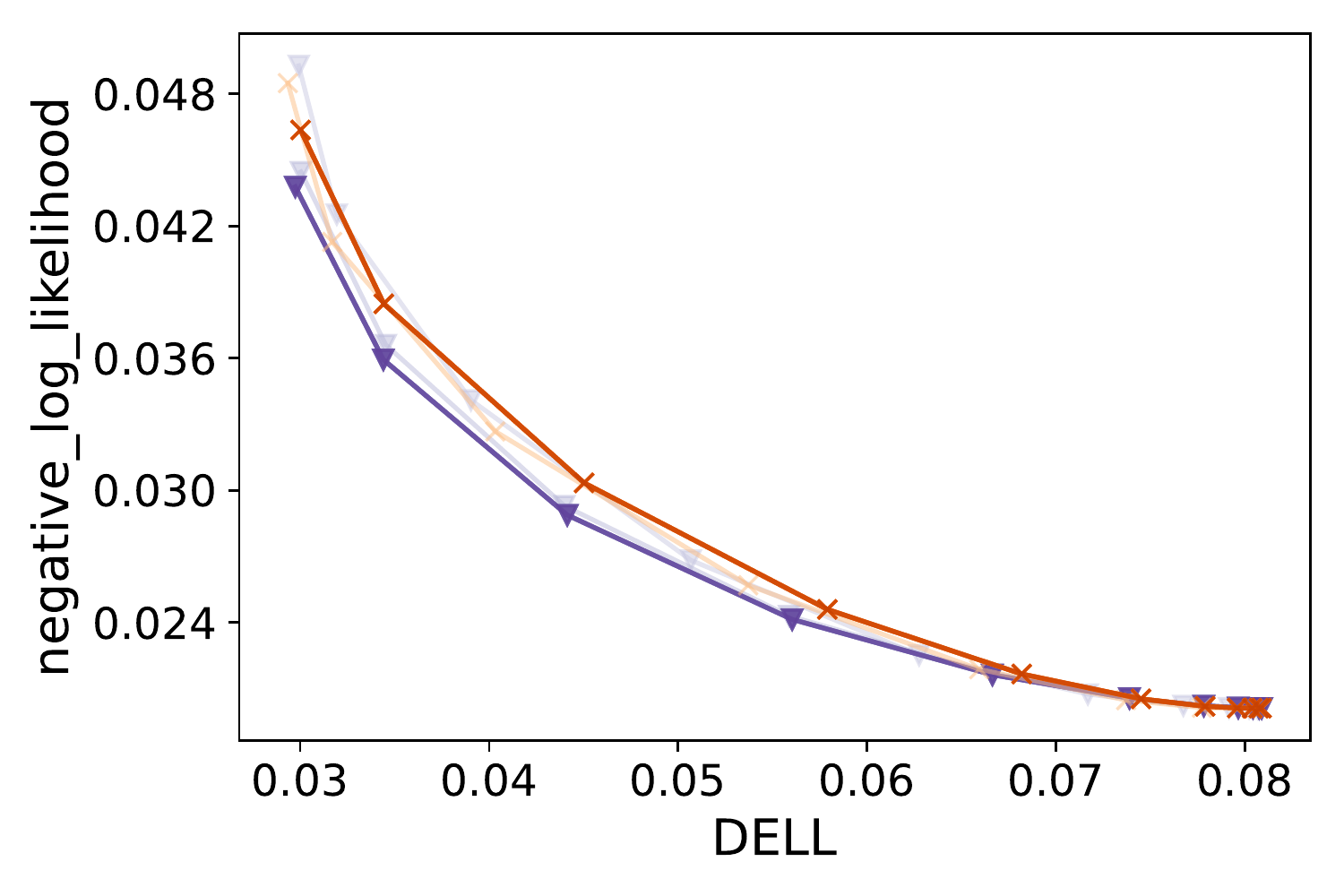}}
    \subfloat[Parkinsons--Continuous--Gender(2)]{\includegraphics[width=0.32\linewidth,keepaspectratio]{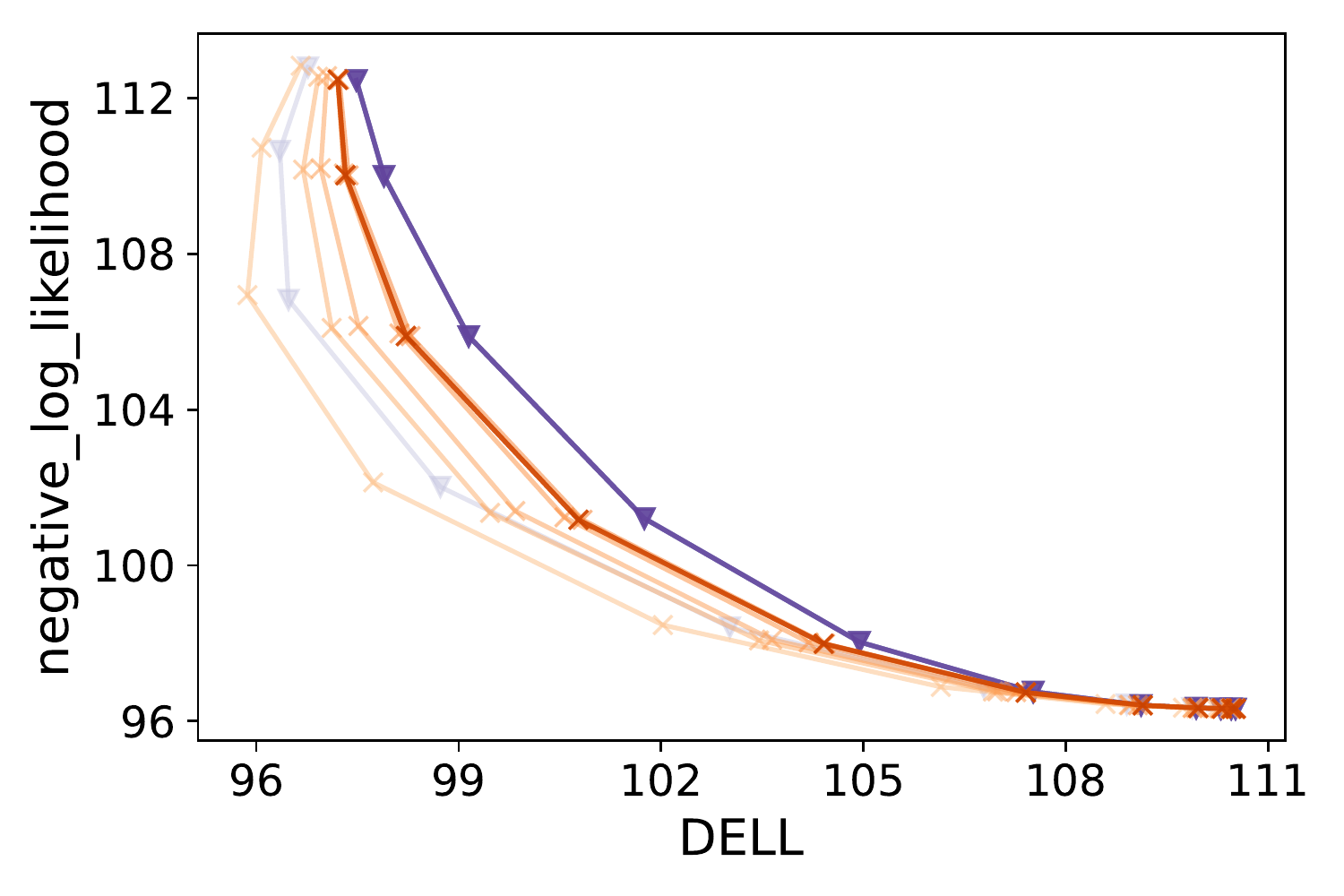}}
    \subfloat[Student--Continuous--Gender(2)]{\includegraphics[width=0.32\linewidth,keepaspectratio]{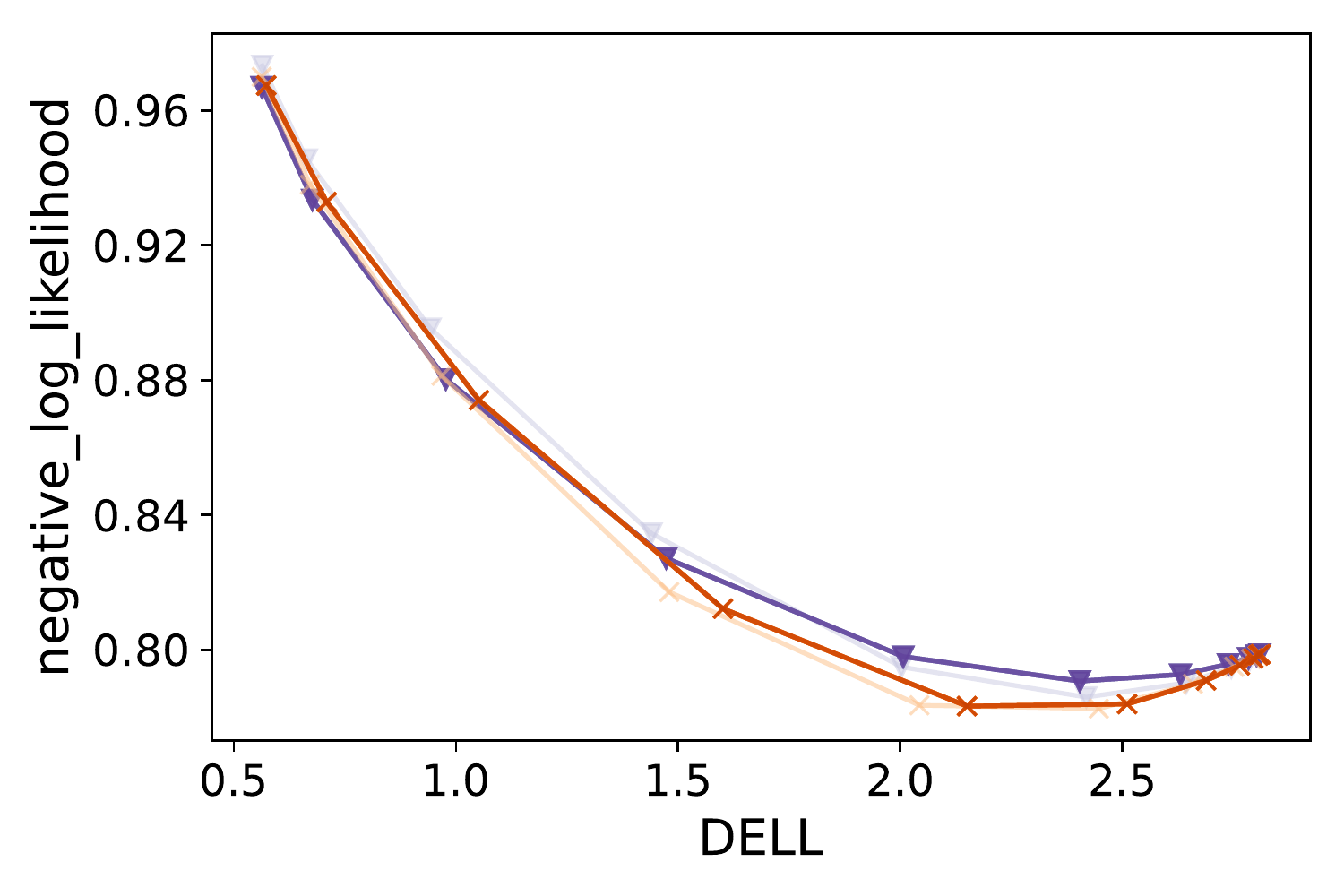}}
    \caption{Experimental results for equal counts ({\color{orange}orange X markers}) and equal lengths ({\color{Purple}purple triangle markers}) with various numbers of segments (ranging from 1 to 3 -- 25 depending on the dataset); a darker color means a greater number of segments. Our observation from the three datasets is that there is no straightforward relationship between the number of segments and better trade-off trajectories. However, the overall shape of the tradeoff trajectories between performance and disparity remains similar.}
    \label{fig:discretization}
\end{figure}

\subsection{Continuous Outcomes}
For continuous outcomes, i.e., the regression task, we investigated two different discretization strategies, which we refer to as \emph{equal counts} and \emph{equal lengths}. For the equal counts strategy, we construct segments $[\delta_{j},\delta_{j+1})$ which each include the same amount of samples, regardless of their group memberships, while the length of the segments are allowed to vary. In contrast, the equal lengths strategy makes each segment be the same length while the number of samples inside each segment can differ. Both strategies do not guarantee that each segment includes at least one sample from all groups which causes the penalty term to be undefined for some segments. To avoid this, we vary the number of segments starting from a large number and check if all the segments include at least one sample from all the groups. If not, we continually decrease the number of segments until we get a set of segments with each including at least one sample from all the groups. Algorithm 1 describes this discretization procedure for continuous outcomes. We  intuitively expect that a larger number of segments will provide better approximations. Thus, for the experiments, we set the max number of segments to 100. We found the equal counts based discretization results  (on average across 20 different splits of the training data) in 2.75, 25, and 8 segments for crime, parkinsons, and student datasets, respectively, while the equal lengths results  on average in 4.2, 7, and 5 segments, respectively.

We performed additional experiments to check if the F-GLM with continuous outcomes is sensitive to the number of segments. The results are summarized in Figure \ref{fig:discretization}. We  see that the choice of segments can change the performance-disparity trade-off trajectories; however, the overall patterns do not change much. 

We note that discretization of $y$ was not a primary focus of our study: further investigation is likely to be worthwhile, both from a theoretical perspective as well as investigating other algorithmic discretization strategies.

\subsection{Count Outcomes}
For count outcomes, i.e., the Poisson regression task, we find the smallest and the largest integers $L$ and $U$ satisfying $\{(\mathbf{x}_{i},y_{i},A_{i}):y_{i} = y, A_{i}=k)\} \neq \varnothing$ for all $k \in \mathcal{A}$ and $L < y < U$. Then, we set $y_{i} = \min\{y_{i}, L\}$ and $y_{i} = \max\{y_{i}, U\}$ for all $i$.

\section{Computational Complexity}\label{appendix:complexity}

\subsection{Preparing D}
Since $\mathbf{x}_{i}, \mathbf{x}_{j} \in \mathbb{R}^{1 \times p}$, the complexity of computing $(\mathbf{x}_{i} - \mathbf{x}_{j})^{T}(\mathbf{x}_{i} - \mathbf{x}_{j})$ is $\mathcal{O}(p^{2})$. Thus, the complexity to compute $\mathbf{D}^{kly}$ is $\mathcal{O}(n^{kly}p^{2})$. Moreover, we have $n^{kly} \leq n(n-1)/2$. Thus, the complexity of preparing $\mathbf{D}$ is $\mathcal{O}(n^{2}p^{2}K^{2}|\mathcal{Y}|)$.

\subsection{Newton-Raphson Iteration}
Computing the gradient consists of two matrix multiplication operations with complexities $\mathcal{O}(np)$ and $\mathcal{O}(p^{2})$. Moreover, the complexity of computing the Hessian is $\mathcal{O}(np^{2})$ provided $\mathbf{W}$ is a diagonal matrix. Also, inverting the Hessian is $\mathcal{O}(p^{3})$ since the Hessian is a dense $p \times p$ matrix. Therefore, the per-iteration complexity of the Newton-Raphson algorithm for the F-GLM is $\mathcal{O}(np^{2} + p^{3})$. \vspace{-1em}

\section{Datasets and Preprocessing Details}\label{sec:dataset}

\subsection{Adult Dataset}

The Adult dataset contains the \emph{income} records for 45,222 individuals from the 1994 census database, where the outcome \emph{income} is dichotomized into a binary variable (\emph{below \$50K in income versus above \$50K in income}). Each record is composed of the outcome and 14 predictors, among which 8 are categorical and 6 are continuous. 

The variable \emph{gender} was considered a sensitive attribute in this dataset (Male: 67\%, Female = 33\%). Other covariates of interest included age, professional occupation, education level, marital and relationship status, capital gains and losses, ethnicity, and country of origin.

Note that due to missing data the total of 48,842 instances was decreased to 45,222 after filtering out incomplete records.

\subsection{Arrhythmia Dataset}

The Arrhythmia dataset contains the \emph{presence of arrhythmia status} for 418 individuals, which we treat as a binary outcome (\emph{presence vs absent}). Each record is composed of the outcome and 80 predictors of interest.

The variable \emph{sex} was considered a sensitive attribute in this dataset (Male: 53\%, Female = 47\%). Other attributes included variables such as age, height, weight, QRS duration among others.

Note that the original sample size of 452 was reduced down to 418 records due to the removal to all samples with missing data and 2 individuals with nonsensical height values.

\subsection{COMPAS Dataset}

The COMPAS dataset contains records for 6,172 criminal defendants across the United States of America, where we use whether each defendant became a recidivist within 2 years of the first offense as outcome (\emph{was a recidivist within 2 years vs was not}). Each defendant has 10 predictor variables, including \emph{gender} and \emph{race/ethnicity}. The latter variables were considered as sensitive features and have imbalanced distributions across defendants. (Sex: Female: 19\%, Male: 81\%, we set Female as the baseline category. Race: Caucasian: 34\%, African-American: 52\%, Hispanic: 8\%, Other: 6\% which contain Asian and Native-American ethnicities, we set Caucasian as the baseline).

The dataset further contains 1 categorical variable, degree of the charge  (F: 0.36, M: 0.64, we set F as the baseline), 5 continuous variables (age in years with mean = 34.5, number of priors counts with mean = 3.2, juvenile felony counts with mean = 0.06, juvenile misconduct counts with mean = 0.1, juvenile other category counts with mean = 0.11) and 2 time variables (time in jail (days) with mean = 15, time in custody (days) with mean = 35).

Note that the original dataset contained 7,214 records. However to ensure data quality we removed records that had a charge date of a defendant's COMPAS score crime that was not within 30 days from when the person was arrested, under the assumption that this is not the correct offense for this record. We further removed records that had missing fields for recidivism or the degree of the charge of interest, resulting in a total sample size of 6,172 with complete observation data.

\subsection{Drug Consumption Dataset}
The drug consumption dataset contains records for 1,885 respondents and each respondent has 12 predictor variables, including gender and race/ethnicity. Participants were questioned concerning their use of 18 legal and illegal drugs and answered with one of the following seven categories: \emph{never used, used over a decade ago, used in the last decade, used in last year, used in last month, used in last week, and used in last day}. 

Thus, we can define three different tasks based on the participant's response: binary classification of classifying \emph{never used} versus \emph{the others (ever used)} and both ordered and unordered multiclass classification classifying the original seven categories of the outcomes. For the binary classification we used \emph{methadone}.
Further, we choose to use \emph{methamphetamine} as a response variable,  one of the most addictive and widely used drugs. 

The dataset consists of 4 categorical and 8 continuous predictors. All the continuous predictors were standardized to have zero mean and unit variance a priori, by the data provider, so we did not apply any transformations. We applied one-hot encoding to all the categorical predictors.

There are two sensitive attributes: \emph{gender} (Female: 50\%, Male: 50\%) and \emph{race/ethnicity} (Asian: 1.38\%, Black: 1.75\%, Mixed-Black/Asian 0.16\%, Mixed-White/Asian: 1.06\%, Mixed-White/Black: 1.06\%, Other 3.34\%, and White: 91.25\%). Because of the severe imbalance in race/ethnicity, we merged all the non-White race/ethnicity into a single non-White category. 

\subsection{German Credit Dataset}

The German Credit dataset contains records for 1000 individuals, describing the level of risk for their \emph{credit}, which we treated as binary (\emph{good vs bad}). Each record is composed of the outcome and 21 predictor variables among which 14 were categorical and 7 were continuous. 

The variable \emph{sex} was considered a sensitive attribute in this dataset (Male:  69\%,  Female = 31\%). Other covariates of interest included the status of checking account, credit history, saving in accounts and bonds, employment status, property ownership, disposable income and age among others.

There were no reduction  from the original sample size due to missingness, since all records had complete information available.

\subsection{Communities and Crime Dataset}

The Communities and Crime dataset contains the criminal records for 1,994 communities in the United States of America from socio-economic data from 1990 US Census and law enforcement data from the 1990 US LEMAS survey. The outcome of interest is the \emph{violent crimes per population} (continuous). Each record is composed of the outcome and 31 predictors, among which is the sensitive attribute \emph{race} (stratified between Asian: 4\%, Black: 11\%, Hispanic: 6\%, White: 79\%). Other predictors included age, income, urbanism, police budget among others. 

Note that the original sample size was significantly reduced due to high levels of data missingness across predictors. Moreover, the variables, state, county, community, community name and the fold for cross-validation were removed as they serve no purpose for prediction.

\subsection{Law School Admission Council Dataset}

The Law School Admission Council (LSAC) dataset contains records for 22,407 Law School students gathered by a National Longitudinal Study primarily undertaken in response to  {reports suggesting bar passage rates were lower among examinees of color}. The outcome of interest is the \emph{Grade Point Average} of students during Law School which is a continuous variable. We consider both the \emph{race/ethnicity} and the \emph{gender} of students to be sensitive factors. (Gender: Female = 44\%, Male = 56\%, we set Female as the baseline category. Race: White/Caucasian: 88.2\%, African-American: 6\%, Asian: 4\%, we group all other ethnicities under Other: 1.8\%. We set White/Caucasian as the baseline category). We further included two continuous variables as predictors, namely the LSAT score of each student (median 37, IQR 33-41) and the university GPA of students (median 3.2, IQR 3-3.5) and a single categorical variable specifying if students are participating in the academic program at full or part time (part time: 7.7\%, with full time being the baseline category).

After removing observations containing missing data, the final dataset contained records for 22,368 students with complete information.

\subsection{Parkinson's Telemonitoring Dataset}

The Parkinson's Telemonitoring dataset contains the record of 42 patients with early-stage Parkinson's disease recruited through a six-month trial of telemonitoring for remote symptom progression monitoring. The dataset includes 5,875 instances of data observations across all patients with outcome \emph{Unified Parkinson's Disease Rating Scale (UPDR) score} which is a \emph{continuous} value evaluating various aspects of Parkinson's disease. 

The sensitive attribute of the dataset was set to be the \emph{sex} of patients (Female: 33\%, Male: 67\%). We further included 16 predictors. All records had complete information and thus there was no sample size reduction due to missingness.

\subsection{Student Performance Dataset}

The Student dataset contains records for 382 students for 2 separate classes (mathematics and Portuguese) over 3 trimesters, we use the grades students received in the third trimester  (numeric score between 0 and 20). We separate this dataset into two separate datasets, with mathematics scores and Portuguese scores respectively. The sensitive variable in this dataset is the \emph{sex} of students. (Sex: Female: 48\%, Male: 52\%, we set Female as the baseline). The dataset share the same set of 25 further predictors, among which 3 are continuous and 21 are categorical. Note that some categorical variables were further collapsed to eliminate smaller categories (such as  the education level of the mother and father, the travel and study times, family relationship statuses and free time levels of students).

\subsection{Health \& Retirement Survey Dataset}

The University of Michigan Health and Retirement Study (HRS) longitudinal dataset, recording  survey responses on health and aging. The dataset contains 12,744 instances. The \emph{number of dependencies in daily activities} was set as the target outcome as a \emph{count variable}. This is encoded as the \emph{score} in the dataset, ranging from 0 to 10. 

The \emph{ethnicity} of patients was set as the sensitive attribute of the dataset (Afro-American: 15\%, Hispanic: 10\%, Other: 2\% and White: 73\%). The 22 predictor variables included gender, marital status, age, education and net worth among others. Note that large portions of the data entries were missing and we considered only complete cases.

\subsection{Obesity Dataset}

The Obesity dataset contains the health records of 2,111 individuals with their assessed level of \emph{obesity}. \emph{Obesity} is treated a multilevel outcome with levels: \emph{Insufficient weight} (13\%), \emph{normal weight} (13\%), \emph{overweight level 1} (14\%), \emph{overweight level 2} (14\%), \emph{obesity type I} (17\%), and \emph{obesity type II/III} (29\%). 

The \emph{gender} of patients was set as the sensitive attribute of the dataset (Female: 49\%, Male: 51\%), and we further included 14 predictor variables such as age, family history and smoking status. Note that there were no missing data in this dataset, all individuals had complete information.

\section{Experimental Setting Details}
\subsection{Implementation of Competitive Methods}
For the fair constraints \citepsupp{supp_zafar2017fc} and disparate mistreatment \citepsupp{supp_zafar2017dm} methods, we used the Python code provided by the authors\footnote{\url{https://github.com/mbilalzafar/fair-classification}}. For the squared difference penalty \citepsupp{supp_bechavod2017penalizing}, and the group and individual fairness convex penalty \citepsupp{supp_berk2017convex}, we adapted our Newton-Raphson method (all three methods can be expressed in the same form as that of F-GLM using a different $\mathbf{D}$.) Note that both papers suggested using CVXPY \citepsupp{supp_diamond2016cvxpy,supp_agrawal2018rewriting}, which is an off-the-shelf optimization solver, for solving their problems. We also implemented the HSIC penalty \citepsupp{supp_perez2017fair} which can easily be solved because the problem has a closed form solution for the linear case. For the linear FERM method, we used its Python implementation provided by the authors\footnote{\url{https://github.com/jmikko/fair\_ERM}}. We used the \texttt{fairlearn}\footnote{\url{https://fairlearn.org/}} Python package for the reductions approach with statistical parity or bounded group loss \citepsupp{supp_pmlr-v80-agarwal18a,supp_pmlr-v97-agarwal19d}. Specifically, we used grid search (\texttt{GridSearch} function) instead of the exponentiated gradient method to get a single model. For the general FERM \citepsupp{supp_oneto2020general}, since we did not find any available code online, we implemented the method with CVXPY. Note that the authors suggested using CPLEX\footnote{\url{https://www.ibm.com/analytics/cplex-optimizer}}, which is an off-the-shelf optimization solver.

\subsection{Hyperparameters}
We presented the range of the hyperparameters used for our experiments in Table \ref{table:hyperparameters}. For some datasets, we used slightly different range of hyperparameters for some methods. Details can be found in our code. 

\begin{table}[ht]
\setlength{\tabcolsep}{8pt}
\renewcommand{\arraystretch}{1.08}
\caption{The range of the hyperparameters that control accuracy-fairness trade-off used for the experiments}
\label{table:hyperparameters}
\centering
    \begin{tabular}{lccc}
    \hline\hline
    Methods                  & hyperparameter & min value & max value \\
    \hline
    Generalized Linear Model &  -             & -         &   -        \\
    Linear SVM$^{*}$         &  -             & -         &   -        \\
    Fair Constraints         & $c$            & $10^{-3}$         &  20        \\
    Disparate Mistreatment   & $c$            & $10^{-3}$         &  20        \\
    Squared Difference       & $\lambda$      & $10^{-3}$ &  10        \\
    Group Fairness           & $\lambda$      & $10^{-3}$ &  10        \\
    Individual Fairness      & $\lambda$      & $10^{-3}$ &  10        \\
    HSIC Penalty             & $\mu$          & $10^{-3}$ &   5        \\
    FERM$^{*}$               & $\epsilon^{**}$& 0         &   0        \\
    GFERM                    & $\epsilon$     & 0         & 100        \\
    Statistical Parity       & $w^{***}$      & 0         &   1        \\
    Bounded Group Loss       & $w^{***}$      & 0         &   1        \\
    Fair GLM                 & $\lambda$      & $10^{-3}$ &  10        \\
    \hline\hline
    \end{tabular}
    \\
    {\raggedright \small
    \hspace{8em}
    *We used $\nu \in \{0.05, 0.01\}$ for SVM and FERM. \\
    \hspace{8em}
    **The code provided by the authors allows only $\epsilon=0$.  \\
    \hspace{8em}
    ***We varied \texttt{constraint\_weight} parameter of \texttt{GridSearch} function. \hfill}
\end{table}

\section{Additional Experimental Results}\label{appendix:additional-results}
We provide additional plots that summarize the experimental results here. Figure \ref{fig:results3} shows the overall performance and disparity in \emph{mean squared error} (for binary and multiclass classification)--also referred to as \emph{Brier score}). The overall patterns are quite similar to those in Figure 1 in the main paper, supporting the conclusion that the F-GLM can produce favorable performance-disparity trajectories. For regression tasks, the negative log-likelihoods and mean squared errors are equivalent. 

Figure \ref{fig:results4} shows the performance and disparity measured for miscellaneous task-specific metrics; AUROC for binary classification, mean absolute error (MAE) for regression, misclassification rate for multiclass classification, and MAE for Poisson regression. We note that AUROC is the only higher-the-better metric so we plotted 1-AUROC instead to be consistent with other metrics. Also, AUROC cannot be calculated for each class because it is a concordance score, so it was not separately calculated for each class. Here the results are more mixed than in Figure \ref{fig:nll-dell}, in Figure \ref{fig:nll-deo} or in Figure \ref{fig:results3} (in terms of comparing F-GLM with other methods), but this is to be expected since performance criteria such as AUROC are not necessarily highly correlated with the other performance metrics.

\begin{figure*}[p]
    \centering
    \subfloat[Adult--Binary--Gender(2)]{\includegraphics[width=0.32\linewidth,keepaspectratio]{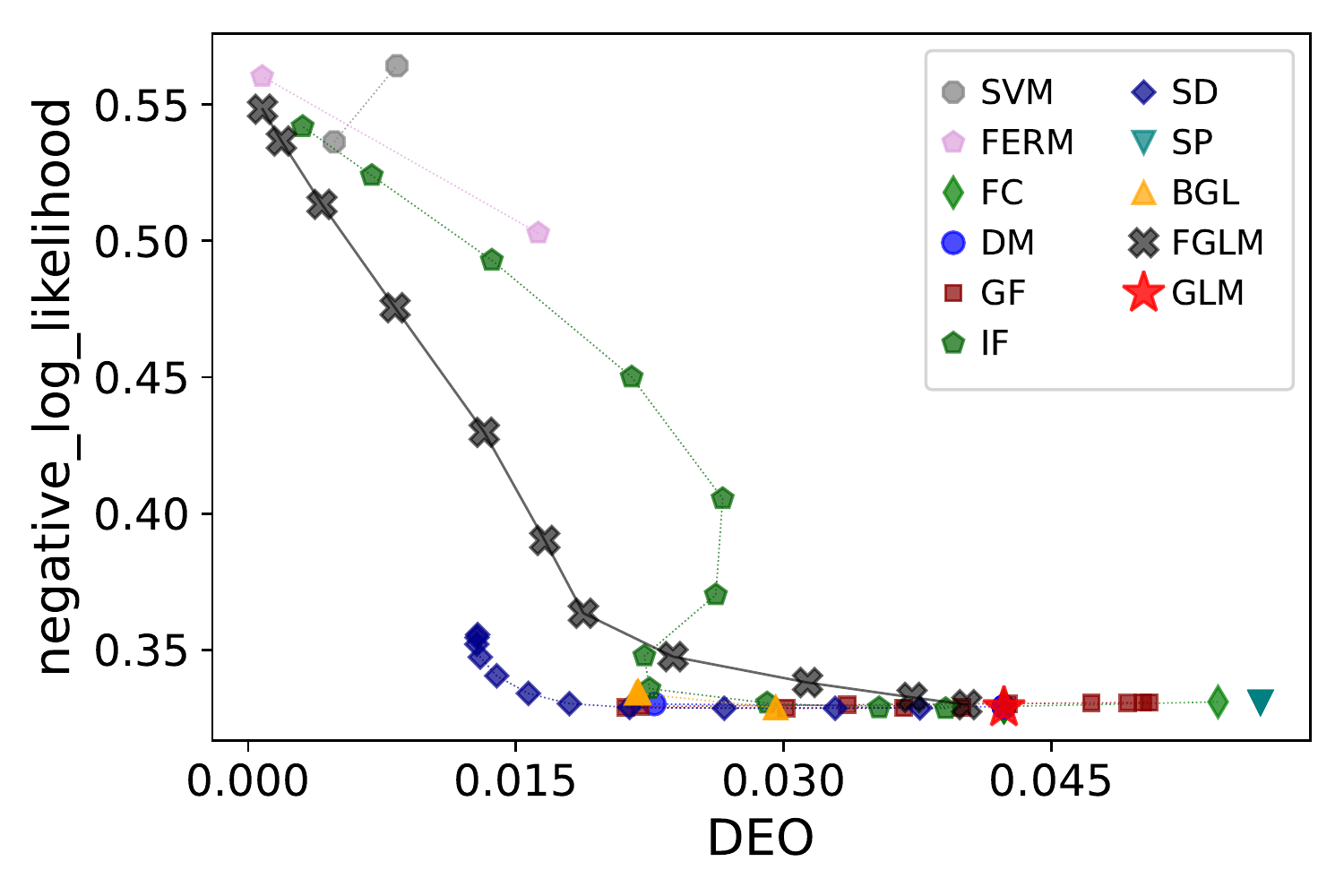}}
    \subfloat[Arrhythmia--Binary--Gender(2)]{\includegraphics[width=0.32\linewidth,keepaspectratio]{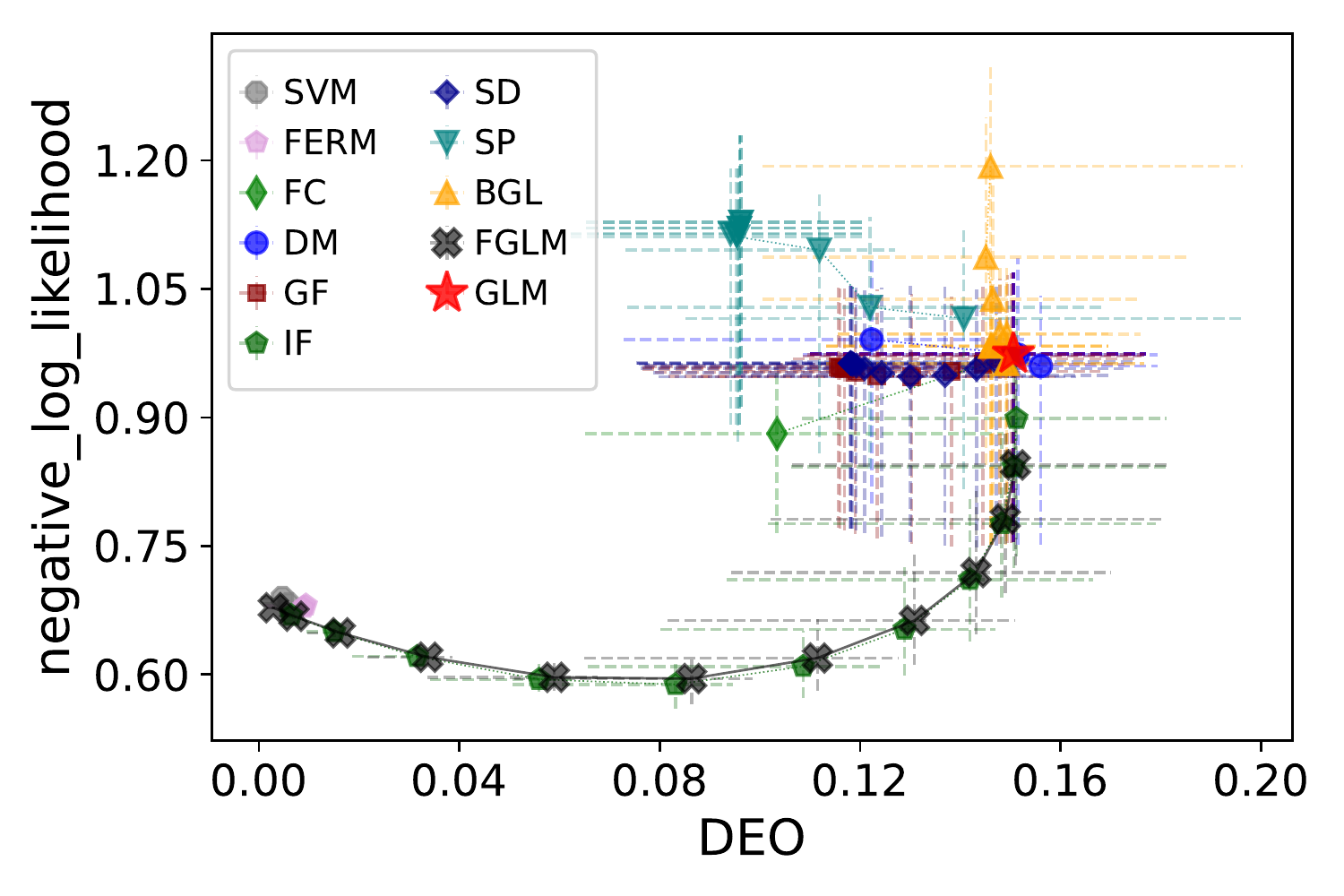}}
    \subfloat[COMPAS--Binary--Race(4)]{\includegraphics[width=0.32\linewidth,keepaspectratio]{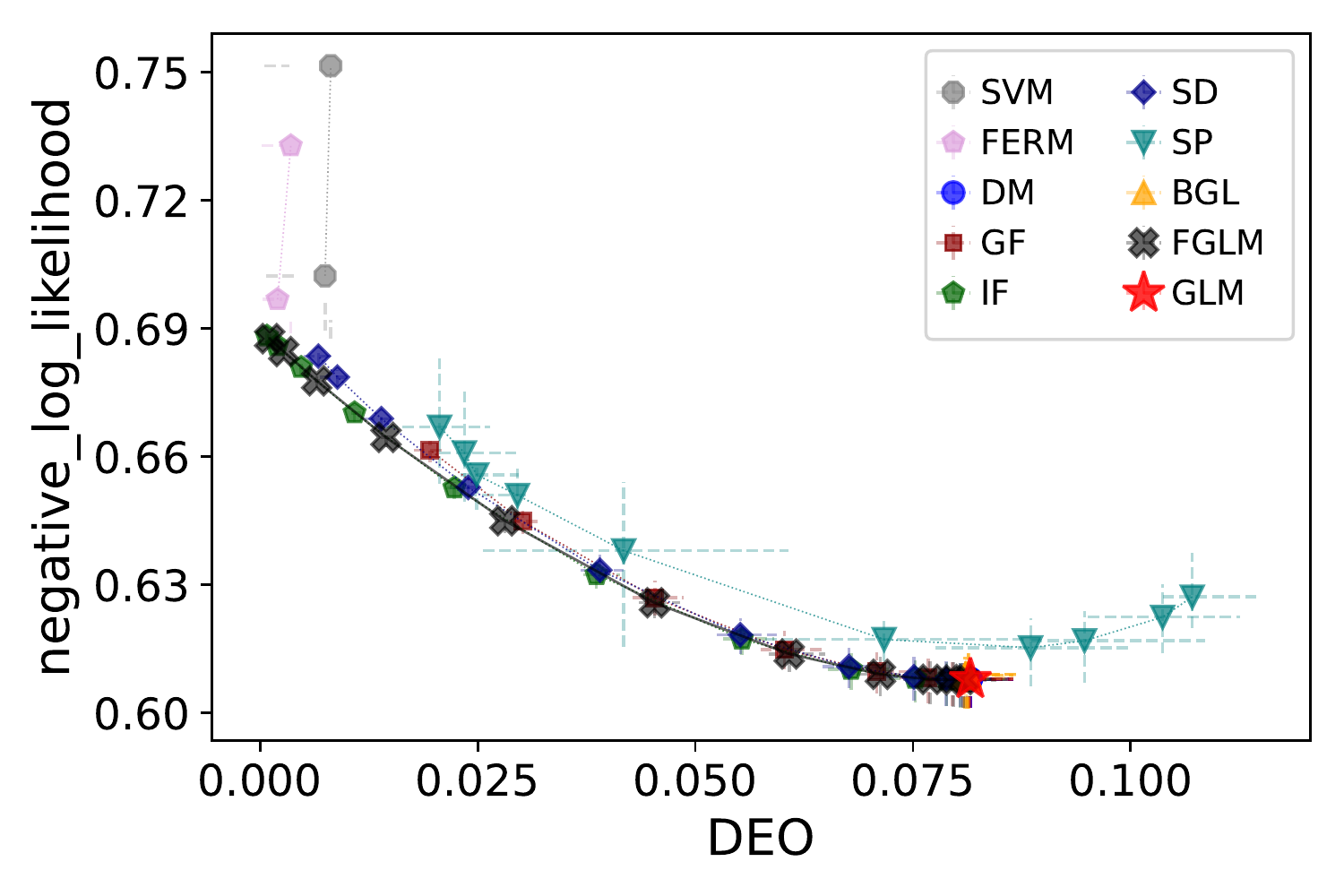}}
    \\
    \subfloat[Drug--Binary--Race(2)]{\includegraphics[width=0.32\linewidth,keepaspectratio]{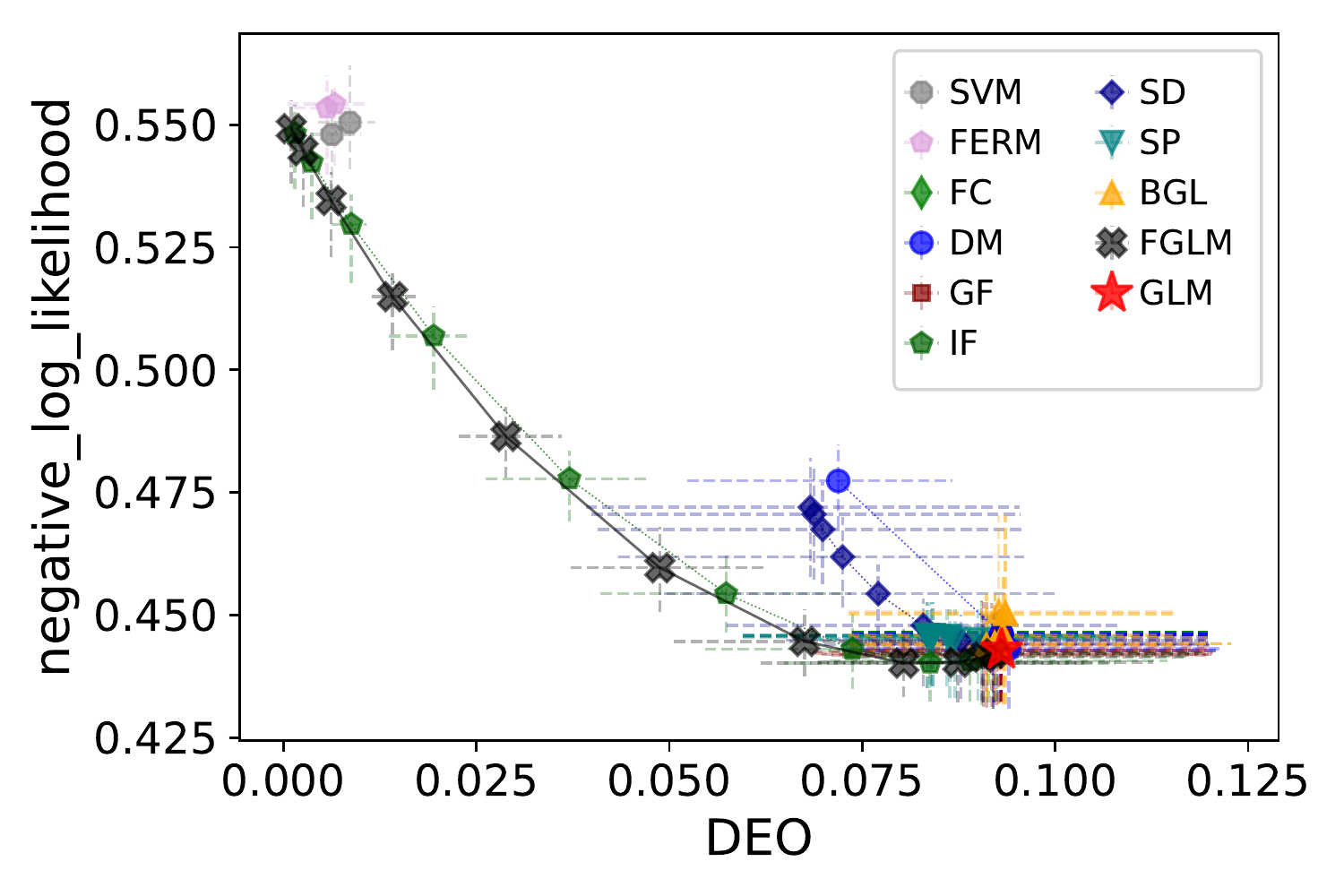}}
    \subfloat[German--Binary--Race(2)]{\includegraphics[width=0.32\linewidth,keepaspectratio]{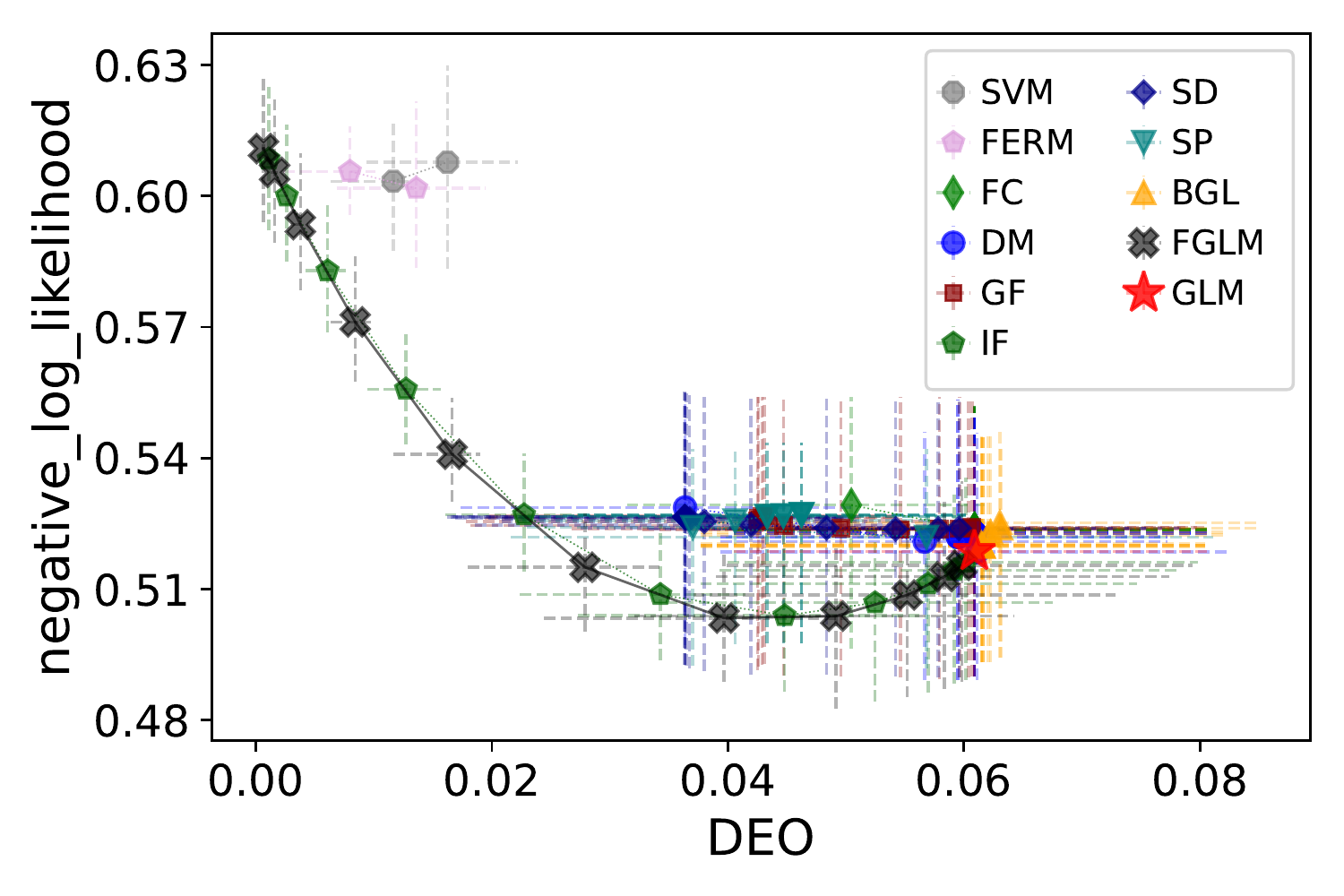}}
    \subfloat[Crime-Cont-Race(3)]{\includegraphics[width=0.32\linewidth,keepaspectratio]{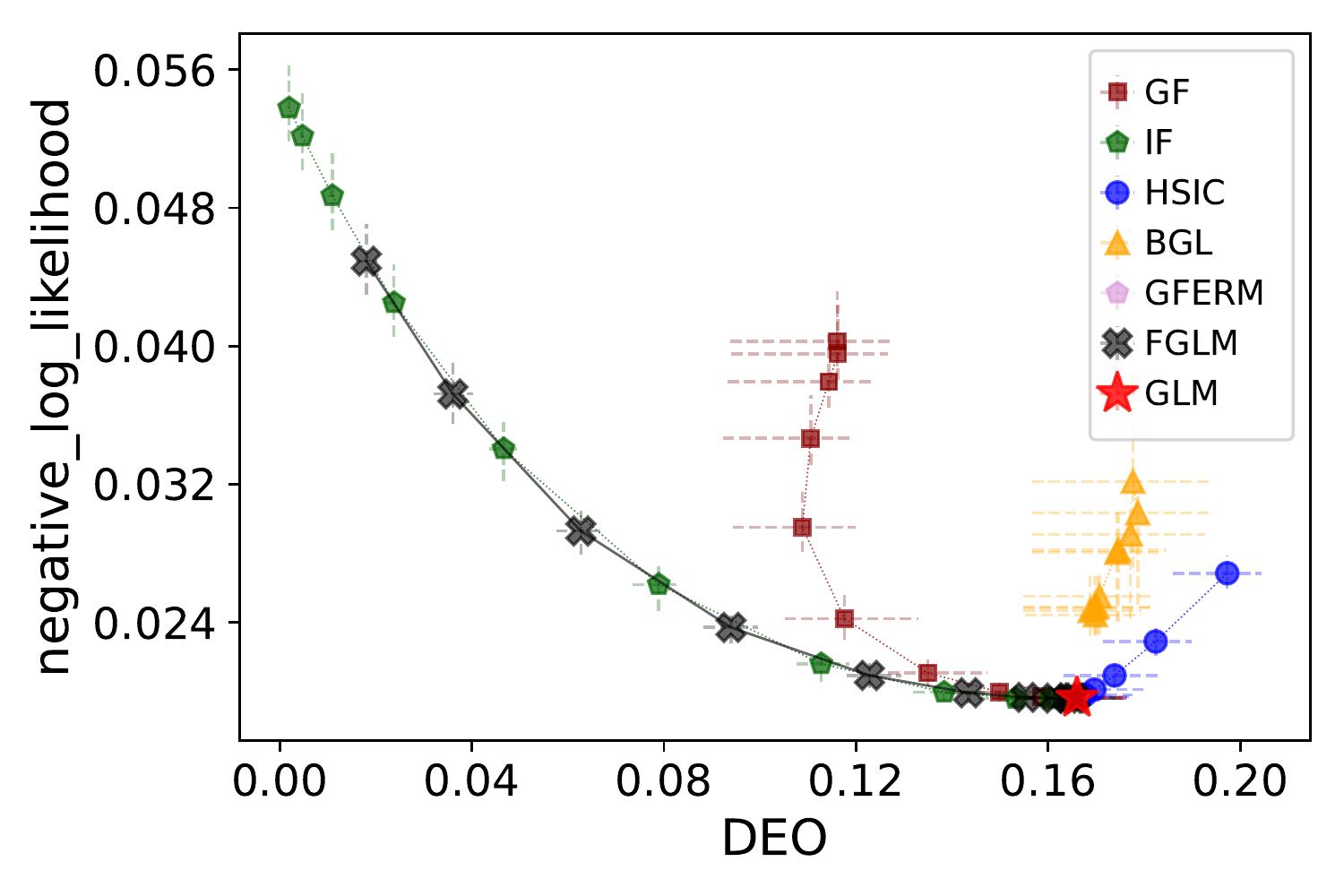}}
    \\
    \subfloat[LSAC--Cont--Race(5)]{\includegraphics[width=0.32\linewidth,keepaspectratio]{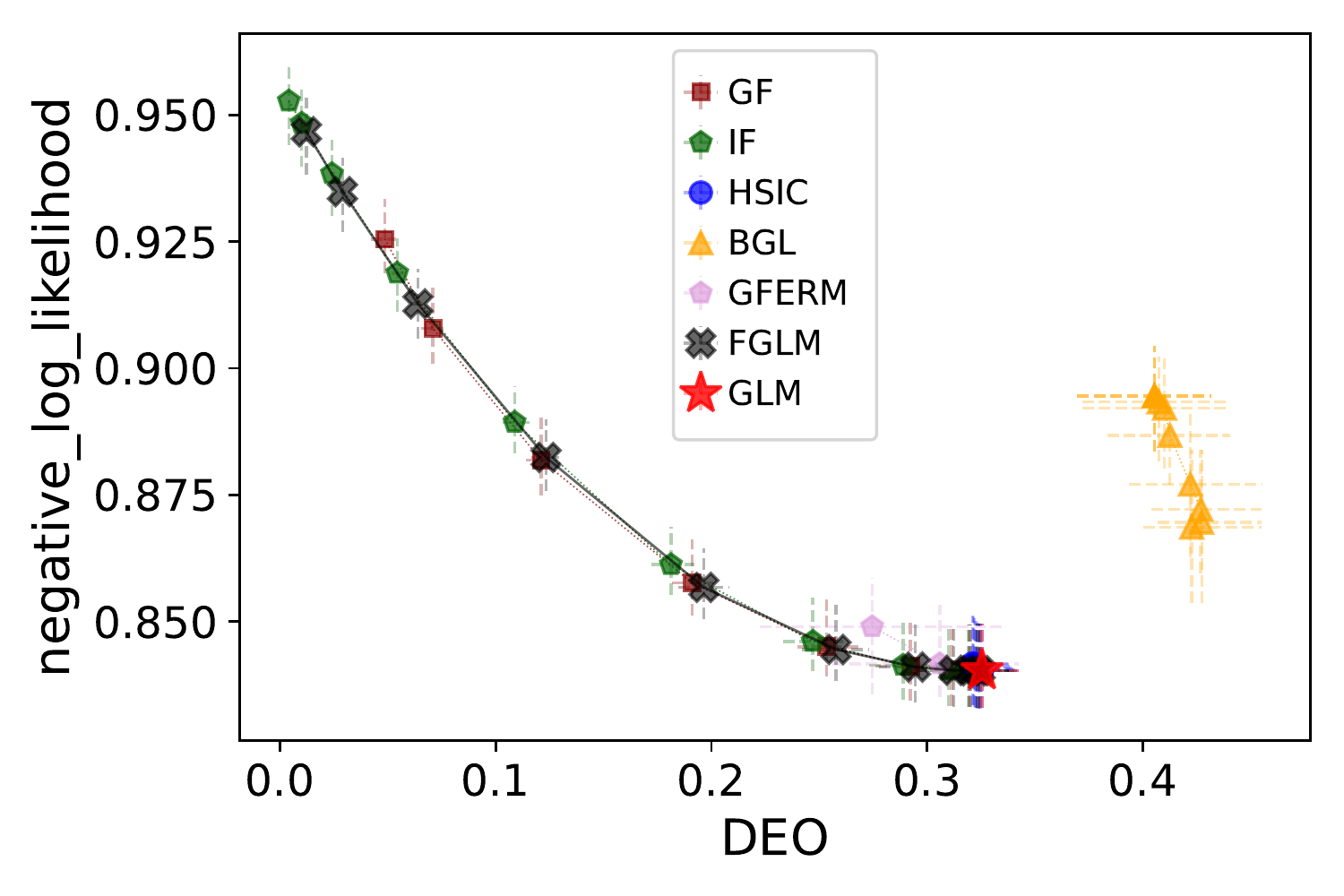}}
    \subfloat[Parkinsons--Cont--Gender(2)]{\includegraphics[width=0.32\linewidth,keepaspectratio]{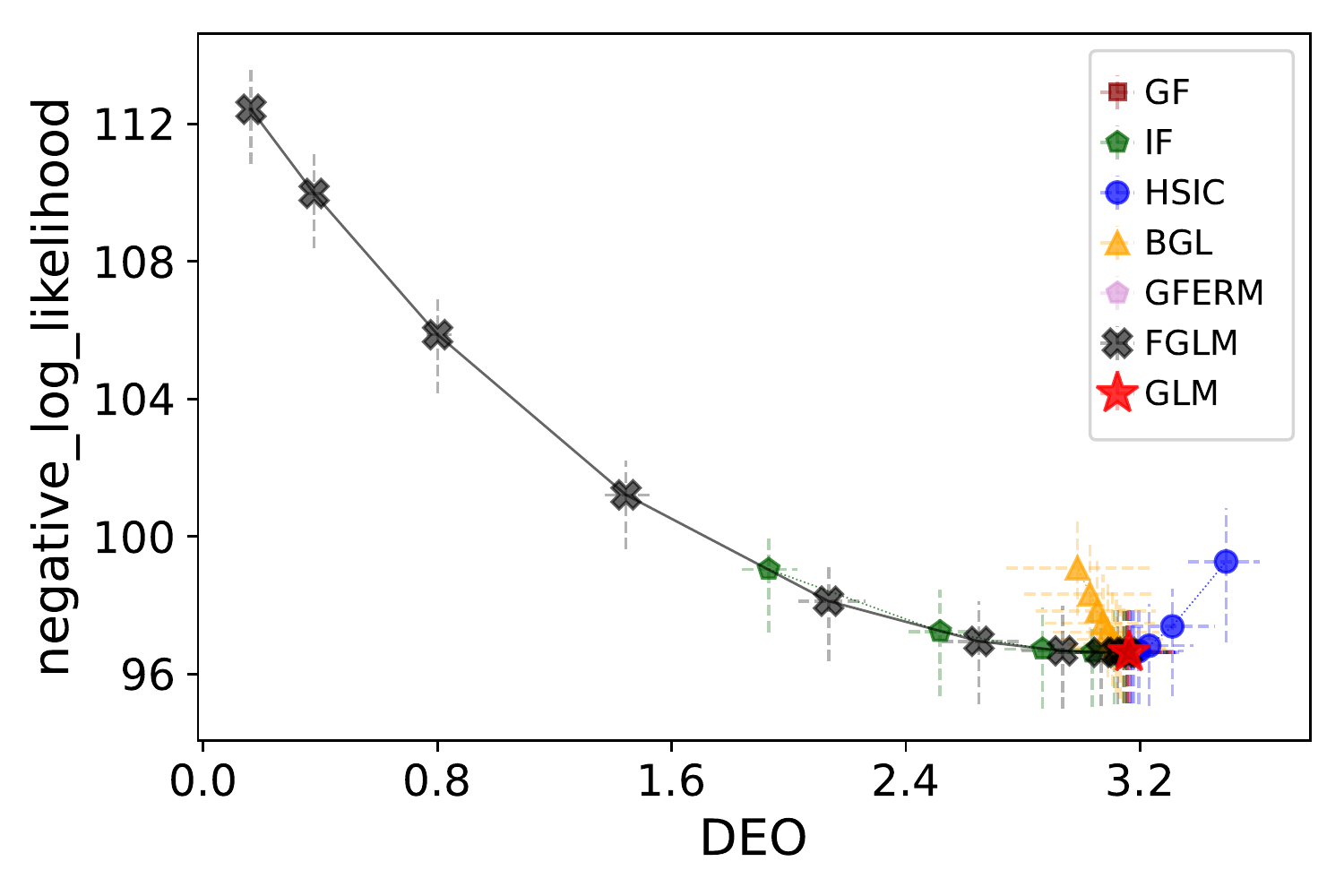}}
    \subfloat[Student--Cont--Gender(2)]{\includegraphics[width=0.32\linewidth,keepaspectratio]{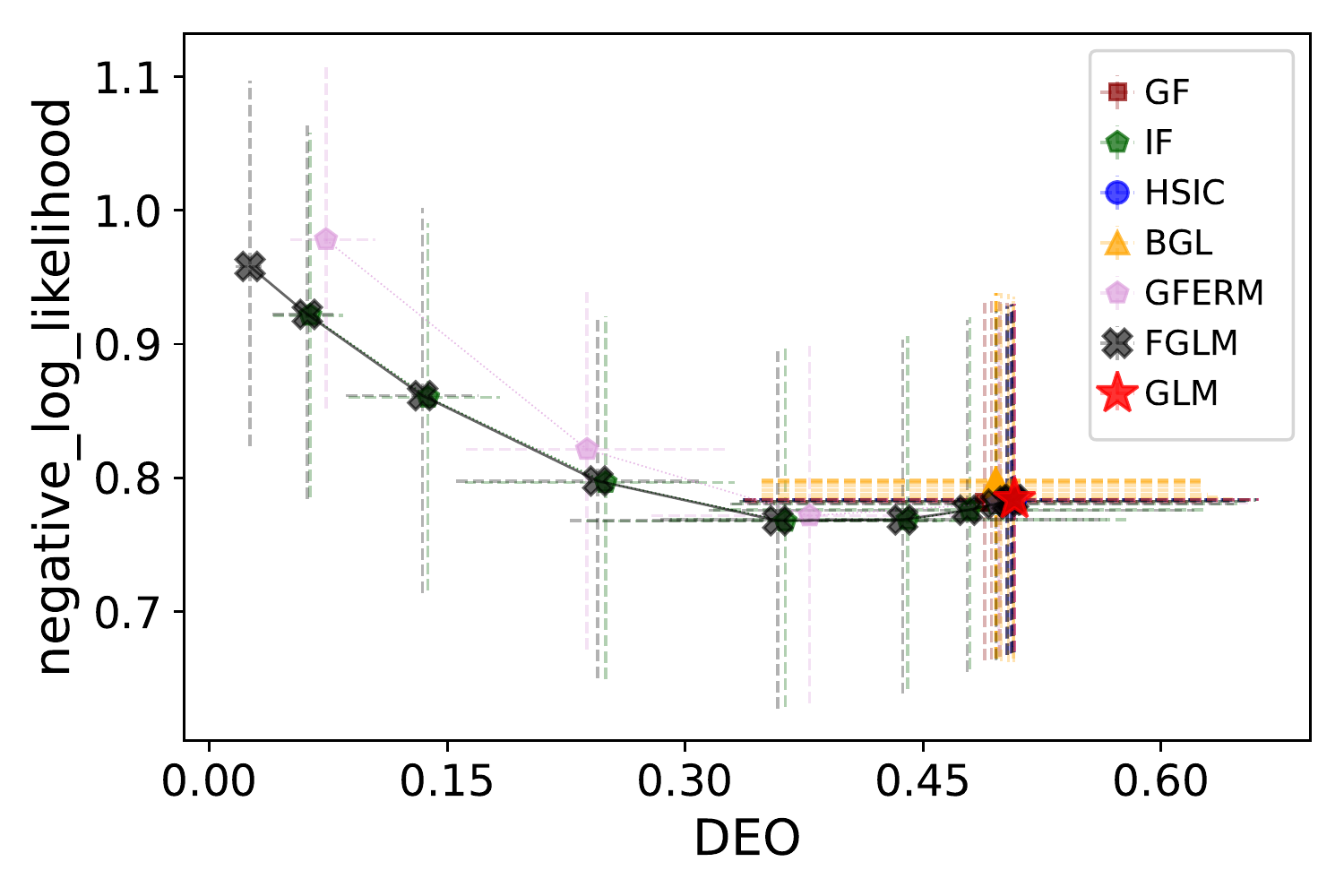}}
     \\
     \subfloat[Drug--Multi--Race(2)]{\includegraphics[width=0.32\linewidth,keepaspectratio]{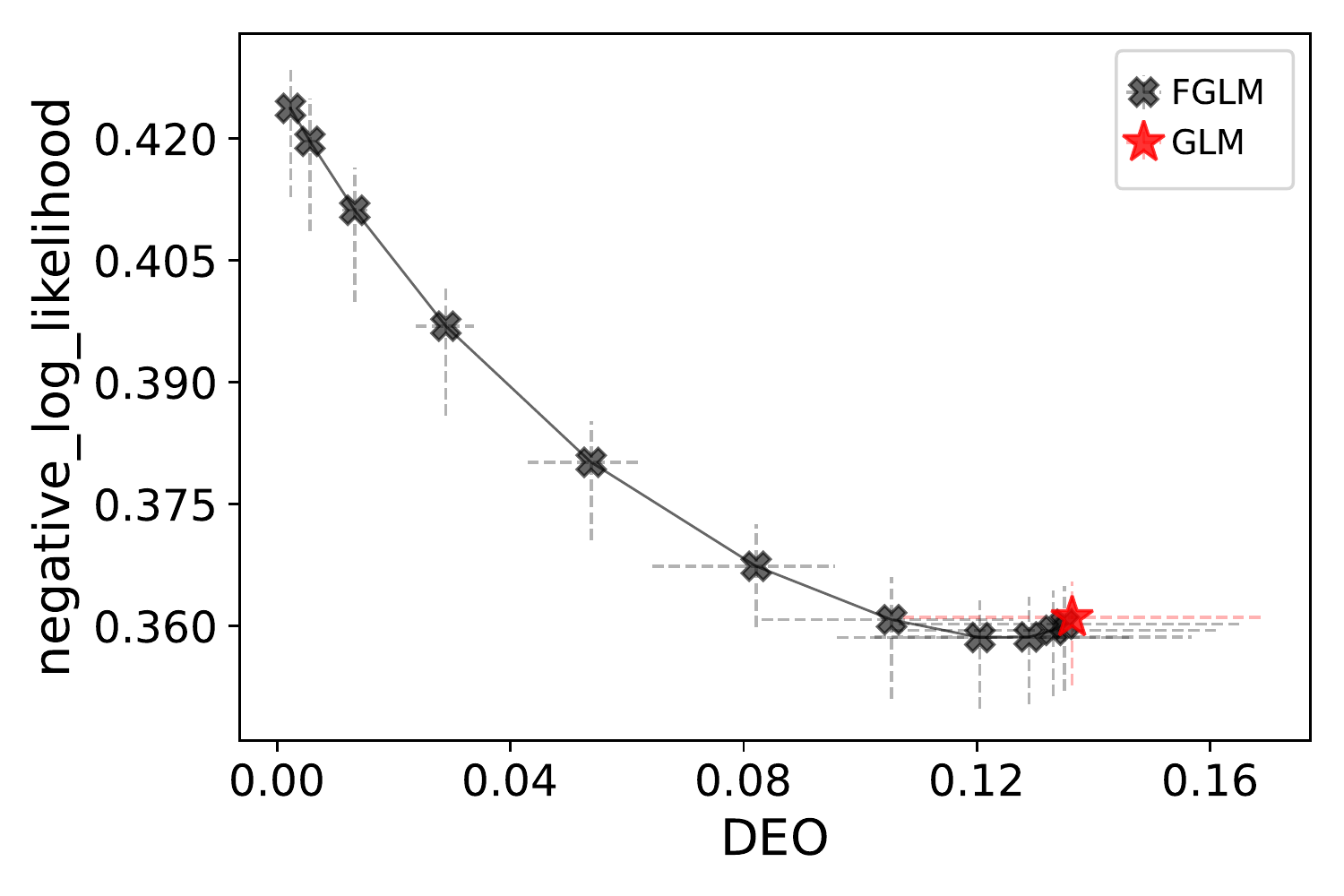}}
     \subfloat[Obesity--Multi--Gender(2)]{\includegraphics[width=0.32\linewidth,keepaspectratio]{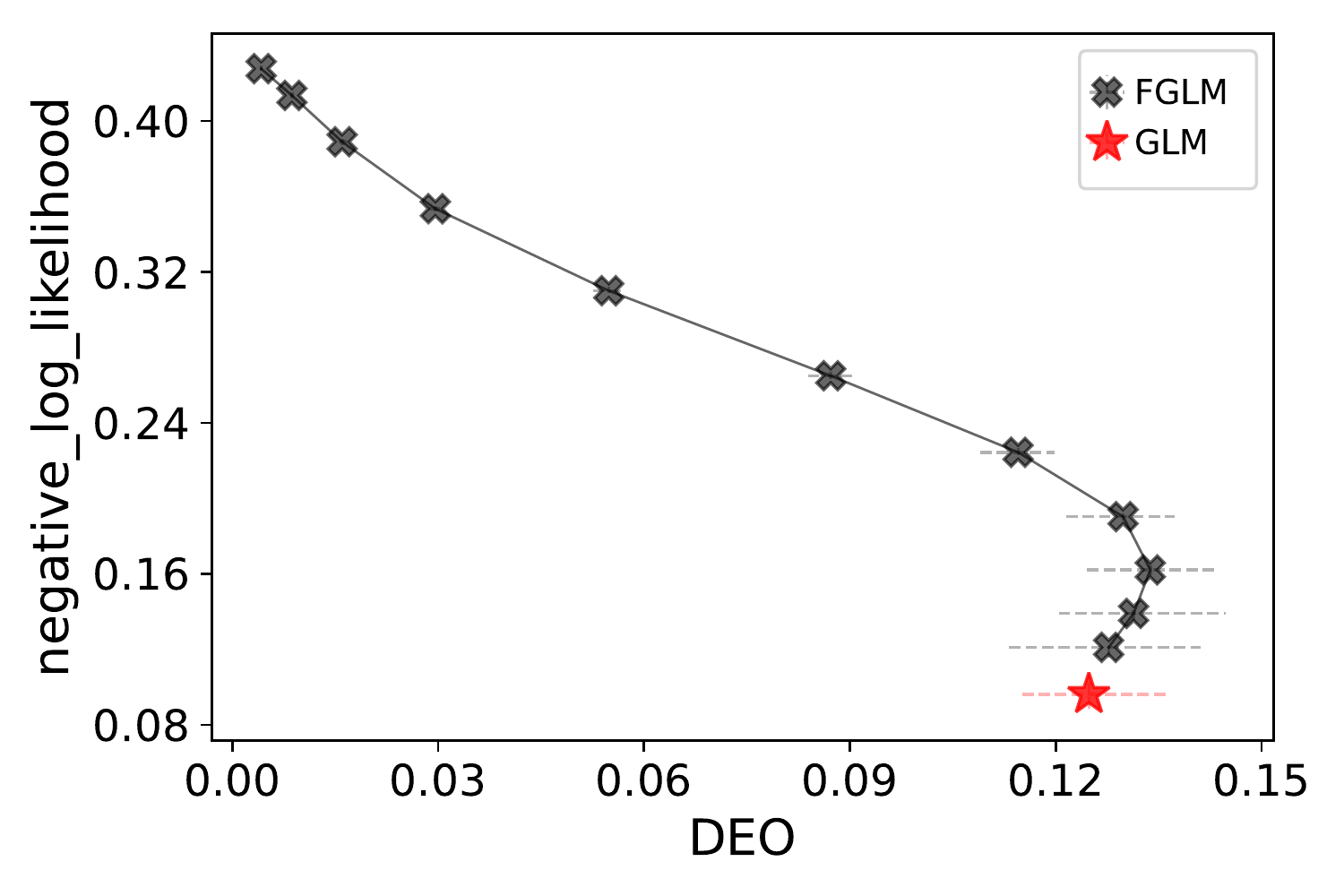}}
     \subfloat[HRS--Count--Race(4)]{\includegraphics[width=0.32\linewidth,keepaspectratio]{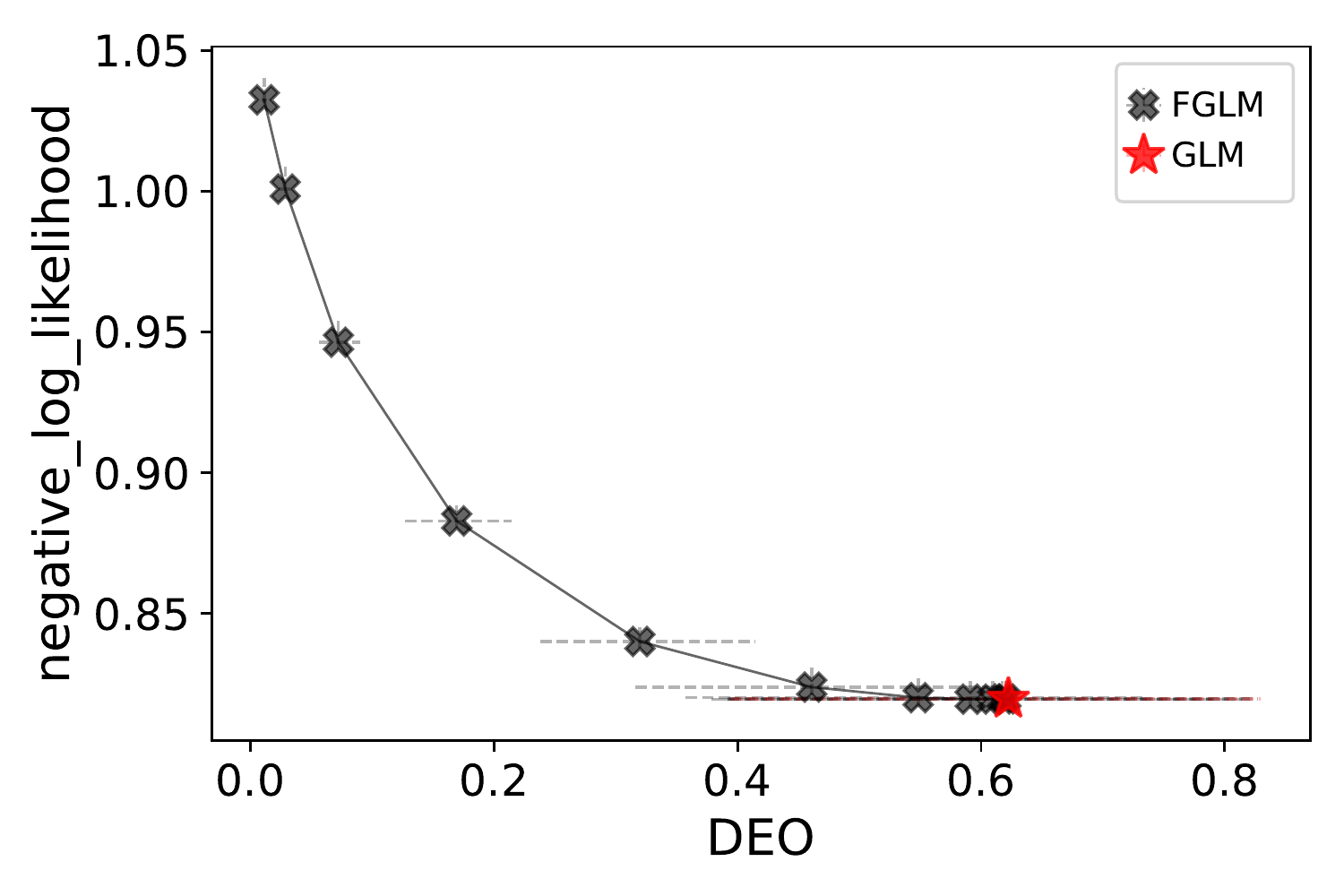}}
    \caption{Experimental results \emph{in negative log-likelihoods} and $\mathcal{D}_{\text{EO}}$ from 11 real world datasets with binary (a-e) and continuous outcomes (f-i). Each subtitle is in the form of {Dataset}--{Outcome Type}--{Sensitive Attribute($K$)}. For both binary and continuous outcomes we use a Generalized Linear Model (GLM, red star \inlinegraphics{markers/GLM.png}), Fair Generalized Linear Model (F-GLM, black X \inlinegraphics{markers/FGLM.png}), Individual Fairness penalty (IF, green pentagon \inlinegraphics{markers/IF.png}), Group Fairness penalty (GF, dark red square \inlinegraphics{markers/GF.png}), and Bounded Group Loss (BGL, orange triangle \inlinegraphics{markers/BGL.png}). Methods for binary outcomes also include the Support Vector Machine (SVM, grey hexagon \inlinegraphics{markers/SVM.png}), Fair Constraints (FC, green diamond \inlinegraphics{markers/FC.png}), Disparate Mistreatment (DM, blue circle \inlinegraphics{markers/DM-HSIC.png}), Squared Difference penalizer (SD, dark blue diamond \inlinegraphics{markers/SD.png}), Fair Empirical Risk Minimization (FERM, plum pentagon \inlinegraphics{markers/GFERM.png}), Statistical Parity (SP, teal triangle \inlinegraphics{markers/SP.png}). Methods for continuous outcomes include the HSIC penalty (HSIC, blue circle \inlinegraphics{markers/DM-HSIC.png}), General Fair Empirical Risk Minimization (GFERM, plum pentagon \inlinegraphics{markers/GFERM.png}). See Table 1 for additional information for each method. Each dot represents mean performance across test sets for a specific hyperparameter value $\lambda$.}
    \label{fig:nll-deo}
\end{figure*}

\begin{figure}[p]
    \centering
    \subfloat[Adult--Binary--Gender(2)]{\includegraphics[width=0.33\linewidth,keepaspectratio]{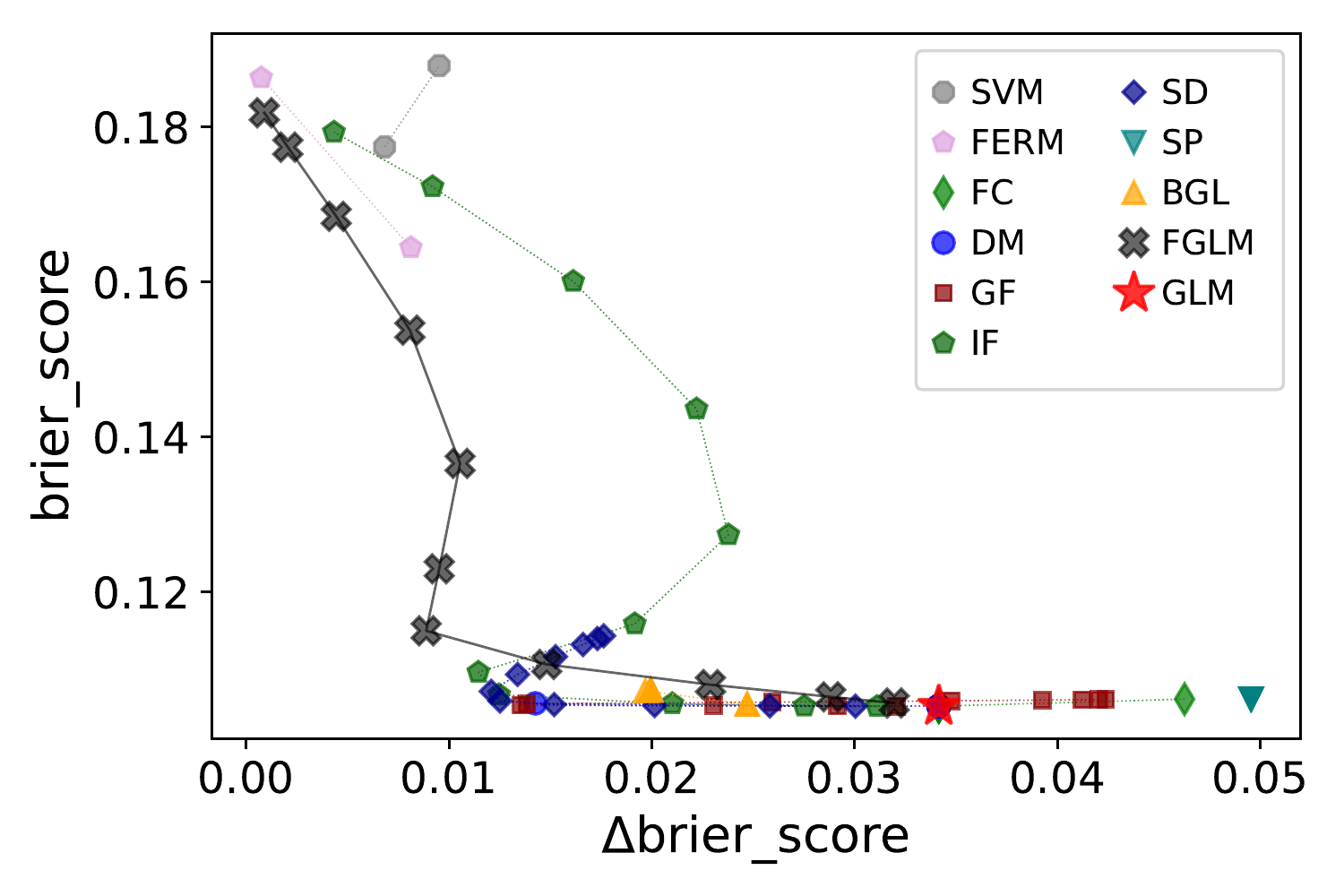}}
    \subfloat[Arrhythmia--Binary--Gender(2)]{\includegraphics[width=0.33\linewidth,keepaspectratio]{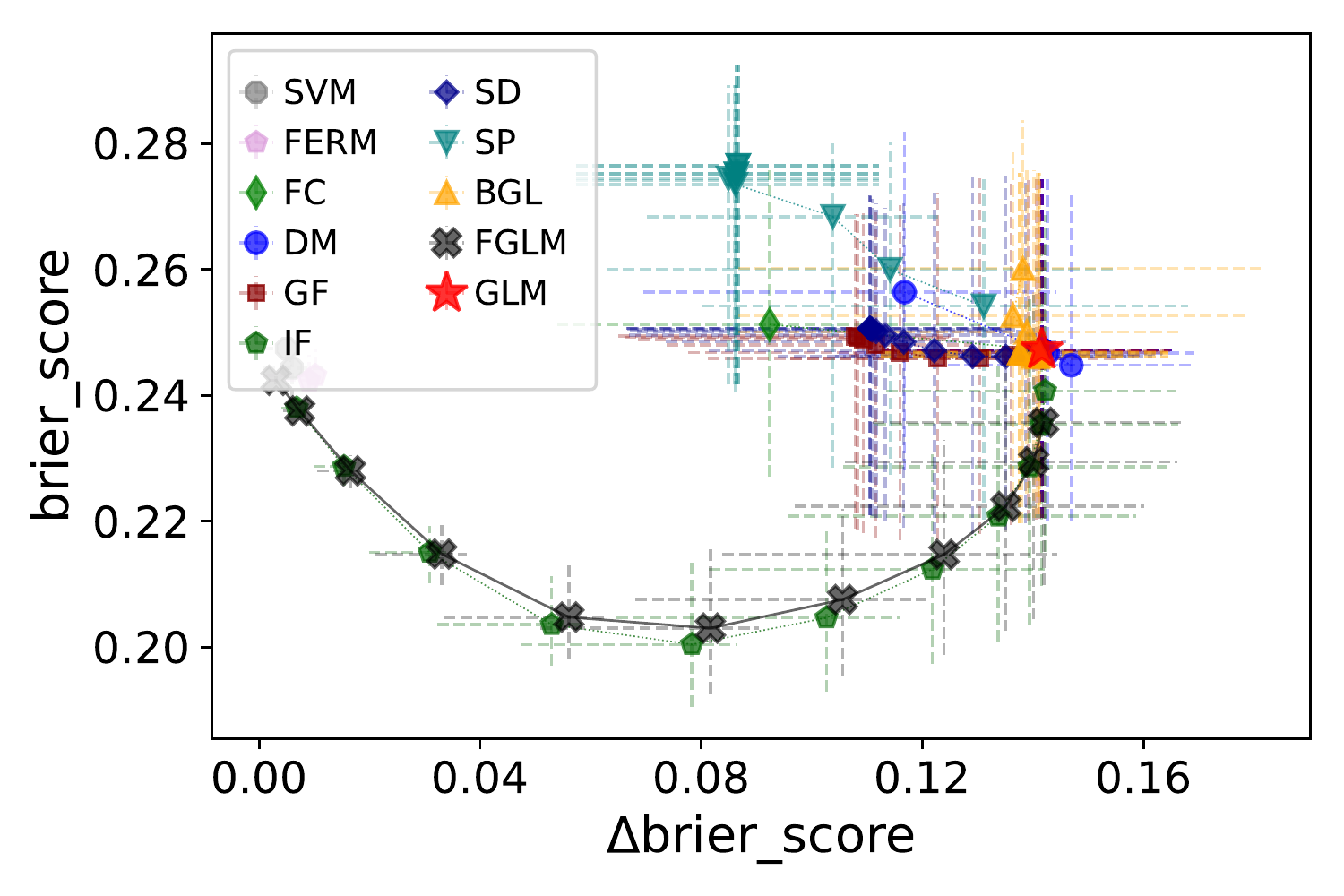}}
    \subfloat[COMPAS--Binary--Race(4)]{\includegraphics[width=0.33\linewidth,keepaspectratio]{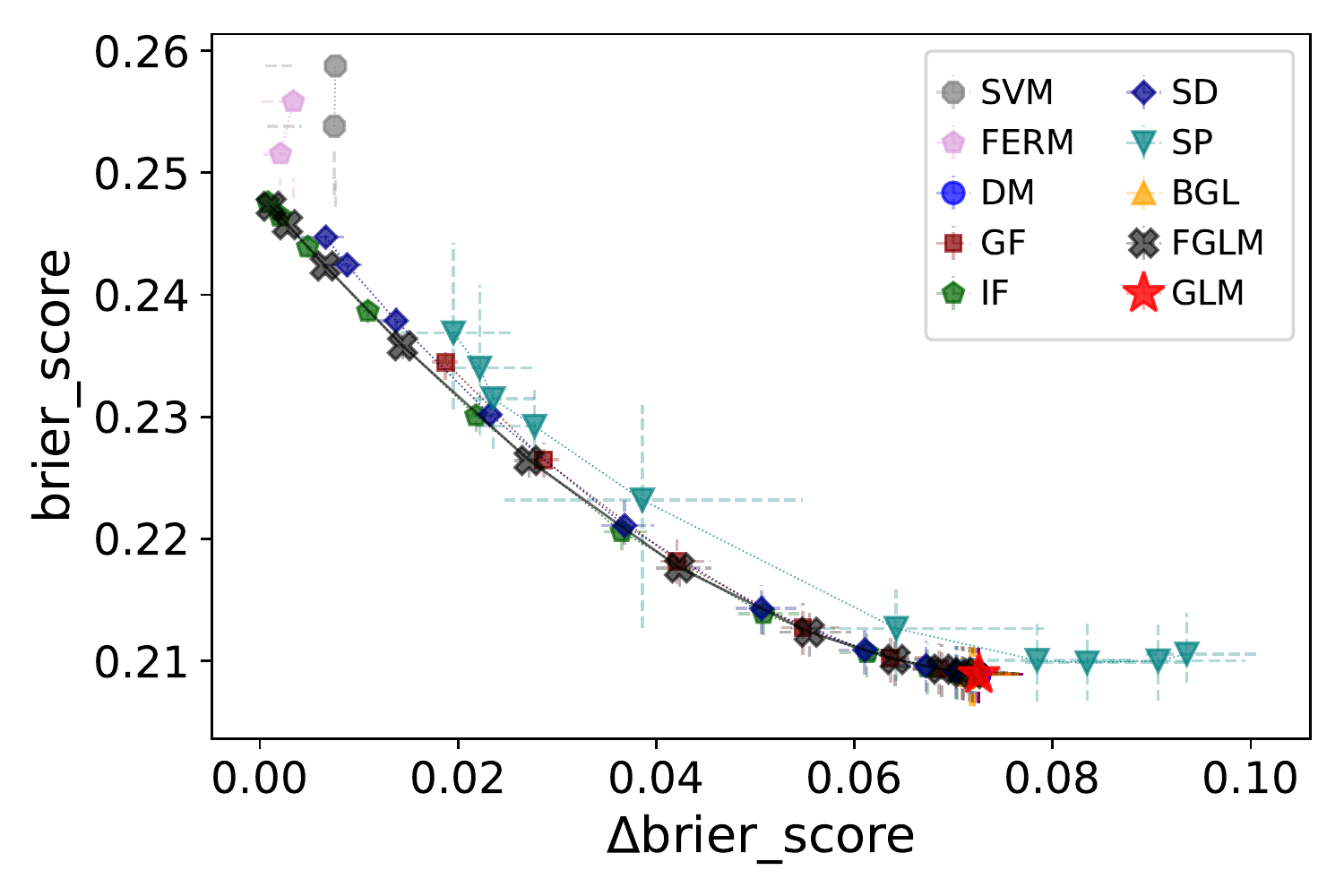}}
    \\
    \subfloat[Drug--Binary--Race(2)]{\includegraphics[width=0.33\linewidth,keepaspectratio]{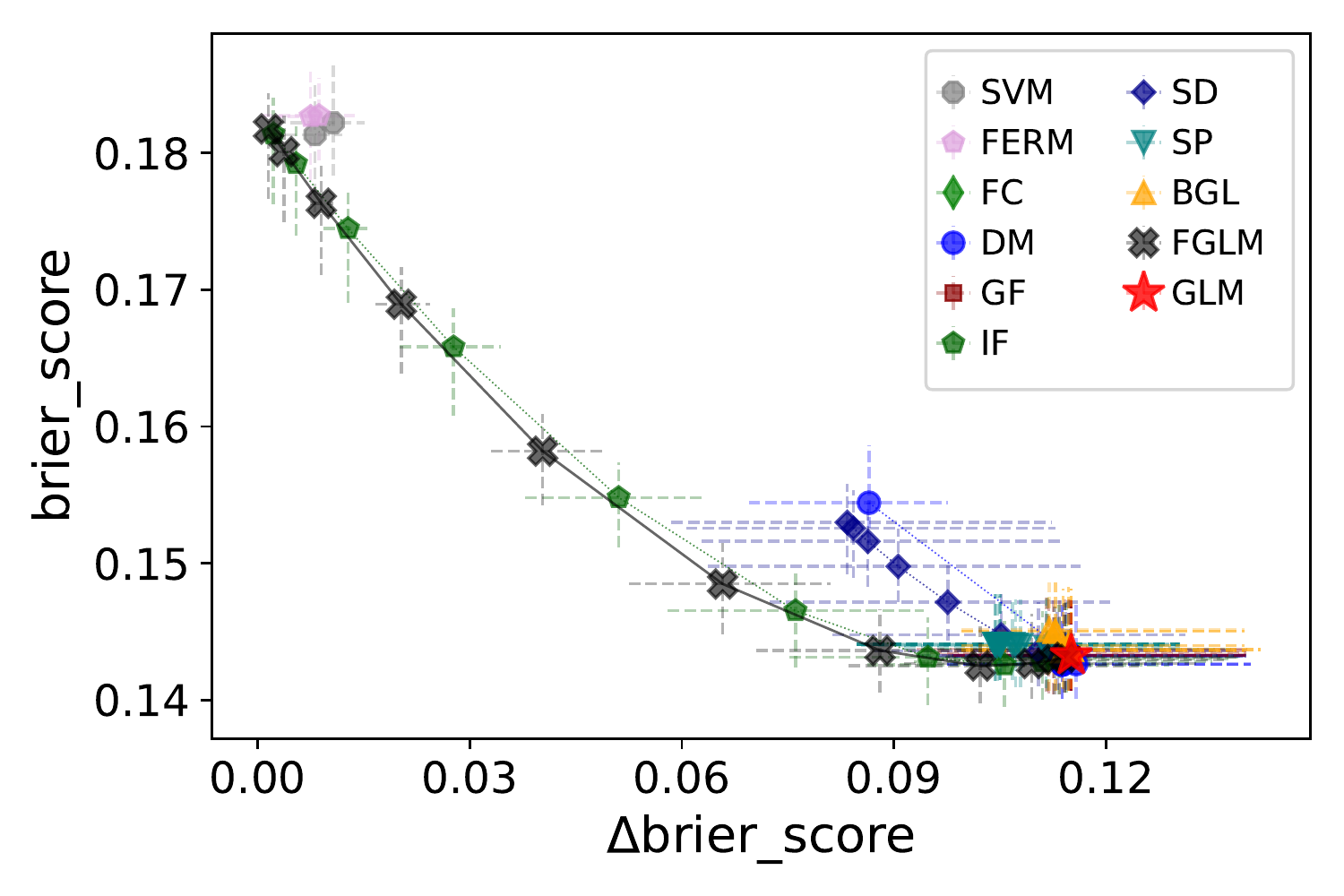}}
    \subfloat[German--Binary--Race(2)]{\includegraphics[width=0.33\linewidth,keepaspectratio]{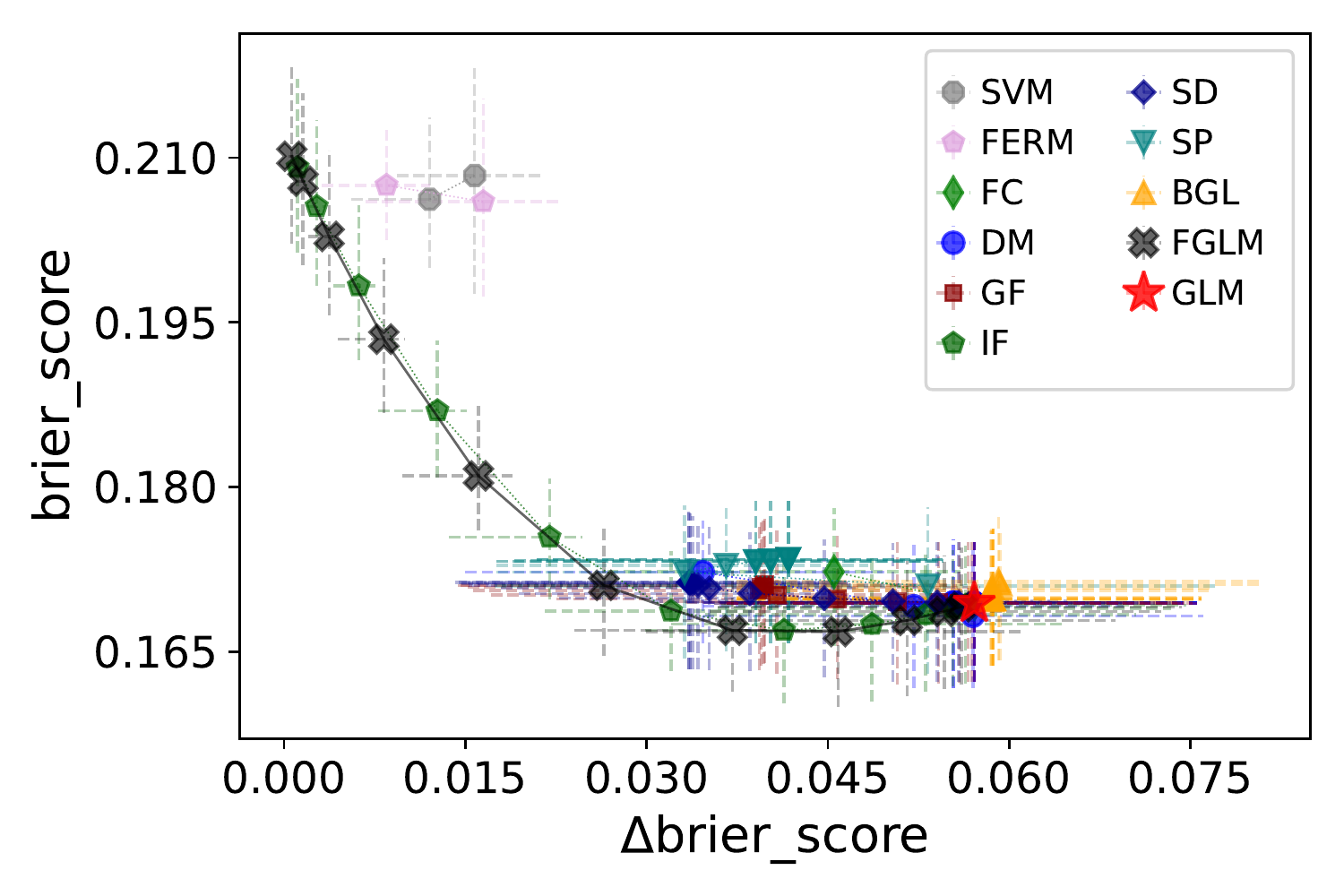}}
    \subfloat[Crime-Cont-Race(3)]{\includegraphics[width=0.33\linewidth,keepaspectratio]{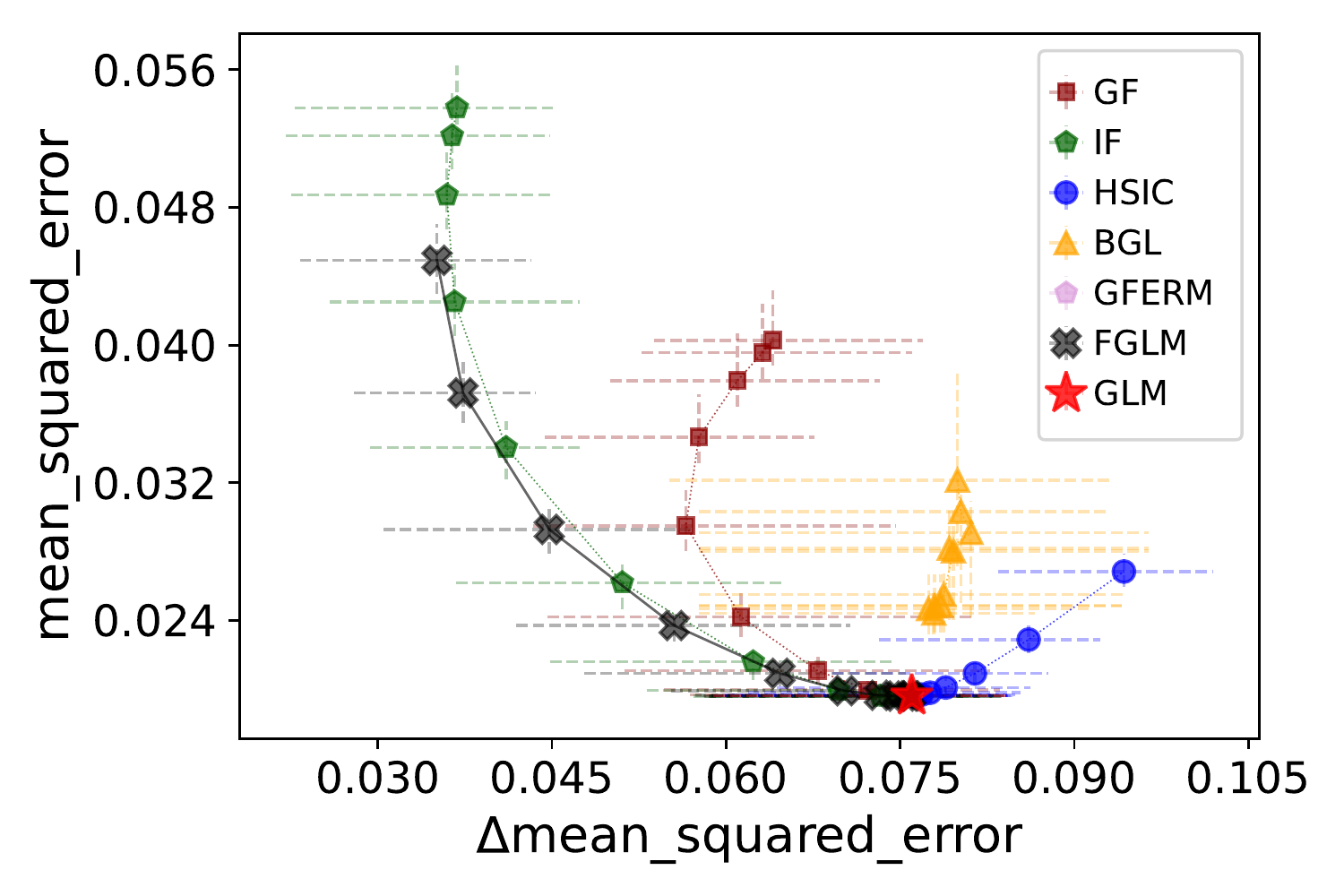}}
    \\
    \subfloat[LSAC--Cont--Race(5)]{\includegraphics[width=0.33\linewidth,keepaspectratio]{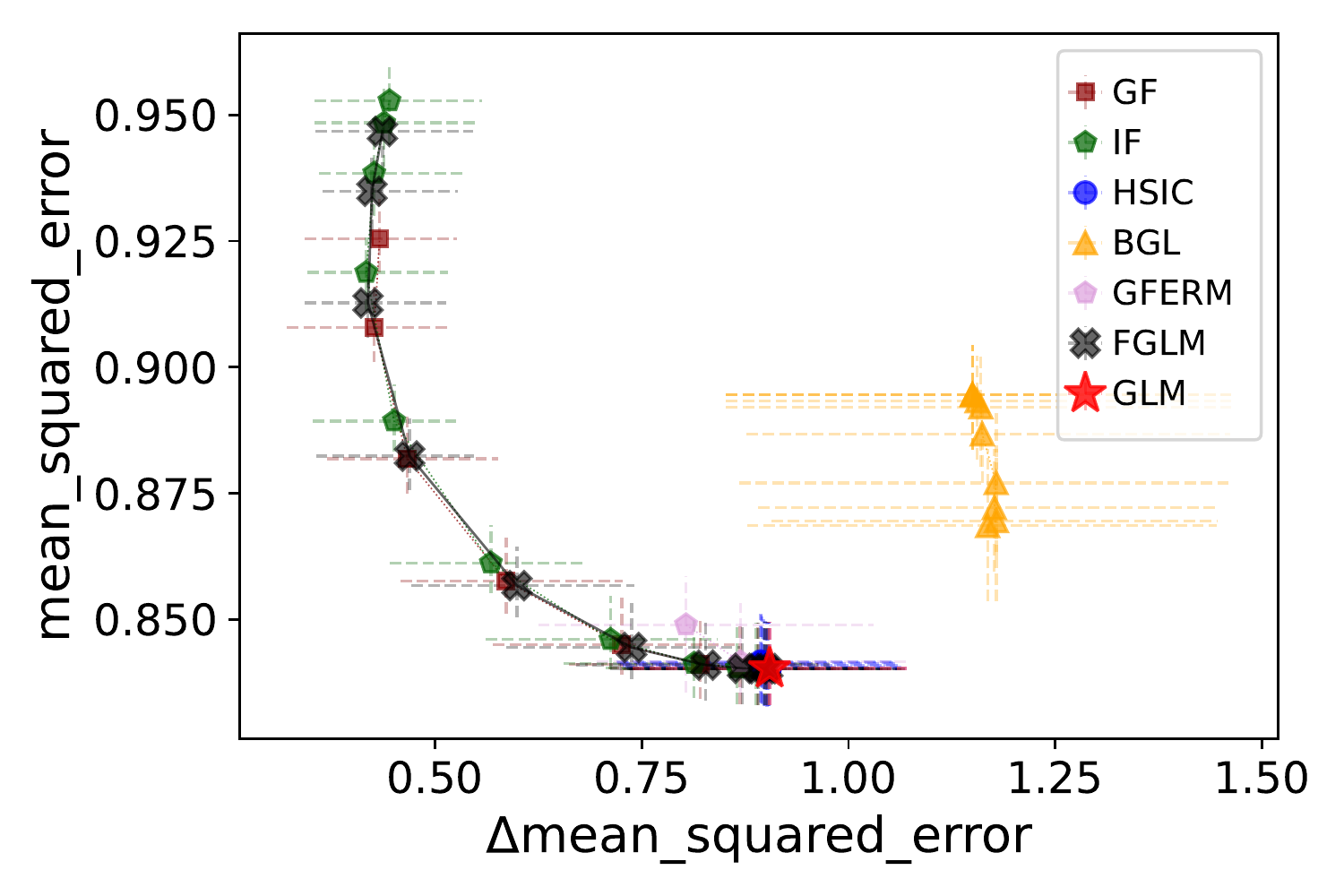}}
    \subfloat[Parkinsons--Cont--Gender(2)]{\includegraphics[width=0.33\linewidth,keepaspectratio]{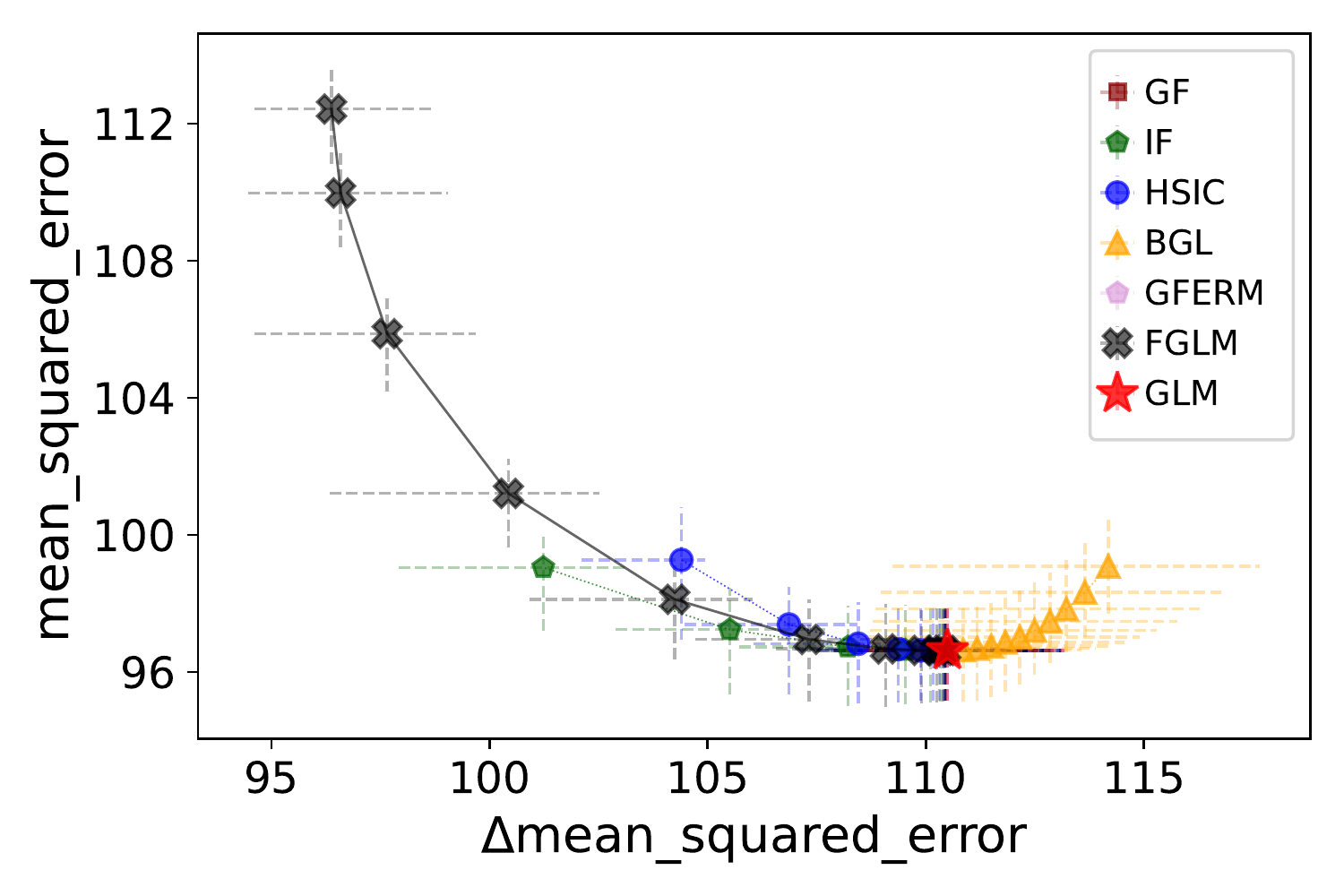}}
    \subfloat[Student--Cont--Gender(2)]{\includegraphics[width=0.33\linewidth,keepaspectratio]{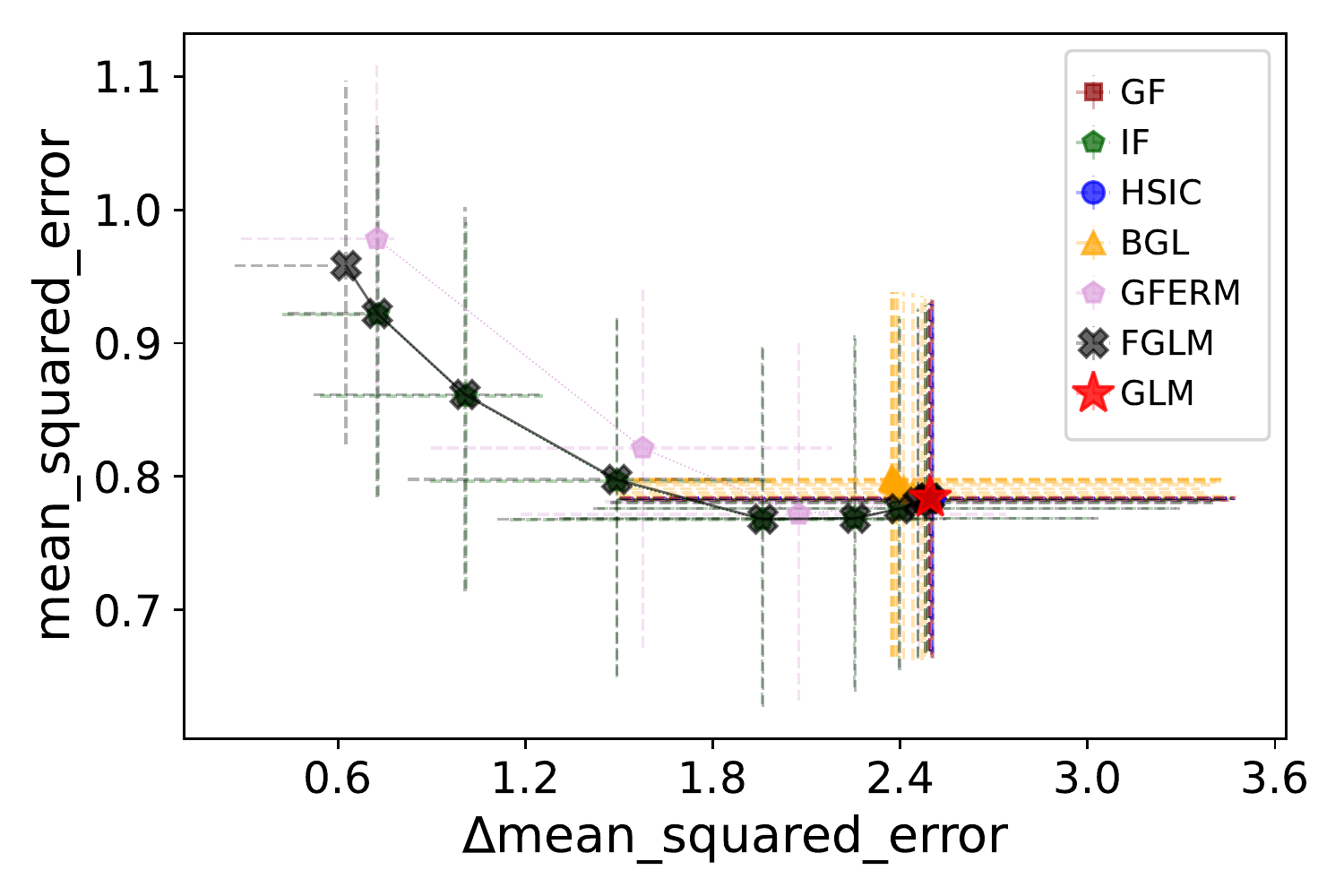}}
     \\
     \subfloat[Drug--Multi--Race(2)]{\includegraphics[width=0.33\linewidth,keepaspectratio]{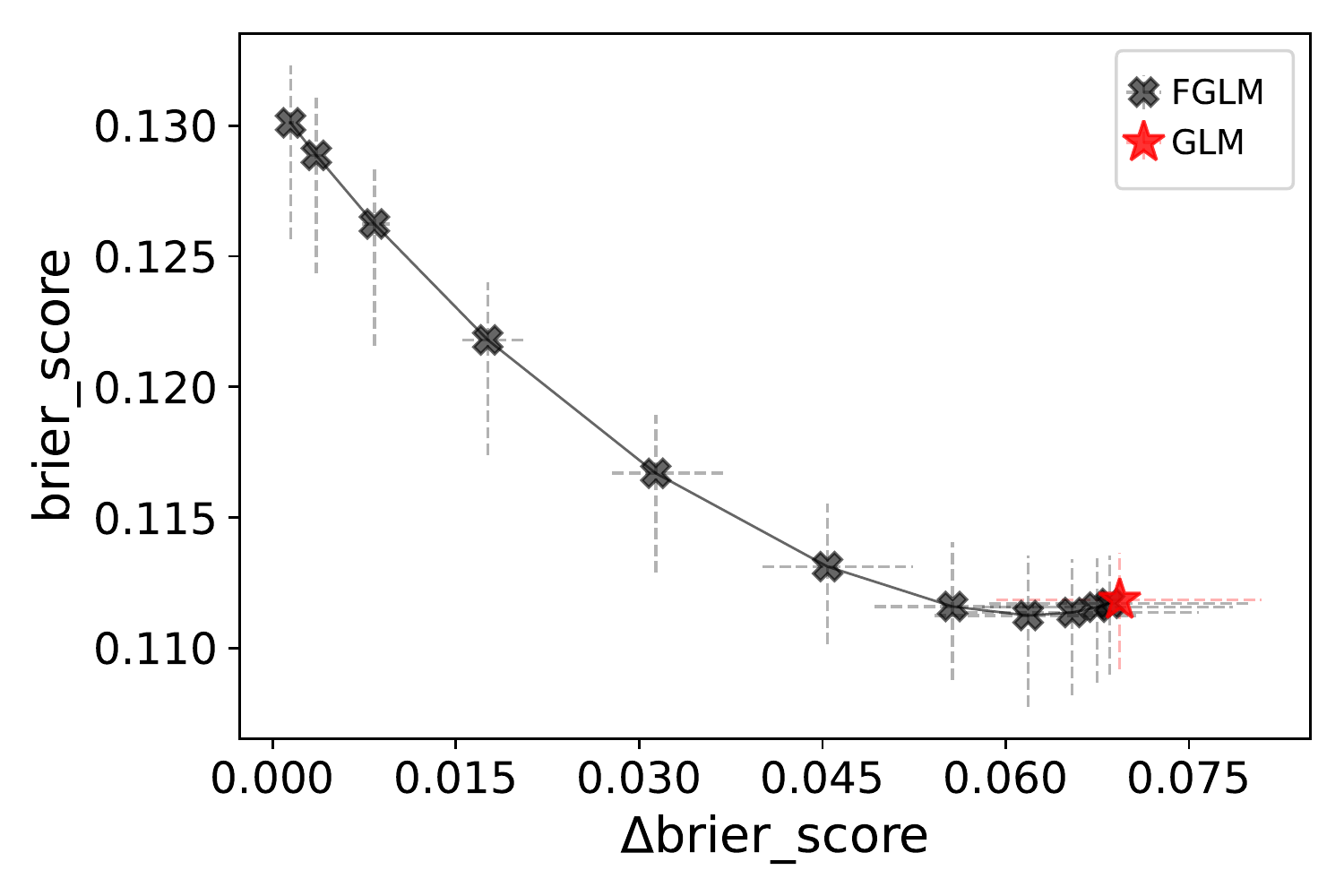}}
     \subfloat[Obesity--Multi--Gender(2)]{\includegraphics[width=0.33\linewidth,keepaspectratio]{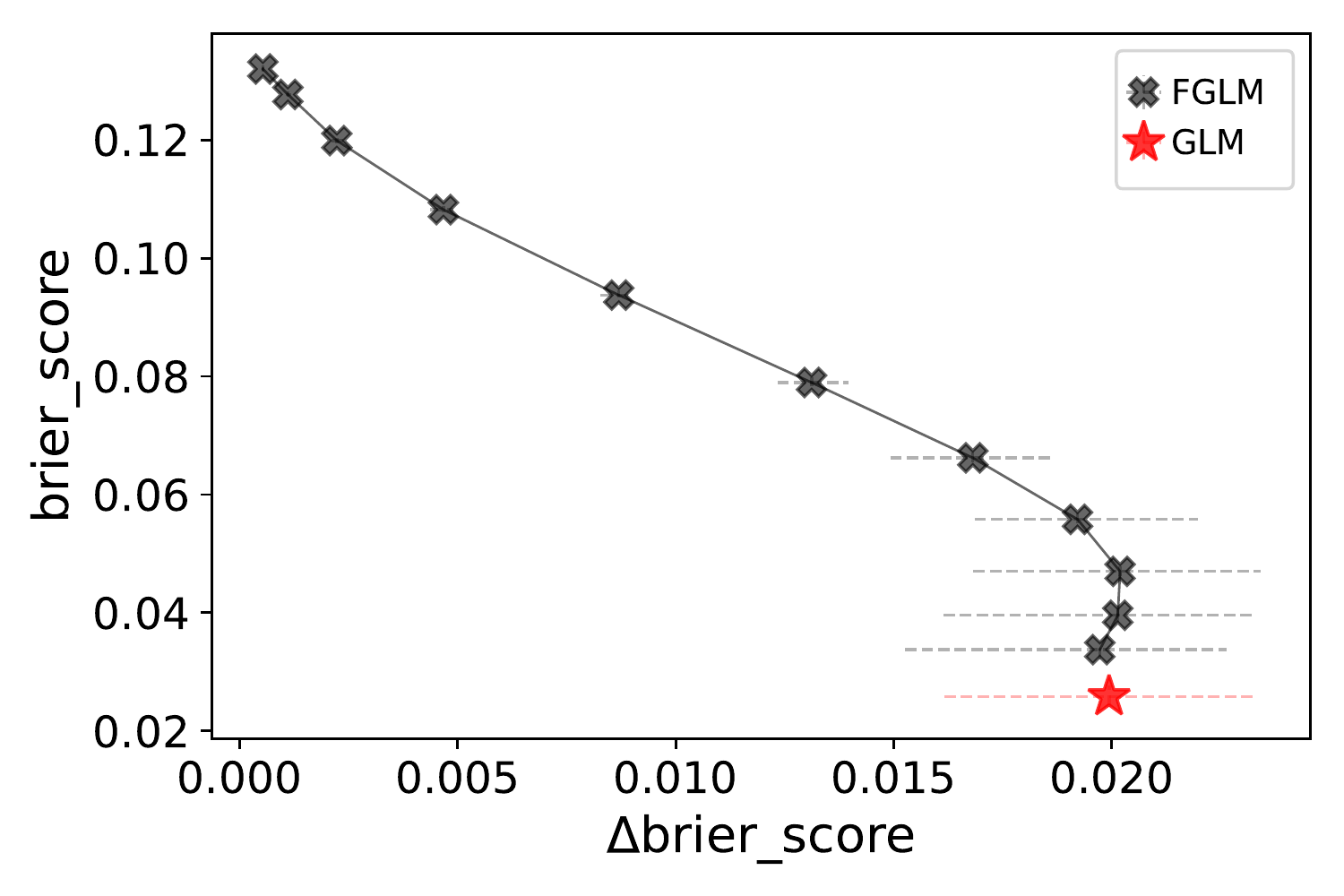}}
     \subfloat[HRS--Count--Race(4)]{\includegraphics[width=0.33\linewidth,keepaspectratio]{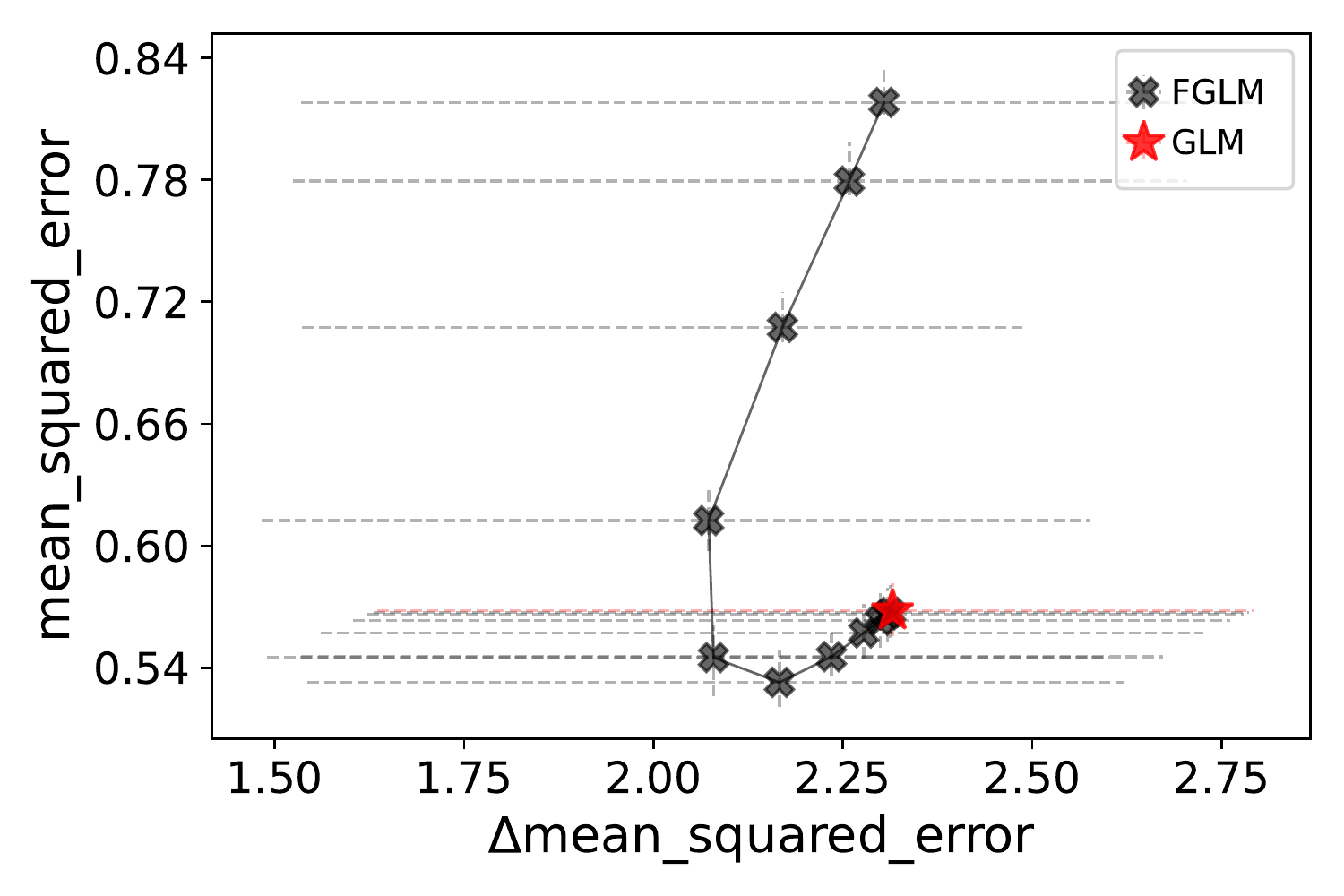}}
    \caption{Experimental results in mean squared errors (also referred to as brier scores for classification problems) from 11 real world datasets with binary (a-e) and continuous outcomes (f-i). $x$-axis is disparity of mean squared errors. Each subtitle is in the form of {Dataset}--{Outcome Type}--{Sensitive Attribute($K$)}. For both binary and continuous outcomes we use a Generalized Linear Model (GLM, red star \inlinegraphics{markers/GLM.png}), Fair Generalized Linear Model (F-GLM, black X \inlinegraphics{markers/FGLM.png}), Individual Fairness penalty (IF, green pentagon \inlinegraphics{markers/IF.png}), Group Fairness penalty (GF, dark red square \inlinegraphics{markers/GF.png}), and Bounded Group Loss (BGL, orange triangle \inlinegraphics{markers/BGL.png}). Methods for binary outcomes also include the Support Vector Machine (SVM, grey hexagon \inlinegraphics{markers/SVM.png}), Fair Constraints (FC, green diamond \inlinegraphics{markers/FC.png}), Disparate Mistreatment (DM, blue circle \inlinegraphics{markers/DM-HSIC.png}), Squared Difference penalizer (SD, dark blue diamond \inlinegraphics{markers/SD.png}), Fair Empirical Risk Minimization (FERM, plum pentagon \inlinegraphics{markers/GFERM.png}), Statistical Parity (SP, teal triangle \inlinegraphics{markers/SP.png}). Methods for continuous outcomes include the HSIC penalty (HSIC, blue circle \inlinegraphics{markers/DM-HSIC.png}), General Fair Empirical Risk Minimization (GFERM, plum pentagon \inlinegraphics{markers/GFERM.png}). See Table 1 for additional information for each method. Each dot represents mean performance across test sets  for a specific hyperparameter value $\lambda$.}
    \label{fig:results3}
\end{figure}

\begin{figure}[p]
    \centering
    \subfloat[Adult--Binary--Gender(2)]{\includegraphics[width=0.32\linewidth,keepaspectratio]{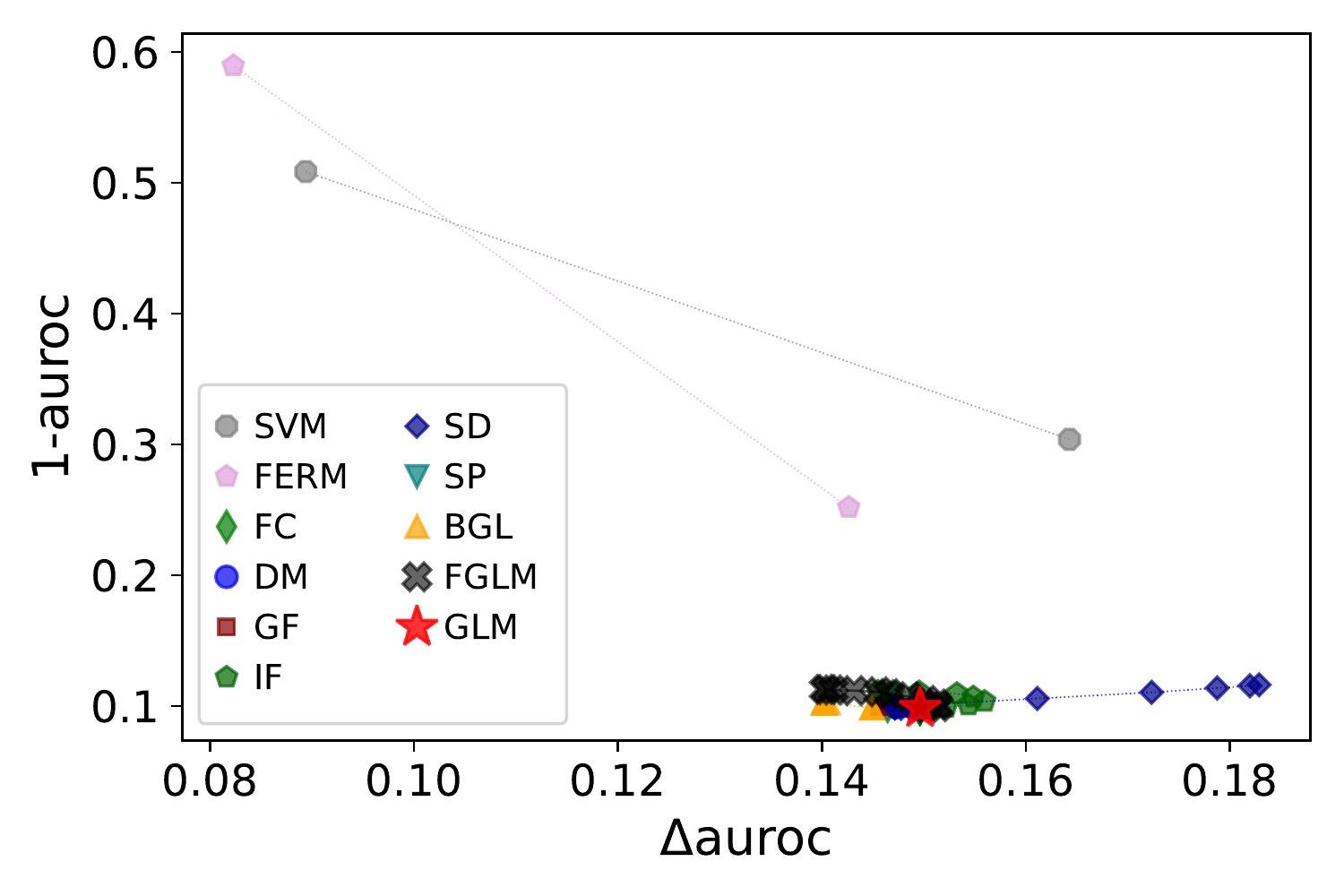}}
    \subfloat[Arrhythmia--Binary--Gender(2)]{\includegraphics[width=0.32\linewidth,keepaspectratio]{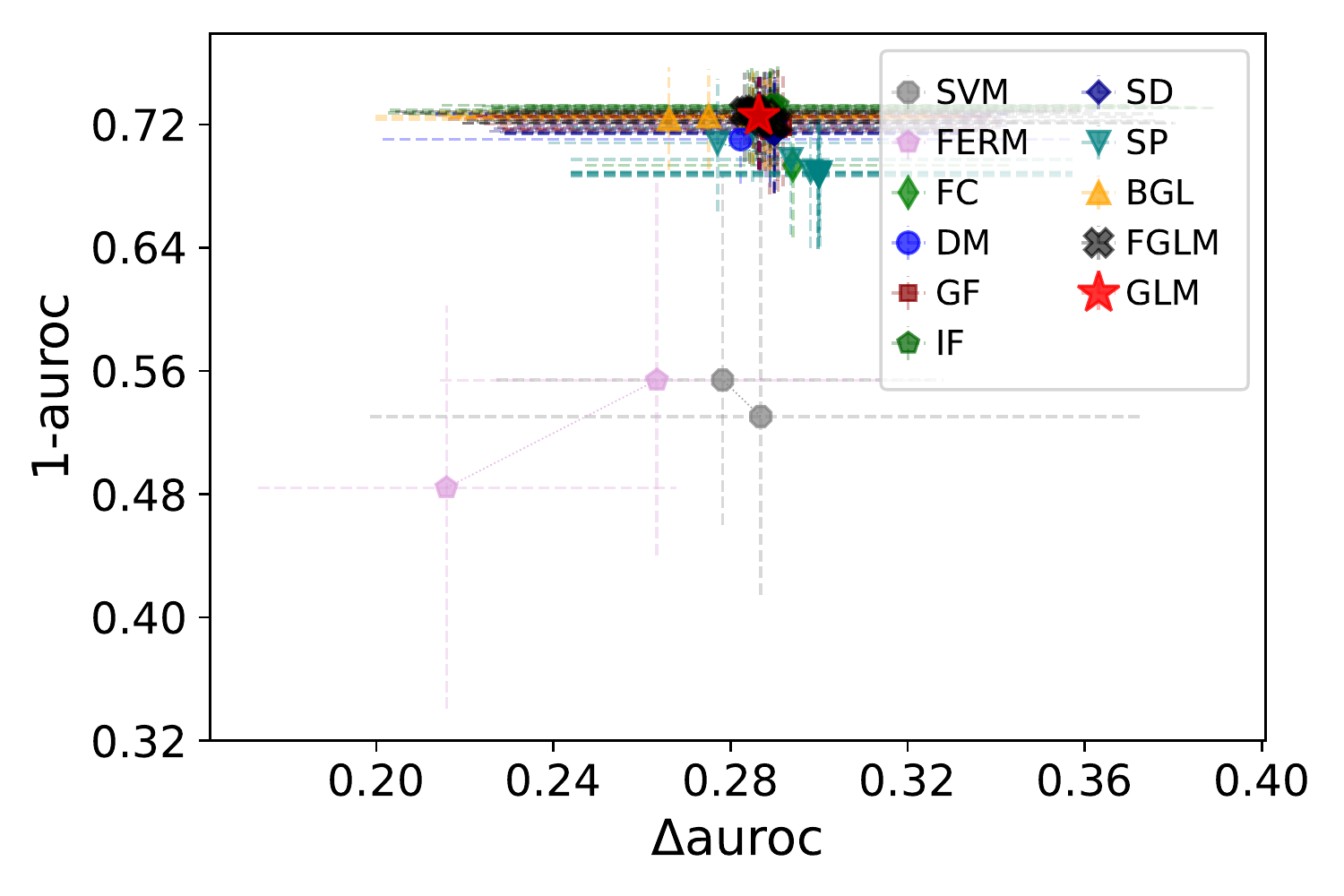}}
    \subfloat[COMPAS--Binary--Race(4)]{\includegraphics[width=0.32\linewidth,keepaspectratio]{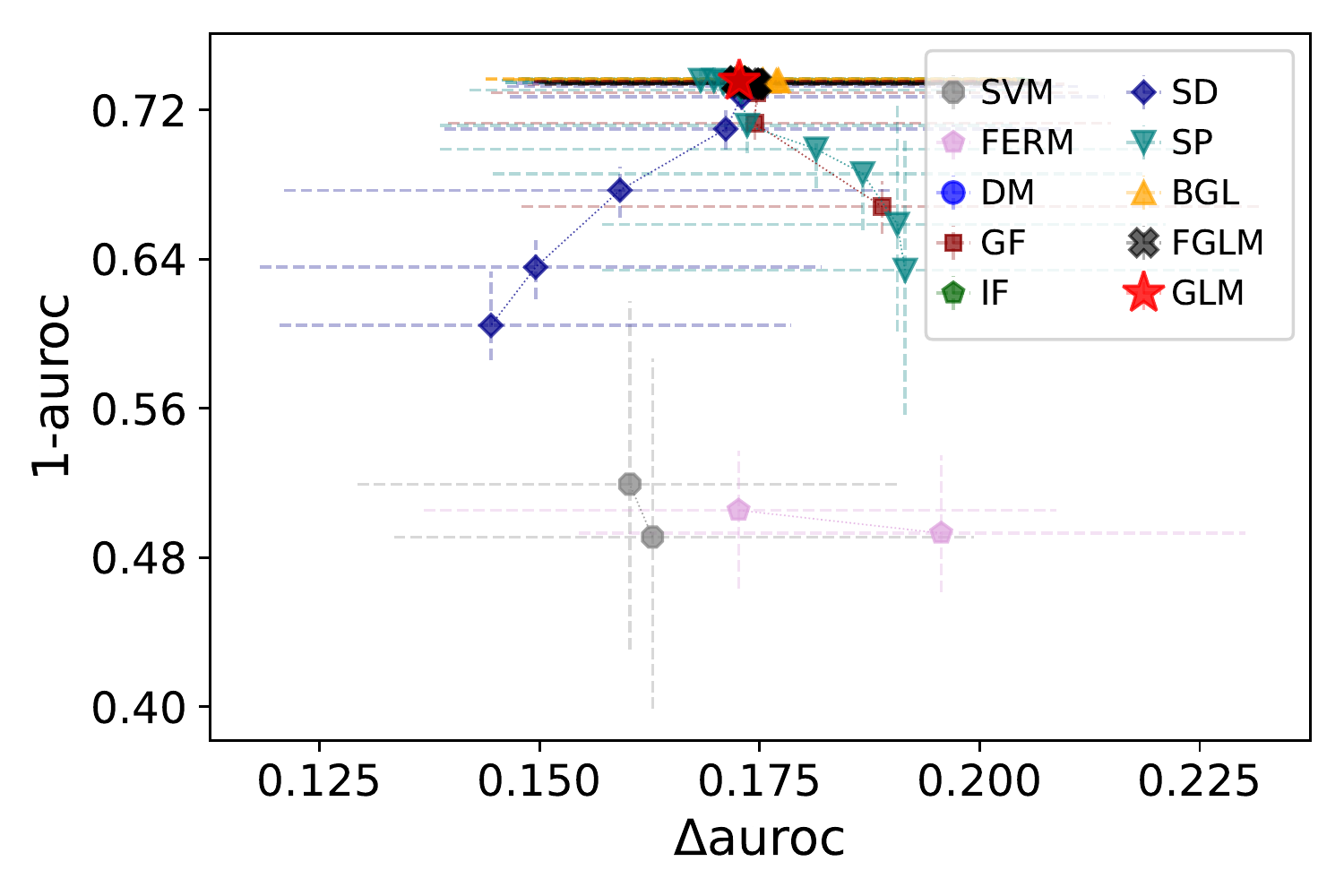}}
    \\
    \subfloat[Drug--Binary--Race(2)]{\includegraphics[width=0.32\linewidth,keepaspectratio]{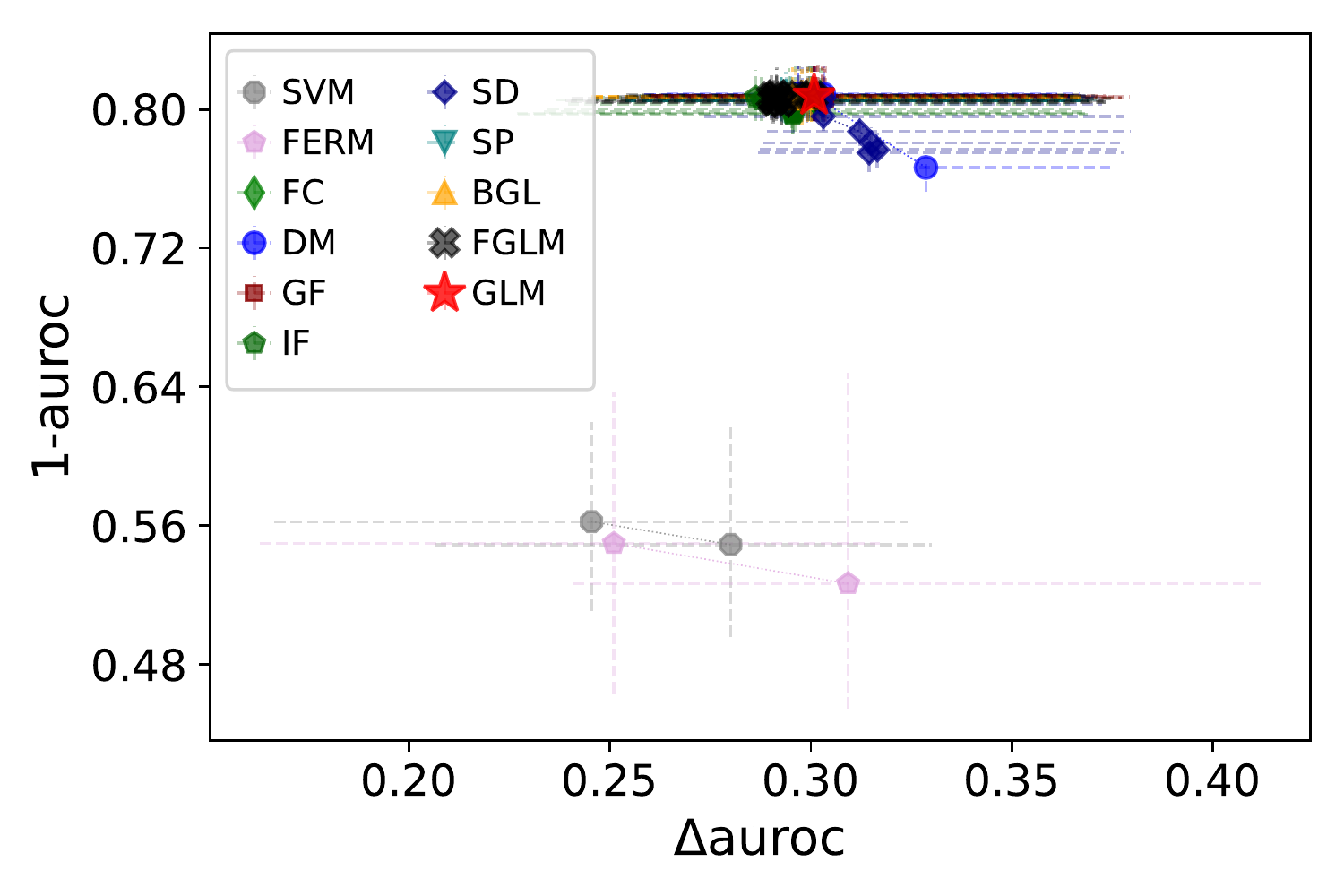}}
    \subfloat[German--Binary--Race(2)]{\includegraphics[width=0.32\linewidth,keepaspectratio]{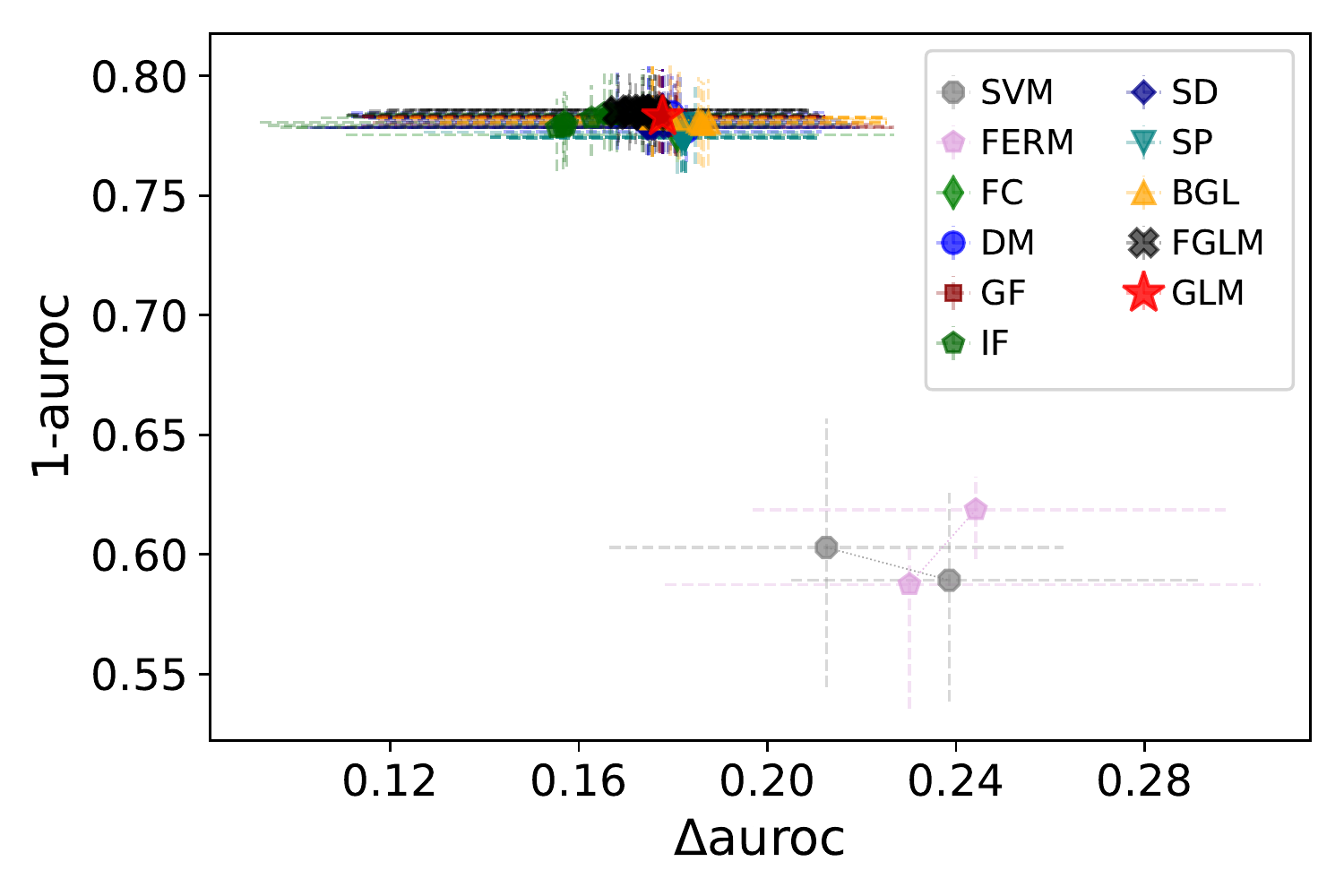}}
    \subfloat[Crime-Cont-Race(3)]{\includegraphics[width=0.32\linewidth,keepaspectratio]{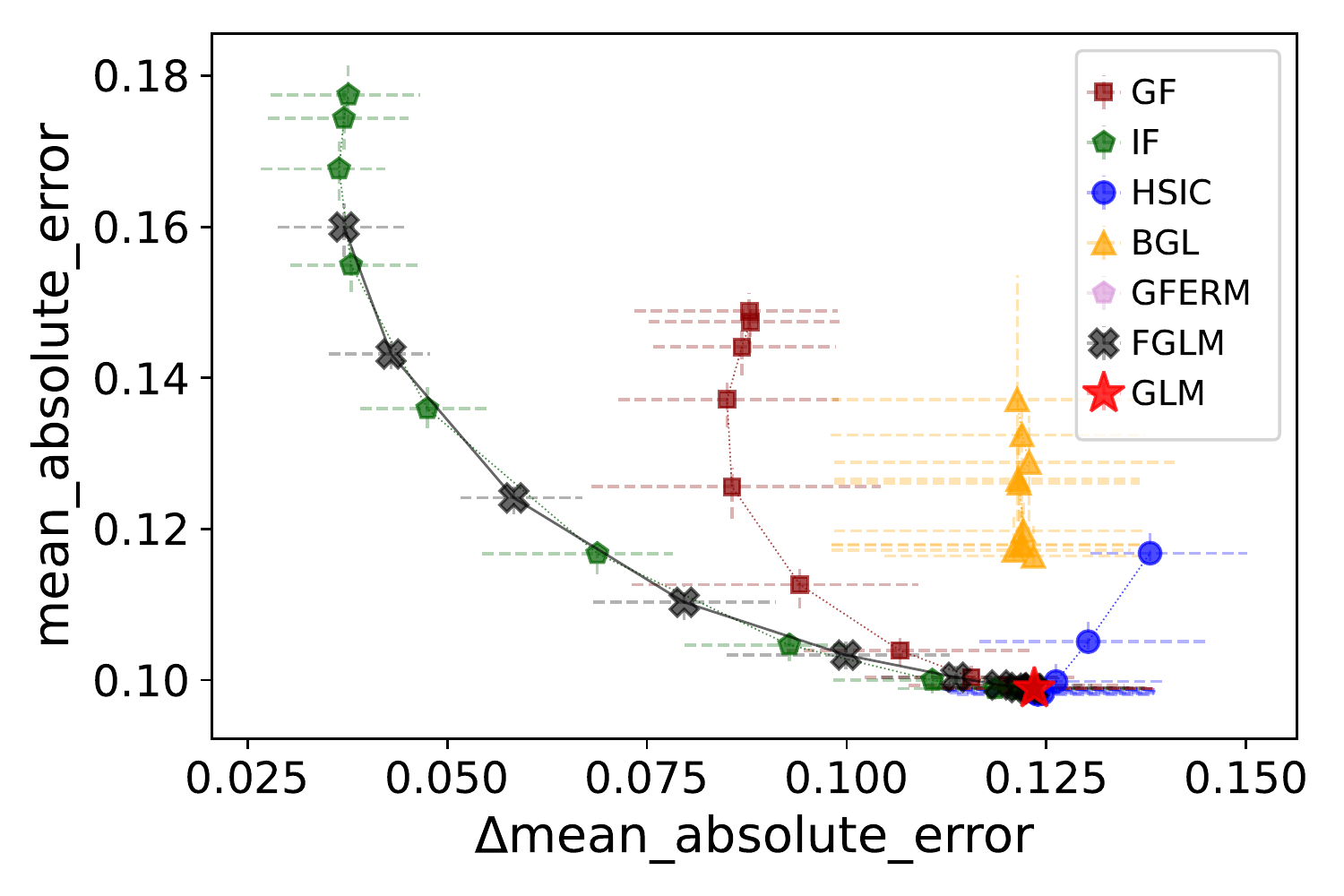}}
    \\
    \subfloat[LSAC--Cont--Race(5)]{\includegraphics[width=0.32\linewidth,keepaspectratio]{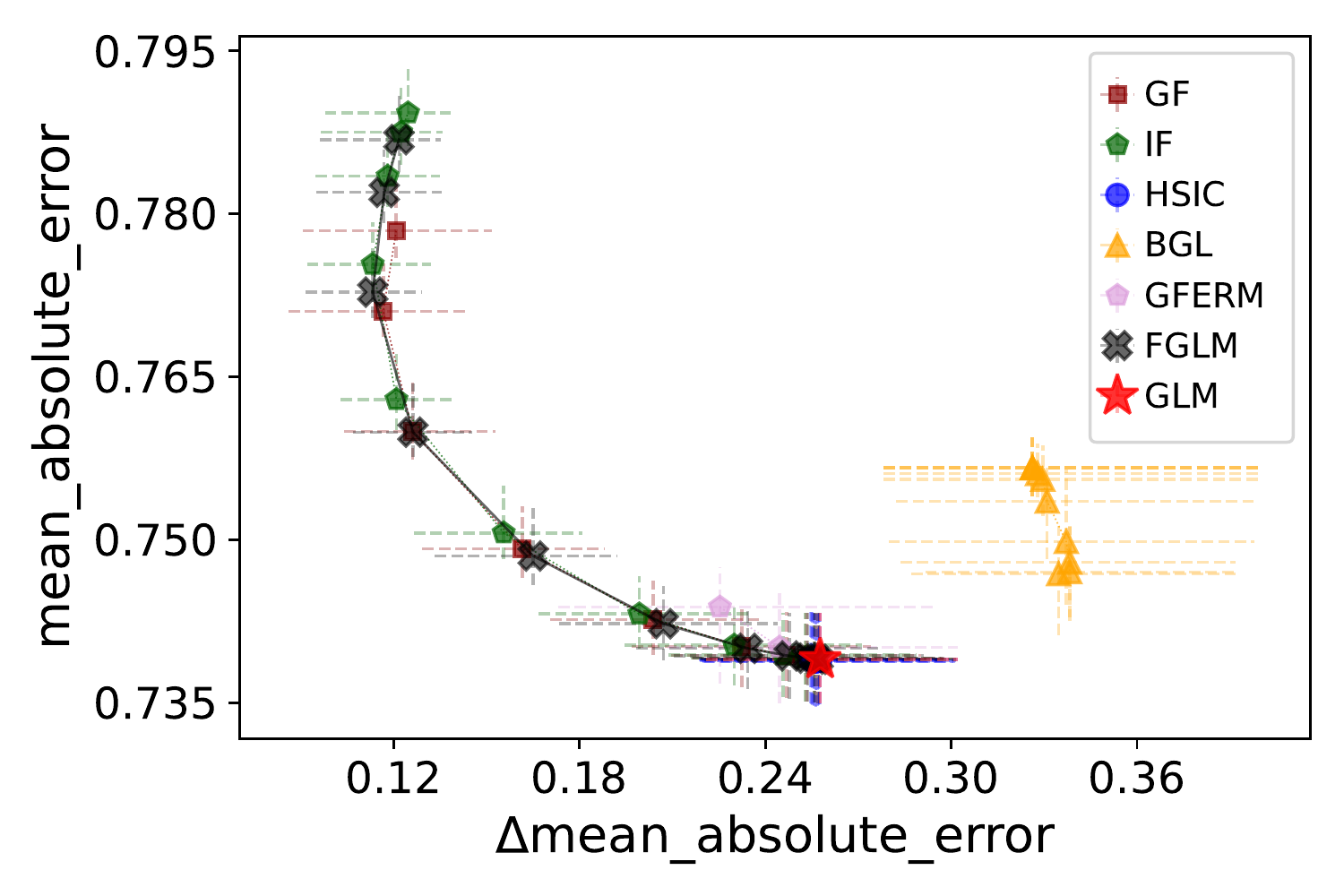}}
    \subfloat[Parkinsons--Cont--Gender(2)]{\includegraphics[width=0.32\linewidth,keepaspectratio]{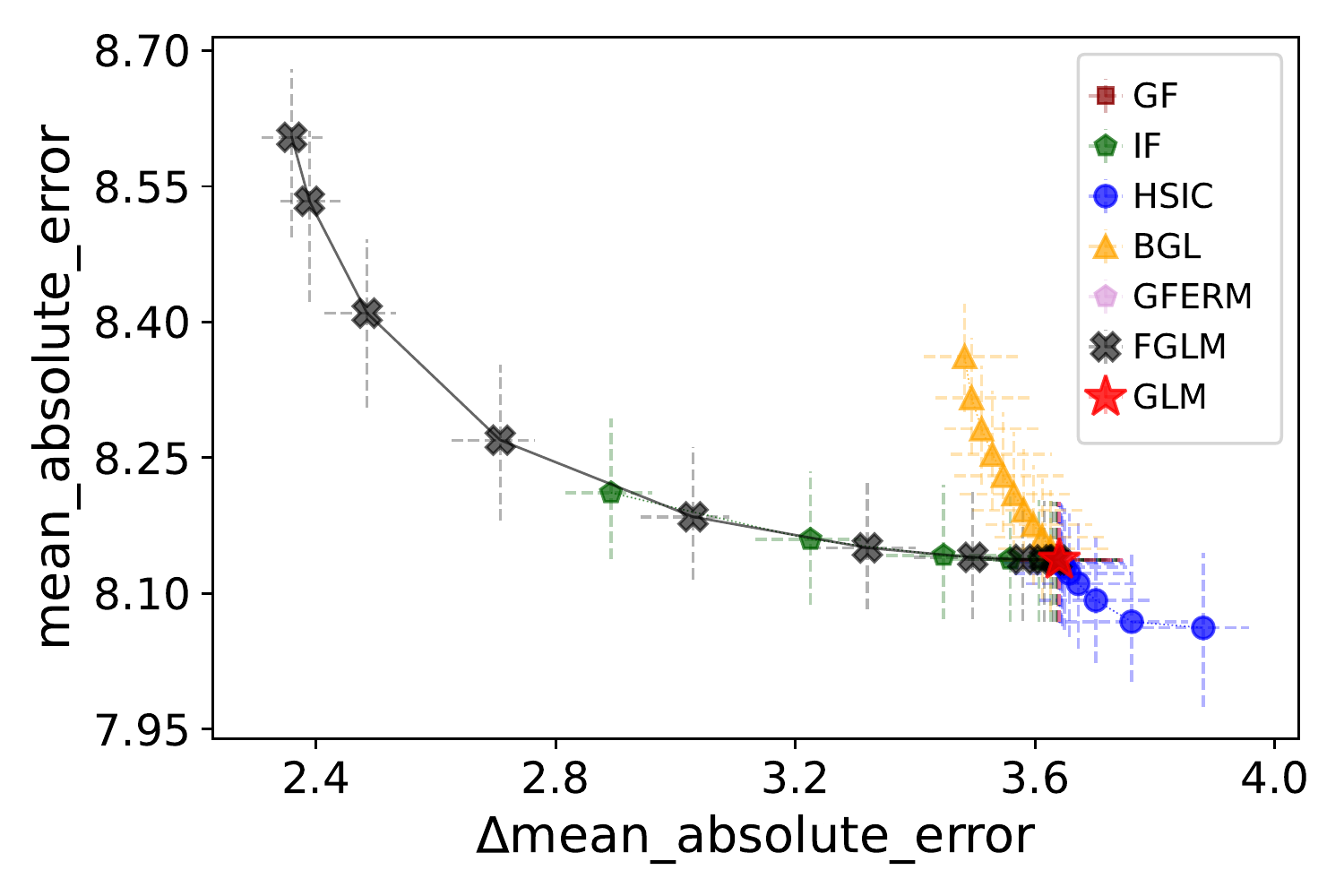}}
    \subfloat[Student--Cont--Gender(2)]{\includegraphics[width=0.32\linewidth,keepaspectratio]{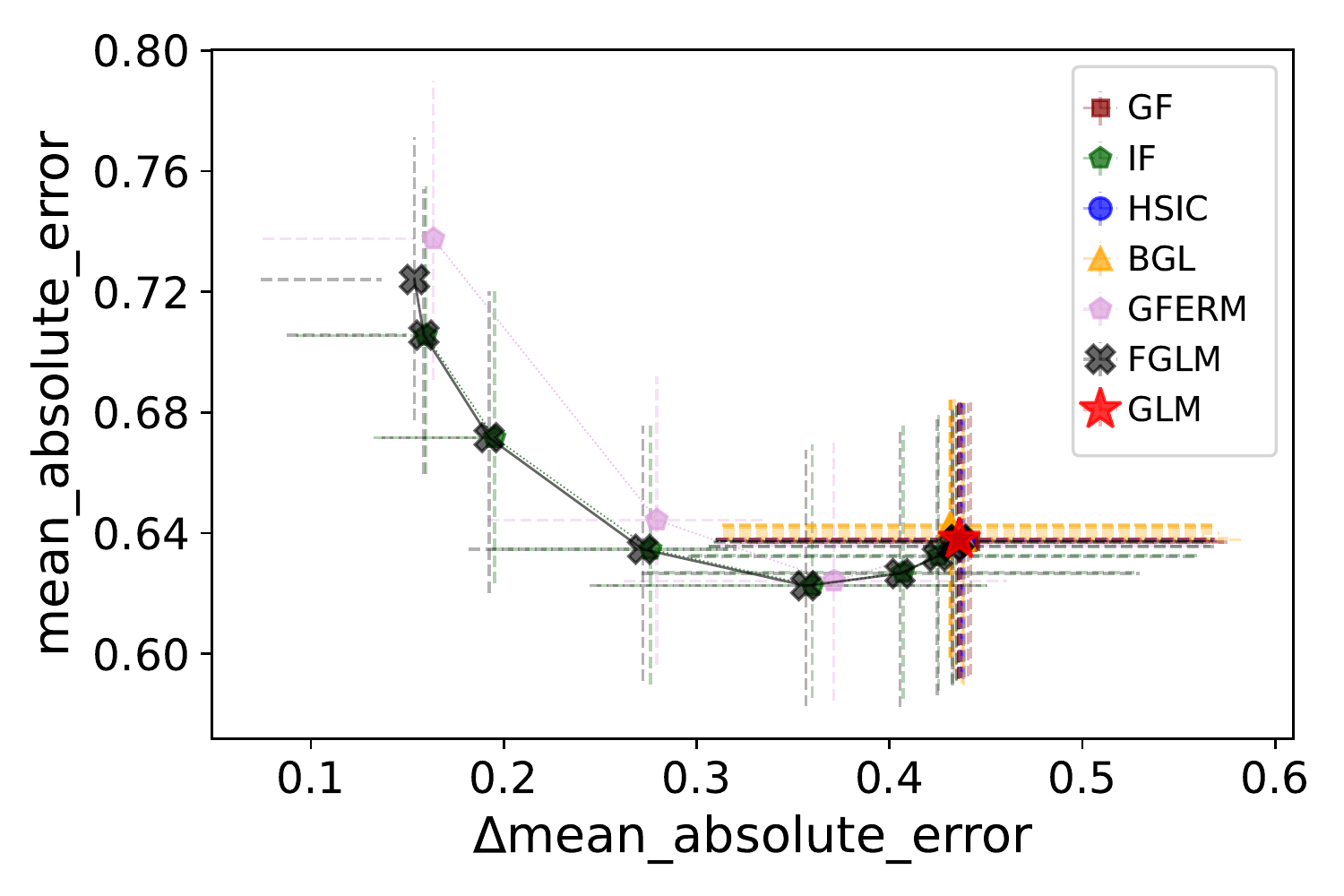}}
     \\
     \subfloat[Drug--Multi--Race(2)]{\includegraphics[width=0.32\linewidth,keepaspectratio]{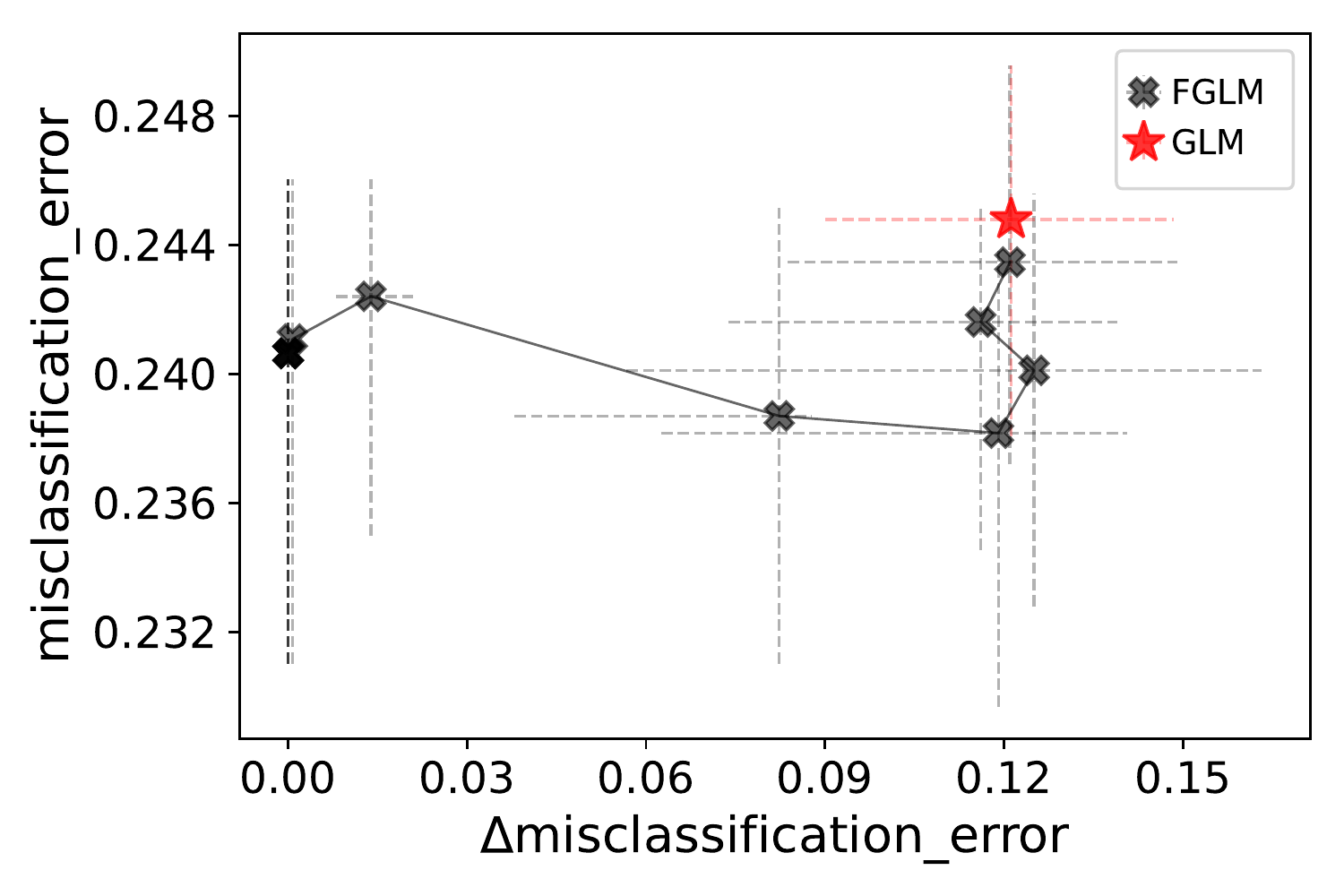}}
     \subfloat[Obesity--Multi--Gender(2)]{\includegraphics[width=0.32\linewidth,keepaspectratio]{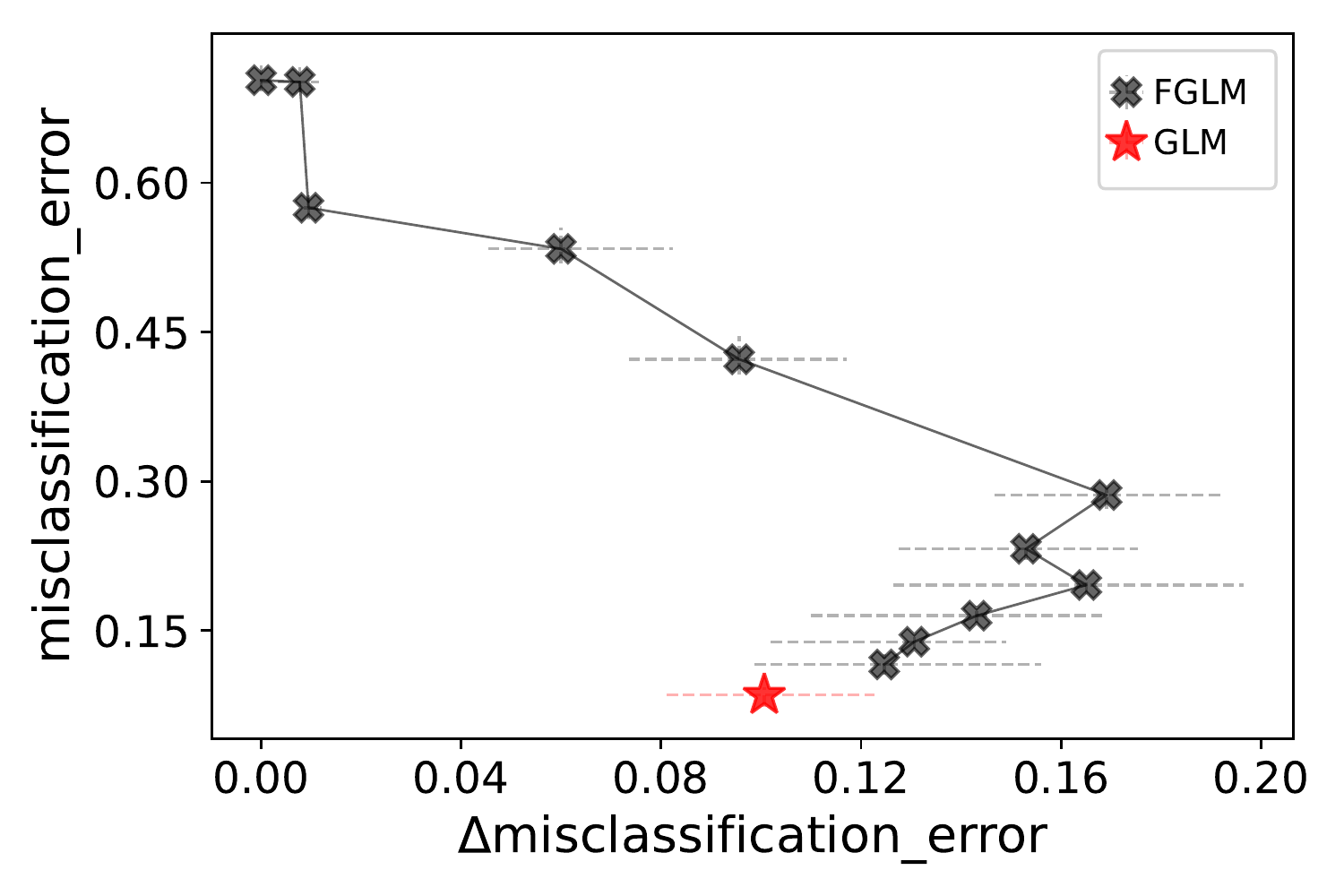}}
     \subfloat[HRS--Count--Race(4)]{\includegraphics[width=0.32\linewidth,keepaspectratio]{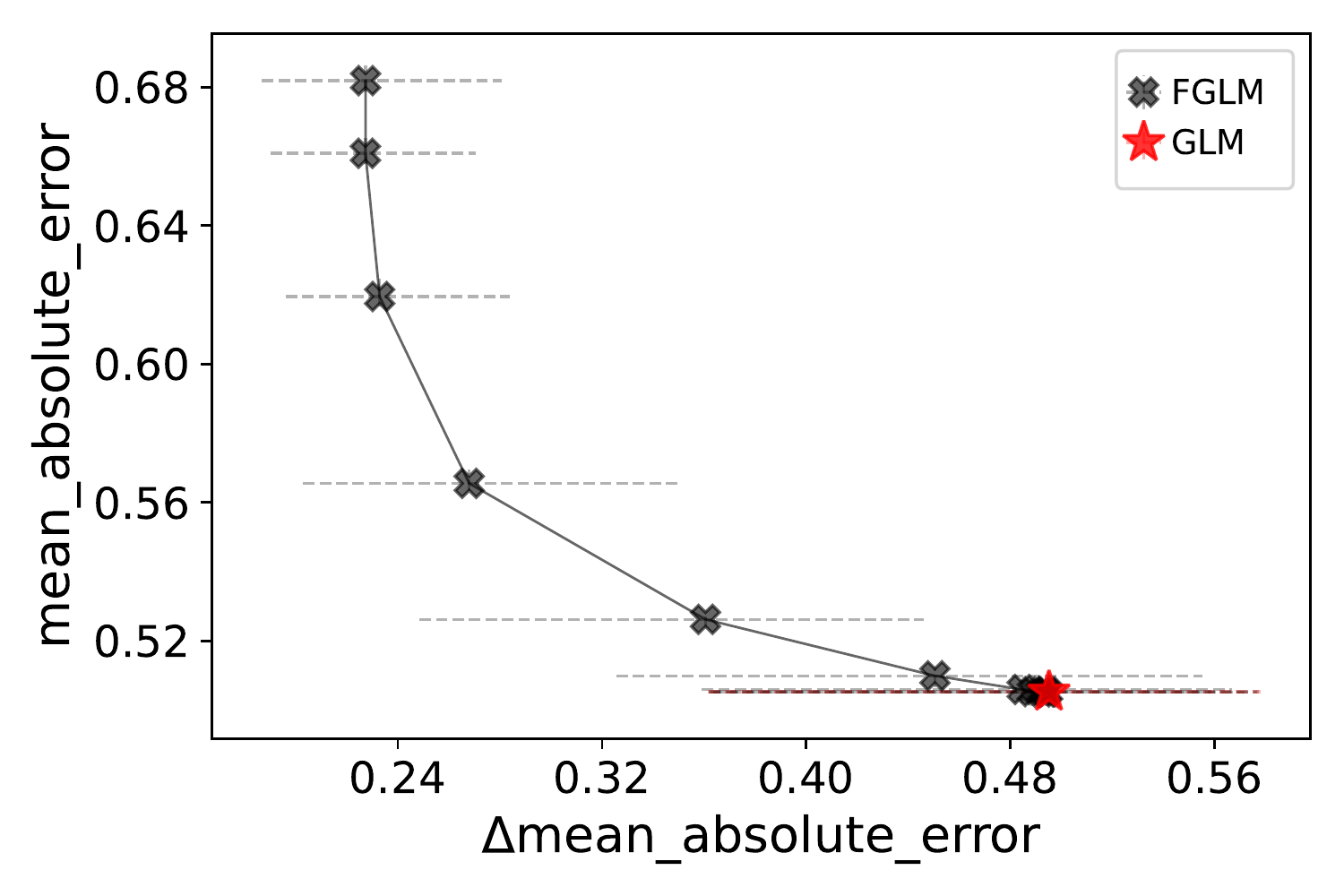}}
    \caption{Experimental results \emph{in some other metrics (AUROC/MAE/misclassification rate)} from 11 real world datasets with binary (a-e) and continuous outcomes (f-i). The $x$-axis is the disparity of each metric. Each subtitle is in the form of {Dataset}--{Outcome Type}--{Sensitive Attribute($K$)}. For both binary and continuous outcomes we use a Generalized Linear Model (GLM, red star \inlinegraphics{markers/GLM.png}), Fair Generalized Linear Model (F-GLM, black X \inlinegraphics{markers/FGLM.png}), Individual Fairness penalty (IF, green pentagon \inlinegraphics{markers/IF.png}), Group Fairness penalty (GF, dark red square \inlinegraphics{markers/GF.png}), and Bounded Group Loss (BGL, orange triangle \inlinegraphics{markers/BGL.png}). Methods for binary outcomes also include the Support Vector Machine (SVM, grey hexagon \inlinegraphics{markers/SVM.png}), Fair Constraints (FC, green diamond \inlinegraphics{markers/FC.png}), Disparate Mistreatment (DM, blue circle \inlinegraphics{markers/DM-HSIC.png}), Squared Difference penalizer (SD, dark blue diamond \inlinegraphics{markers/SD.png}), Fair Empirical Risk Minimization (FERM, plum pentagon \inlinegraphics{markers/GFERM.png}), Statistical Parity (SP, teal triangle \inlinegraphics{markers/SP.png}). Methods for continuous outcomes include the HSIC penalty (HSIC, blue circle \inlinegraphics{markers/DM-HSIC.png}), General Fair Empirical Risk Minimization (GFERM, plum pentagon \inlinegraphics{markers/GFERM.png}). See Table 1 for additional information for each method. Each dot represents mean performance across test sets  for a specific hyperparameter value $\lambda$.}
    \label{fig:results4}
\end{figure}

\bibliographystylesupp{icml2022}
\bibliographysupp{supp.bib}

\end{document}